\pgfplotsset{width=10cm,compat=1.9}
\pgfplotsset{
	standard/.style={%Axis format configuration
		axis x line=middle,
		axis y line=middle,
		enlarge x limits=0.15,
		enlarge y limits=0.15,
		every axis x label/.style={at={(current axis.right of origin)},anchor=north west},
		every axis y label/.style={at={(current axis.above origin)},anchor=north east},
		every axis plot post/.style={mark options={fill=white}}
}}
\newcommand{\wt}{\widetilde}
\newenvironment{proofof}[1]{\noindent{\bf Proof of #1:}}{$\qed$\par}
\theoremstyle{plain}
\newtheorem{thm}{\protect\theoremname}
\theoremstyle{plain}
\newtheorem{claim}[thm]{\protect\claimname}
\theoremstyle{plain}
\theoremstyle{plain}
\newtheorem{lem}[thm]{\protect\lemmaname}
\theoremstyle{plain}
\newtheorem{cor}[thm]{\protect\corollaryname}
\theoremstyle{definition}
\newtheorem{defn}[thm]{\protect\definitionname}
\theoremstyle{definition}
\theoremstyle{definition}
\theoremstyle{plain}
\providecommand{\claimname}{Claim}
\providecommand{\lemmaname}{Lemma}
\providecommand{\propositionname}{Proposition}
\providecommand{\theoremname}{Theorem}
\providecommand{\corollaryname}{Corollary}
\providecommand{\definitionname}{Definition}
\providecommand{\assumptionname}{Assumption}
\providecommand{\remarkname}{Remark}
\global\long\def\RR{\mathbb{R}}
\global\long\def\CC{\mathbb{C}}
\global\long\def\ZZ{\mathbb{Z}}
\global\long\def\R{{\cal R}}
\global\long\def\x{{\mathbf{x}}}
\global\long\def\w{{\mathbf{w}}}
\global\long\def\a{{\mathbf{a}}}
\global\long\def\t{{\mathbf{t}}}
\global\long\def\y{{\mathbf{y}}}
\global\long\def\EE{{\mathbb{E}}}
\global\long\def\w{{\mathbf{w}}}
\global\long\def\z{{\mathbf{z}}}
\global\long\def\sinc#1{\mathrm{sinc}\left(#1\right)}
\newcommand*{\rect}{\mathrm{rect}}
\newcommand{\wh}{\widehat}
\newcommand{\Fc}{\mathcal{F}}
\newcommand{\bs}[1]{\boldsymbol{#1}}
\newcommand{\bv}[1]{\mathbf{#1}}
\global\long\def\veta{{\bs{\eta}}}
\global\long\def\vxi{{\bs{\xi}}}
\global\long\def\vgamma{{\bs{\gamma}}}
\begin{document}

% If your paper is accepted and the title of your paper is very long,
% the style will print as headings an error message. Use the following
% command to supply a shorter title of your paper so that it can be
% used as headings.
%
%\runningtitle{I use this title instead because the last one was very long}

% If your paper is accepted and the number of authors is large, the
% style will print as headings an error message. Use the following
% command to supply a shorter version of the authors names so that
% they can be used as headings (for example, use only the surnames)
%
%\runningauthor{Surname 1, Surname 2, Surname 3, ...., Surname n}

\runningauthor{Michael Kapralov, Navid Nouri, Ilya Razenshteyn, Ameya Velingker, Amir Zandieh}
\twocolumn[
\aistatstitle{Scaling up Kernel Ridge Regression via Locality Sensitive Hashing}
\aistatsauthor{Michael Kapralov \And Navid Nouri \And Ilya Razenshteyn}
\aistatsaddress{EPFL \And EPFL \And Microsoft Research}
\aistatsauthor{Ameya Velingker \And Amir Zandieh}
\aistatsaddress{Google Research \And EPFL}
]

\begin{abstract}
  Random binning features, introduced in the seminal paper of Rahimi and Recht (2007), are an efficient method for approximating a kernel matrix using locality sensitive hashing. Random binning features provide a very simple and efficient way of approximating the Laplace kernel but unfortunately do not apply to many important classes of kernels, notably ones that generate {\em smooth} Gaussian processes, such as the Gaussian kernel and Mat\'{e}rn kernel. In this paper we introduce a simple {\em weighted} version of random binning features, and show that the corresponding kernel function generates Gaussian processes of any desired smoothness. We show that our weighted random binning features provide a spectral approximation to the corresponding kernel matrix, leading to efficient algorithms for kernel ridge regression. Experiments on large scale regression datasets show that our method outperforms the accuracy of random Fourier features method.
\end{abstract}

\section{Introduction} \label{sec:intro}

Kernel methods are a powerful framework for applying non-parametric modeling techniques to a number of problems in statistics and machine learning, such as ridge regression, SVM, PCA, etc.  While kernel methods have been well studied and are capable of achieving excellent empirical results, they often pose scalability challenges as they operate on the \emph{kernel matrix} (Gram matrix) $K$ of the data, whose size scales up quadratically in the number of training instances. Thus, much work has focused on scaling up kernel methods by producing suitable approximations to the kernel or its underlying kernel matrix.

One such approach for scaling up kernel methods was shown by \citet{DBLP:conf/nips/RahimiR07}, who showed how to approximate positive definite shift-invariant kernels using \emph{random binning features}. The idea is to partition an input space into randomly shifted grids and map input points into bins such that the probability that two input points $\x$ and $\y$ are mapped to the same bin is proportional to $k(\x, \y)$. This enables one to get an estimator for $k(\x, \y)$ by counting the number of times $\x$ and $\y$ are binned together.

The above approach can also be viewed in the context of \emph{locality sensitive hashing (LSH)}~\citep{indyk-motwani98, Har-PeledIM12}, an algorithmic technique that hashes elements of an input space into ``buckets'' such that similar input items are hashed into the same buckets with high probability. More specifically, the hash collision probability between two items is desired to be proportional to the similarity index of the items, i.e., collisions should be more likely for more similar items. LSH has found practical uses for a number of problems such as nearest neighbor search, clustering, etc. The random binning features of \citet{DBLP:conf/nips/RahimiR07} can be viewed as an LSH scheme in which the similarity measure is the kernel.

\citet{DBLP:conf/nips/RahimiR07} show that random binning features yield an \emph{unbiased} estimator $\widetilde{k}(\x,\y)$ for $k(\x,\y)$, provided that $k$ satisfies certain conditions. They also establish \emph{point-wise} concentration of $\widetilde{k}$ to $k$, but in many numerical linear algebra applications, point-wise concentration is insufficient. On the other hand, \emph{spectral guarantees} for the kernel matrix $K$, whose $(\x,\y)$-entry is given by $k(\x,\y)$, are a popular sufficient condition that guarantees various statistical and algorithmic implications. One such guarantee is captured by the (regularized) \emph{oblivious subspace embedding (OSE)} property, as stated below.

\begin{restatable}[Oblivious subspace embedding (OSE)]{defn}{osedef}
	\label{OSE-definition}
	Given $\epsilon, \delta, \lambda > 0$, and the positive semi-definite matrix $K \in \RR^{n \times n}$, an \emph{$(\epsilon, \delta, \lambda)$-oblivious subspace embedding (OSE)} for this kernel matrix is a distribution $\mathcal{D}$ over $n \times n$ matrices $\widetilde{K}$ such that with probability at least $1 - \delta$,
	\begin{equation}\label{eq:aspectral}
	(1-\epsilon)(K + \lambda I) \preceq \widetilde{ K} + \lambda I \preceq (1+\epsilon) (K + \lambda I).
	\end{equation}
\end{restatable}

\paragraph{Kernel Ridge Regression (KRR).}
One popular kernel method for which OSE has algorithmic implications is the problem of \emph{kernel ridge regression} (KRR), which we focus on in this work. In KRR, one is given labeled training data $(\x_1, y_1), (\x_2, y_2), \dots, (\x_n, y_n) \in \RR^d \times \RR$ and a \emph{regularization parameter} $\lambda > 0$, and the response of an input vector $\x$ is estimated as follows:
\[
 \overline{\eta} (\x) = \sum_{j=1}^n k(\x_j, \x) \alpha_j,
\]
where ${\boldsymbol\alpha} = (\alpha_1 \cdots \alpha_n)^T$ is the solution of the equation $(K + \lambda I_n){\boldsymbol\alpha} = \mathbf{y}$, where $\mathbf{y} = (y_1 \cdots y_n)^T$ and $I_n$ is the $n\times n$ identity matrix. Solving this matrix equation generally requires $\Theta(n^3)$ time and $\Theta(n^2)$ memory, which is impractical for large datasets. Thus, the design of scalable methods for KRR and other kernel methods has been the focus of much recent research~\citep{bach2013sharp, CdV07, alaoui2015fast, ZhangDW15, musco2017recursive, AvronCW17, avron2017random}.

The OSE property for $\widetilde{K}$ is useful because it allows $\widetilde{K} + \lambda I_n$ to be used as an effective preconditioner for the solution of the aforementioned matrix equation, while enabling one to bound the excess risk~\citep{avron2017random}. Thus, the approach we take is to find a new class of estimators that satisfies the OSE property while enabling fast matrix-vector computation.

\paragraph{WLSH estimators.}
Our main contribution is to formulate a new class of estimators, which we term \emph{Weighted LSH (WLSH) estimators}, that generalize the random binning features of \citet{DBLP:conf/nips/RahimiR07} and applies to a wider range of kernels. More specifically, given a probability density function $p(\cdot)$ with non-negative support over $\RR$ and a \emph{bucket-shaping function} $f(\cdot)$ (see discussion below), we can define a kernel with kernel matrix $K \in \RR^{n\times n}$ as well as a corresponding WLSH estimator.

Our first main theorem shows that appropriately many independent instances $\widetilde{K}^1, \widetilde{K}^2, \dots, \widetilde{K}^m$ of the WLSH estimator yield an OSE $\widetilde{K}$ for $K$:
\begin{equation}
\widetilde{K} = \frac{1}{m} \sum_{s=1}^m \widetilde{K}^s \label{eq:lsh-ose}
\end{equation}

\begin{thm}[Main Theorem, \emph{informal version of Theorem~\ref{lem:OSE}}]\label{thrm2-informal}
 Let $\x^1, \x^2, \dots, \x^n \in \RR^d$ be a collection of points and $\epsilon,\lambda > 0$. For any $p(\cdot)$ and any $f$ supported on $[-1/2,1/2]$ with $\|f\|_2 = 1$, the distribution $\widetilde{K}$ given by \eqref{eq:lsh-ose} is an $(\epsilon, 1/\mathrm{poly}(n), \lambda)$-OSE for $K$, provided that the number of independent instances of the WLSH estimator is $m = \Omega\left( \frac{\|f^{\otimes d}\|_\infty^2}{\epsilon^2} \cdot \frac{n}{\lambda} \cdot \log n\right)$.
\end{thm}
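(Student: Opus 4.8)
The plan is to reduce the two-sided operator inequality in \eqref{eq:aspectral} to a single spectral-norm concentration bound, and then to invoke a matrix concentration inequality. Writing $E = \widetilde{K} - K$, condition \eqref{eq:aspectral} is equivalent to $-\epsilon(K+\lambda I) \preceq E \preceq \epsilon(K+\lambda I)$; conjugating by $(K+\lambda I)^{-1/2}$ turns this into $\|M\| \le \epsilon$, where $M := (K+\lambda I)^{-1/2}\,E\,(K+\lambda I)^{-1/2}$ and $\|\cdot\|$ denotes the spectral norm. So it suffices to prove $\Pr[\|M\| > \epsilon] \le 1/\poly{n}$.

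First I would exploit the averaging in \eqref{eq:lsh-ose}. Setting $M^s := (K+\lambda I)^{-1/2}(\widetilde{K}^s - K)(K+\lambda I)^{-1/2}$ gives $M = \frac{1}{m}\sum_{s=1}^m M^s$, a normalized sum of $m$ i.i.d.\ symmetric random matrices. Since the WLSH estimator is unbiased, $\EE[\widetilde{K}^s] = K$, so $\EE[M^s] = 0$ and the summands are centered. This is precisely the setting of the matrix Bernstein inequality, which bounds $\Pr[\|M\| > \epsilon]$ by $2n\exp\!\big(-\tfrac{m\epsilon^2/2}{\,v + L\epsilon/3\,}\big)$, where $L$ is an (almost sure) bound on $\|M^s\|$ and $v$ bounds the variance proxy $\|\EE[(M^s)^2]\|$. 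To drive the failure probability down to $1/\poly{n}$ it then suffices to take $m = \Omega\!\big(\tfrac{(v + L\epsilon)\log n}{\epsilon^2}\big)$.

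The technical heart---and the step I expect to be the main obstacle---is bounding $L$ and $v$ in terms of $\|f^{\otimes d}\|_\infty$. This requires opening up the definition of the WLSH estimator: each instance $\widetilde{K}^s$ is a positive semidefinite, low-rank matrix assembled from the random shifted grid together with the bucket-shaping weights coming from $f$, so the magnitude of a single hashed feature is controlled by the supremum of the $d$-fold tensor $f^{\otimes d}$. I expect the per-term estimate to take the shape $L = O\!\big(\|f^{\otimes d}\|_\infty^2/\lambda\big)$, where the factor $\|f^{\otimes d}\|_\infty^2$ comes from the weighted feature magnitude and the $1/\lambda$ from $\|(K+\lambda I)^{-1}\| \le 1/\lambda$ in the rescaling.

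For the variance term $v$ the decisive point is to avoid the trivial factor of $n$ and instead surface the effective statistical dimension $d_{\mathrm{eff}} = \Trace{K(K+\lambda I)^{-1}}$. Expanding $\EE[(M^s)^2]$ and pushing the expectation through the feature structure, I would obtain a bound of the form $v = O\!\big(\|f^{\otimes d}\|_\infty^2\, d_{\mathrm{eff}}\big)$, the trace arising naturally once the sum over buckets is taken in expectation. Since the kernel has unit diagonal entries we have $\Trace{K} = n$, whence $d_{\mathrm{eff}} \le \Trace{K}/\lambda \le n/\lambda$; substituting $L$ and $v$ into the matrix-Bernstein threshold (the $v$ term dominating $L\epsilon$ because $\epsilon \le 1 \le n$) yields exactly $m = \Omega\!\big(\tfrac{\|f^{\otimes d}\|_\infty^2}{\epsilon^2}\cdot\tfrac{n}{\lambda}\cdot\log n\big)$. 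The delicate bookkeeping is in verifying that the $f$-weighting contributes precisely $\|f^{\otimes d}\|_\infty^2$---and no stray factors---in both $L$ and $v$, which is where the normalization $\|f\|_2 = 1$ and the support condition on $[-1/2,1/2]$ enter.
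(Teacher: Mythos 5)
Your reduction (conjugating by $(K+\lambda I)^{-1/2}$ and applying matrix concentration to the i.i.d.\ sum) is essentially the paper's frame, but the two parameter bounds at the ``technical heart'' of your argument are wrong, and the variance claim is provably false. For $L$: a single WLSH instance can hash \emph{all} $n$ points into one bucket, in which case $\widetilde{K}^s$ contains a rank-one block $\phi\phi^\top$ with entries $|\phi_i|$ as large as $\|f^{\otimes d}\|_\infty$, so $\|\widetilde{K}^s\|_{op}$ can be $\Theta\left(n\|f^{\otimes d}\|_\infty^2\right)$ --- this is exactly why Claim~\ref{claim:spectral-nomr-estimator} carries the factor $n$ --- and hence $\|M^s\|_{op}$ can reach $\Theta\left(n\|f^{\otimes d}\|_\infty^2/\lambda\right)$, not $O\left(\|f^{\otimes d}\|_\infty^2/\lambda\right)$. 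For $v$: the Bernstein variance proxy is the \emph{operator norm} of $\EE[(M^s)^2]$, not a trace, so no $d_{\mathrm{eff}}=\Trace{K(K+\lambda I)^{-1}}$ ``arises naturally.'' Worse, the paper's own lower-bound instance (Theorem~\ref{thm:lowerbound}: two clusters of $n/2$ coincident points at $\pm(\lambda/n)\e_1$) refutes your claim: there $d_{\mathrm{eff}}=O(1)$, yet along the whitened direction of $\beta=(-1,\dots,-1,1,\dots,1)^\top$ the quadratic form of $M^s$ equals $\Theta(n/\lambda)$ with probability $\Theta(\lambda/n)$, so $\|\EE[(M^s)^2]\|_{op}=\Omega(n/\lambda)$. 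Indeed, if your bounds $L=O(1/\lambda)$ and $v=O(d_{\mathrm{eff}})$ were correct, Bernstein would give $m=O\left((d_{\mathrm{eff}}+\epsilon/\lambda)\epsilon^{-2}\log n\right)$ on that instance, contradicting the $\Omega\left(\epsilon^{-2}(n/\lambda)\log n\right)$ necessity proved in Theorem~\ref{thm:lowerbound}. Your final answer matches the theorem only because you immediately relaxed $d_{\mathrm{eff}}\le n/\lambda$; the stronger intermediate claims you would need to verify are unprovable.

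The fix is easy in two ways. Within your centered-Bernstein route: use the correct almost-sure bound $L=O\left(1+n\|f^{\otimes d}\|_\infty^2/\lambda\right)$, and note that for $A^s:=(K+\lambda I)^{-1/2}\widetilde{K}^s(K+\lambda I)^{-1/2}\succeq 0$ with $\|A^s\|_{op}\le\alpha:=n\|f^{\otimes d}\|_\infty^2/\lambda$ and $\EE[A^s]\preceq I$, one has $\EE[(M^s)^2]\preceq \EE[(A^s)^2]\preceq \alpha\,\EE[A^s]\preceq \alpha I$, so $v\le\alpha$ as well; substituting recovers exactly $m=\Omega\left(\frac{\|f^{\otimes d}\|_\infty^2}{\epsilon^2}\cdot\frac{n}{\lambda}\cdot\log n\right)$. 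Alternatively --- and this is what the paper does --- skip centering entirely: since each summand $\frac{1}{m}Z^\top U^\top(\widetilde{K}^s+\lambda I)UZ$ is PSD (Claim~\ref{claim:spectral-nomr-estimator}) with expectation summing to $I$ (unbiasedness) and a.s.\ norm at most $\frac{1}{m}\left(\frac{n}{\lambda}\|f^{\otimes d}\|_\infty^2+1\right)$, the matrix Chernoff bound for sums of independent PSD matrices (Lemma~\ref{lem-matrix-chenoff}) applies directly, requiring no variance computation at all.
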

Our WLSH estimator reduces to standard random binning features when the \emph{bucket-shaping function} $f$ is chosen to be a \emph{rectangle function} $\rect$ supported on $[-1/2, 1/2]$. However, the generalization of allowing different bucket-shaping functions $f$ enables the estimator to be applicable to a wider range of kernels, which we discuss below.

Standard random binning features work only for certain classes of shift-invariant kernels $k(\cdot)$ that satisfy a convex decomposition property~\citep{DBLP:conf/nips/RahimiR07}. The Laplace kernel, given by $k(\x,\y) = k(\x-\y) = \exp(-|\x-\y|)$, is an important example of such a kernel. However, note that the Laplace kernel does not satisfy \emph{smoothness}, which is often a desired property. Indeed, the limitation of random binning features to non-smooth kernels is inherent, as any suitable shift-invariant kernel $k(\cdot)$ must have the property that $1-k(\cdot)$ satisfies the triangle inequality~\citep{charikar2002similarity}. This precludes the possibility of using random binning features to approximate any monotonically decreasing smooth kernel that is twice differentiable.

The non-smoothness limitation arises from the fact that the bins in random binning are discontinuous at the edges, as the shape of the corresponding bins is a rectangle. Our approach circumvents this limitation by generalizing random binning features to an estimator that allows ``soft'' buckets with smoother edges (specified by the bucket-shaping function $f$ in Theorem~\ref{lem:OSE}). This allows us to construct new families of smooth kernels that can be estimated using our WLSH estimators but are not amenable to standard random binning features.

We complement Theorem~\ref{thrm2-informal} with a lower bound showing that the number of instances of the WLSH Estimator in Theorem~\ref{thrm2-informal} is essentially tight:
\begin{thm}[Main Theorem, \emph{informal version of Theorem~\ref{thm:lowerbound}}]
  Let $p(w) = we^{-w}$ be the PDF for the Gamma distribution, and let $f(\cdot) = \rect(\cdot)$ be the bucket-shaping function. For any $\lambda > 0$, $d\geq 1$, and $n\geq 8\lambda$, there exists a dataset $\x^1, \x^2, \dots, \x^n \in \RR^d$ such that in order for $\widetilde{K}$ given by \eqref{eq:lsh-ose} to be an $(\epsilon, 1/n, \lambda)$-OSE for $K$ with $\epsilon \leq 1/6$, one requires $m = \Omega\left(\frac{1}{\epsilon^2}\cdot \frac{n}{\lambda}\cdot \log n\right)$ independent instances of the WLSH estimator.
\end{thm}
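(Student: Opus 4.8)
The plan is to specialize the construction to the concrete kernel it generates and then reduce the spectral OSE guarantee to a family of scalar estimation problems. First I would compute the kernel produced by $p(w)=we^{-w}$ together with $f=\rect$: in one dimension the probability that two points at distance $t$ share a bucket of a randomly shifted grid of random width $w\sim p$ is $\int_t^\infty\left(1-t/w\right)we^{-w}\,dw=\int_t^\infty(w-t)e^{-w}\,dw=e^{-t}$, so $k$ is exactly the Laplace kernel $k(\x,\y)=e^{-|\x-\y|}$ and a single instance $\widetilde K^s$ is the $0/1$ matrix recording which points collide. It suffices to build the hard instance on a line and pad the remaining $d-1$ coordinates with zeros, so I would work with $d=1$ throughout.

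For the dataset I would take a single tight cluster of $n$ (essentially) equally spaced collinear points with spacing $\Delta$ calibrated as a function of $\lambda$ and $n$, so that two properties hold: the Laplace matrix $K$ has a bulk of $\Theta(n)$ eigenvalues of order $\lambda$, and in one grid instance the cluster is cut at any fixed gap only with small probability $q=\Theta(\lambda/n)$. The purpose of this calibration is that the hard test directions $v$ — eigenvectors of $K$ with eigenvalue $\Theta(\lambda)$, which are \emph{global}, step-like functions across the cluster — each have regularized scale $v^\top(K+\lambda I)v=\Theta(\lambda)$, yet the estimator receives information about such a direction only when the corresponding rare cut occurs. Whitening by $(K+\lambda I)^{-1/2}$, condition \eqref{eq:aspectral} is equivalent to $\|(K+\lambda I)^{-1/2}(\widetilde K-K)(K+\lambda I)^{-1/2}\|\le\epsilon$, and I would lower bound this spectral norm by exhibiting a hard direction along which the averaged estimator $\widetilde K=\tfrac1m\sum_s\widetilde K^s$ fails to concentrate.

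Along such a direction $v^\top(\widetilde K-K)v$ is, up to the $\Theta(\lambda)$ scale, the deviation of an empirical frequency $\tfrac1m\sum_s X_s^{(v)}$ of a rare per-instance event from its mean, where each $X_s^{(v)}$ is Bernoulli with parameter $\Theta(\lambda/n)$. Over $m$ instances this count is $\mathrm{Bin}(m,\Theta(\lambda/n))$ with mean $\Theta(m\lambda/n)$, so a relative-$\epsilon$ estimate holding with confidence $1-1/n^2$ — needed so that a union over the $\Theta(n)$ hard directions succeeds with probability $1-1/n$ — requires, by a binomial \emph{anti}-concentration bound, that this mean exceed $\Omega(\epsilon^{-2}\log n)$. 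Taking $m\lambda/n=o(\epsilon^{-2}\log n)$ then forces some hard direction to deviate by more than $\epsilon$ with probability exceeding $1/n$, violating \eqref{eq:aspectral}; this yields $m=\Omega\!\left(\tfrac{n}{\lambda}\cdot\tfrac{\log n}{\epsilon^2}\right)$ and matches the upper bound, since $\|\rect^{\otimes d}\|_\infty=1$.

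The main obstacle is obtaining the correct \emph{product} $\tfrac n\lambda\cdot\log n$ rather than either factor in isolation. A purely local argument — treating the $\binom n2$ pairs, or $\Theta(n)$ independent two-point gadgets, separately — only produces $\tfrac1{\lambda\epsilon^2}$ per test and hence at best $\tfrac{\log n}{\lambda\epsilon^2}$ after a union bound, with no factor of $n$; conversely a second-moment (Frobenius) bound on the whitened error recovers the $\tfrac n\lambda$ in expectation but loses the $\log n$ and is diluted by the rank when passed to the spectral norm. Extracting both at once is exactly what the calibration above is designed for: the hard directions must be global, so that their regularized scale is $\Theta(\lambda)$ (responsible for the $1/\lambda$), while being activated only by rare, nearly independent cut events (responsible for the extra $n$), with the $\delta=1/n$ confidence requirement injecting the $\log n$. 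The technical crux is thus twofold: verifying that the Laplace matrix of the calibrated cluster really has $\Theta(n)$ eigenvalues of order $\lambda$ whose step-function eigenvectors are sufficiently spread out and weakly dependent, and establishing a binomial anti-concentration estimate sharp enough to convert under-sampling of these directions into a genuine $\Omega(1/n)$-probability spectral failure.
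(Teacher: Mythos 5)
Your reduction to the Laplace kernel is correct, and your core mechanism (a rare cut event per instance plus binomial anti-concentration at confidence $1/n$) is exactly the paper's machinery --- the paper uses Slud's inequality for the anti-concentration step. But your dataset construction has a genuine gap. First, your two calibration requirements are mutually inconsistent (say for $\lambda = O(1)$): with cell widths $w \sim we^{-w}$, a gap of width $\Delta$ is cut with probability $\EE\left[\min(1,\Delta/w)\right] \approx \Delta$ since $\EE[1/w]=1$, so per-gap cut probability $\Theta(\lambda/n)$ forces $\Delta = \Theta(\lambda/n)$ and total extent $\Theta(\lambda)$; the Toeplitz (Kac--Murdock--Szeg\"{o}) matrix $e^{-\Delta|i-j|}$ then has symbol $\approx \frac{2\Delta}{\Delta^2+\theta^2}$, so its bulk eigenvalues are $\Theta(\lambda/n)$ and only $O(\sqrt{n})$ of them reach $\Omega(\lambda)$ --- you cannot have $\Theta(n)$ eigenvalues of order $\lambda$ simultaneously. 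Second and more fundamentally, spreading the points dilutes the signal so that no direction produces the rare-big Bernoulli your argument needs. For any unit $v$, $v^\top \widetilde{K}^s v = \sum_{\mathbf{j}} \bigl(\sum_{i \in \text{bucket } \mathbf{j}} v_i\bigr)^2$, and a cut at gap $j'$ contributes $\bigl(\sum_{i\le j'} v_i\bigr)^2 + \bigl(\sum_{i> j'} v_i\bigr)^2$. For the step vector these contributions are $\Theta(n)$ at a constant fraction of gaps, so $v^\top K v = \Theta(\lambda n)$ (not $O(\lambda)$), the regularized scale is $\Theta(\lambda n)$, and the fluctuation calculus gives only $m = \Omega(\epsilon^{-2}\lambda^{-1}\log n)$; conversely, any direction with $v^\top K v = O(\lambda)$ (the oscillatory modes) has per-instance values $O(1)$ spread over a constant fraction of gaps --- common and moderate rather than rare and big --- again yielding only $\epsilon^{-2}\lambda^{-1}\log n$. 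Your union over $\Theta(n)$ directions cannot restore the missing factor of $n$: for a lower bound, even full independence across directions multiplies the failure probability by at most $n$, which only shifts the constant inside the $\log n$, not the prefactor $n/\lambda$. (Indeed, no union is needed at all: a single direction failing with probability exceeding $1/n$ already refutes the $(\epsilon,1/n,\lambda)$-OSE, which is also why your framing of the $\log n$ as coming from a union bound at per-direction confidence $1-1/n^2$ is the upper-bound logic, not the lower-bound logic.)

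The paper avoids all of this by collapsing the cluster to two locations with multiplicity: $\x^1 = \cdots = \x^{n/2} = (-\lambda/n, 0, \dots, 0)^\top$ and $\x^{n/2+1} = \cdots = \x^n = (\lambda/n, 0, \dots, 0)^\top$, tested against the single step vector $\beta$ with $\beta_i = \pm 1$. Then the only interior gap has width $2\lambda/n$, so $\beta^\top \widetilde{K}^s \beta$ is exactly two-valued --- $n^2/2$ with probability $p \le 2\lambda/n$ and $0$ otherwise --- its mean $n^2(1-e^{-2\lambda/n})/2 = \Theta(\lambda n)$ matches the regularized scale $\beta^\top K \beta + \lambda\|\beta\|_2^2 = \Theta(\lambda n)$, and one application of Slud's inequality shows the failure probability is at least $\frac{1}{4}e^{-36\epsilon^2 (\lambda/n) m}$, which exceeds $1/n$ unless $m = \Omega\bigl(\epsilon^{-2}(n/\lambda)\log n\bigr)$. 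So your high-level plan is right, but it must be instantiated with co-located points carrying all the mass at a single rare gap; repairing your equally spaced construction essentially forces you back to the paper's two-cluster design.
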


Furthermore, our WLSH estimator allows $\widetilde{K}$ to be stored with little memory while supporting fast matrix-vector multiplication, which allows it to be suitable for KRR. In this direction, we conduct a number of experiments on various large-scale regression datasets that show the accuracy and speed of approximate KRR using our WLSH kernels and estimator compared to exact KRR and other popular approximation methods. The results show that our WLSH-based method produces better accuracy than the popular method of random Fourier features on large datasets while still offering favorable running times. We additionally present experiments showing the performance of our WLSH-based kernel family for learning Gaussian processes through KRR.

\subsection{Related work}
Another line of work for producing low-rank approximations to kernel matrices is the Nystr\"{o}m method. A number of works have sought to improve the method using leverage score sampling, risk inflation bounds, etc.~\citep{bach2013sharp, alaoui2015fast, rudi2015less, musco2017recursive}.
Although there has been much work on kernel approximation sketches that achieve the optimal target dimension, e.g., Nystr\"{o}m sampling~\citep{musco2017recursive}, all such methods that are known are \emph{data-dependent}, barring any strong assumptions on the kernel matrix. \emph{Data-oblivious} approaches, on the other hand, have the advantage of being implementable in distributed settings. WLSH estimators (and random binning features), being OSEs, fall into this paradigm.

There are a number of works on devising OSEs. Most of these are related to the technique of \emph{Random Fourier features}, which was also introduced by \citet{DBLP:conf/nips/RahimiR07} and provides a popular data-oblivious approach for kernel approximation. \citet{avron2017random} showed that a modification of Random Fourier features yields provably better target dimension. \citet{AhleKKPVWZ20} improved upon this result and were able to embed the Gaussian kernel in Euclidean space with a target dimension that is not exponential in the dimension of the dataset. However, some Gaussian processes that arise in practice are less smooth than those arising from Gaussian kernels, and the result of \citet{AhleKKPVWZ20} does not extend to the Laplace kernel~\footnote{One can trivially use the result of \citet{AhleKKPVWZ20} for Laplace kernels by using a trivial embedding of $\ell_1$ norms into $\ell_2$, but this results in a blowup in dimension that is impractical} or Mat\'{e}rn kernels.

\section{Preliminaries} \label{sec:prelim}
In this section we introduce notations and present basic definitions and claims.

	The \emph{Fourier transform} of a continuous function $g : \RR^d \to \CC$ in
	$L_1(\RR^n)$ is defined to be the function $\mathcal{F}g: \RR^d \to \CC$ given
	by
	$(\mathcal{F}g)(\bs{\xi}) =  \int_{\RR^d} g(\bv{t}) e^{-2\pi i \bv{t}^\top \bs{\xi}}\,d\bv{t}$.
	We also sometimes use the notation $\wh{g}$ for the Fourier transform of $g$. We often informally refer to $g$ as representing the function in \emph{time domain} and $\hat{g}$ as representing the function in \emph{frequency domain}.
	The original function $g$ can also be obtained from $\hat{g}$ by the
	\emph{inverse Fourier transform}: $g(\bv{t}) = \int_{\RR^d} \wh{g}(\bs{\xi}) e^{2\pi i \bs{\xi}^\top \bv{t}}\,d\bs{\xi}$. The \emph{convolution} of two functions $g:\RR^d\to\CC$ and $h:\RR^d\to\CC$ is
	defined to be the function $(h*g):\RR^d\to\CC$ given  by $(h*g)(\veta) = \int_{\RR^d} h(\bv{t})g(\veta-\bv{t})\,d\bv{t}$ for $\veta\in\RR^d$. We use $\delta_d$ to denote the d-dimensional \emph{Dirac delta function}.
%The Dirac delta function satisfies the following useful property for any continuous function $f: \RR^d \rightarrow \RR$:
%$\int_{\RR^d} f(\x) \delta_d(\x-\a)\,d\x = f(\a)$. Thus, it is not hard to see that
%$(f * \delta_d(\cdot - \a))(\x) = f(\x-\a)$.

We now define the \emph{rectangle function} (boxcar).
\begin{defn}[Rectangle Function] \label{def:rect}
	For any $a > 0$ we define the 1-dimensional \emph{rectangle function} $\rect_a:\RR\to\CC$ as
	\[
	\rect_a(x) = \begin{cases} 0\qquad &\text{if $|x| > a/2$}\\  1\qquad &\text{if $|x| \le a/2$} \end{cases}.
	\]
	If $a=1$, we omit the subscript and just write $\rect$.
\end{defn}

	For any vector $\w=(w_1,w_2,\dots, w_d)^\top$ we use the notation $[0,\w]$ to denote the set $[0,w_1] \times [0,w_2] \times \dots\times [0,w_d]$. Moreover, if ${\bf j} = (j_1,j_2, \dots, j_d)^\top$, then we use the notation ${\bf j} \w = (j_1w_1, j_2w_2, \dots, j_dw_d)^\top$ and ${\bf j} /\w = (j_1/w_1, j_2/w_2, \dots, j_d/w_d)^\top$.
	Also, for any function $f : \RR \rightarrow \RR$ the notation $f^{\otimes d}$ denotes the function $f^{\otimes d} : \RR^d \rightarrow \RR$, defined as $f^{\otimes d}(\x) = \prod_{l=1}^d f(x_l)$ for every $\x \in \RR^d$.

\section{Weighted Locality Sensitive Hashing (WLSH) estimator} \label{sec:lsh-estimator}
In this section we first provide background on random binning features and Locality Sensitive Hashing and then define our WLSH estimator in Section \ref{sec:lsh-exp} and prove its smoothness properties in Section \ref{sec:smoothness}.
	\emph{Random binning features} were introduced by \citet{DBLP:conf/nips/RahimiR07} as an estimator for a certain class of kernel functions such as the Laplace kernel. The main building block of this estimator is a \emph{Locality Sensitive Hashing (LSH)} family, defined as follows: 
\begin{defn}[Locality Sensitive Hash Family]\label{lsh-hashfunction-def}
	For any positive integer $d$, we define the Locality Sensitive Hash (LSH) family $\mathcal{H}$ as the collection of hash functions,
	$\mathcal{H}:= \left\{h_{\w,\z}(\cdot) : \w \in \RR_{+}^d, \z \in [0,\w] \right\}$, where the LSH function $h_{\w,{\z}} : \RR^d \rightarrow \ZZ^d$ is given by,
	\begin{equation}\label{eq:lsh-hash}
	[h_{\w,{\z}}(\x)]_l = \text{round}\left(\frac{x_l - {z}_l}{w_l} \right),
	\end{equation}
	for every $l \in [d]$ and $\x \in \RR^d$.
	The parameters of the LSH functions in this family are distributed as follows: $\w=(w_1,w_2, \dots,w_d)^\top$ is a random vector with iid entries $w_1,w_2,\dots, w_d \sim p(w)$ for some probability distribution $p(\cdot)$ with non-negative support and $\z$ is a uniform random vector in $[0,\w]$.
\end{defn}

Random binning features are given by the following estimator:
\begin{equation}\label{rbf-estimator}
\widetilde{k}(\x,\y) = \begin{cases}
1 &\text{ if } h_{\w,{\z}}(\x) = h_{\w,{\z}}(\y)\\
0 &\text{ otherwise}
\end{cases},
\end{equation}
where $h_{\w,{\z}}(\x) \sim \mathcal{H}$ is an LSH function.
Note that the expectation of this estimator is equal to the collision probability of the LSH function $h_{\w,\z}$, i.e., $\EE\left[\widetilde{k}(\x,\y)\right]=\Pr_{h_{\w,{\z} }\sim \mathcal{H}}[h_{\w,{\z}}(\x)=h_{\w,{\z}}(\y)]$.
It is shown in \citet{DBLP:conf/nips/RahimiR07} that if $\mathcal{H}$ is the LSH family given in Definition \ref{lsh-hashfunction-def} with $p(w) = w e^{-w}$ (Gamma distribution), then the collision probability of two points $\x,\y$ is $\EE_{h_{\w,{\z} }\sim \mathcal{H}}\left[\widetilde{k}(\x,\y)\right]=e^{-\|\x-\y\|_1}$, which is the Laplace kernel.
The Laplace kernel is non-smooth due to the discontinuity of its derivative at the origin. There is a great deal of interest in using smooth kernels in many machine learning applications \citep{srinivas2009gaussian}.
By changing the distribution over the LSH family $\mathcal{H}$ via varying the PDF $p(w)$, one can obtain the random binning feature estimator for some class of kernels. One might hope to find a distribution over $\mathcal{H}$ such that $\EE_{h_{\w,{\z}}(\cdot) \sim \mathcal{H}}[\widetilde{k}(\x,\y)]$ gives a smooth kernel such as the Squared exponential kernel or Mat\'{e}rn kernel.
But it follows from \citet{charikar2002similarity} that the random binning feature is only able to approximate kernel functions $k(\cdot)$ such that $1-k(\x-\y)$ satisfies the triangle inequality. This requirement is very restrictive and leaves the random binning features inapplicable to the most popular classes of smooth kernels including the Squared exponential kernel and Mat\'{e}rn family. In fact, any smooth kernel which is monotonically decreasing and is at least twice differentiable cannot be approximated using random binning features.

The random binning features estimator is an estimator whose output is either zero or one. We generalize this in Section \ref{sec:lsh-exp} by allowing the estimator to assume a range of values and show that this estimator, unlike the random binning features estimator, is able to approximate a rich family of smooth kernels.
\subsection{WLSH kernel family} \label{sec:lsh-exp}
We now define the \emph{Weighted LSH (WLSH) Estimator}.
\begin{defn}[WLSH Estimator]
	\label{def:-lsh-estimator} 
	Let $f: \RR \rightarrow \RR$ be some even function with support $[-1/2, 1/2]$ and $\|f\|_2=1$ and let $p(\cdot)$ be some PDF with non-negative support. Also let $\mathcal{H}$ be the LSH family as in Definition~\ref{lsh-hashfunction-def}. For any $\x,\y \in \RR^d$, the \emph{Weighted LSH (WLSH) estimator} $\widetilde{k}_{f,p}$ is defined as:
	\begin{equation}\label{def-lshestimator-eq}
	\widetilde{k}_{f,p}(\x,\y) = \begin{cases}
	A &\text{ if } h_{\w,{\z}}(\x) = h_{\w,{\z}}(\y)\\
	0 &\text{ otherwise}
	\end{cases},
	\end{equation}
	where $A = f^{\otimes d} ( h_{\w,{\z}}(\x)  + \frac{\z - \x}{\w} ) \cdot f^{\otimes d} ( h_{\w,{\z}}(\y)  + \frac{\z - \y}{\w} )$, and $h_{\w,\z} \sim \mathcal{H}$. 
\end{defn}

For ease of notation, we often drop the subscripts and just write $\widetilde{k}(\cdot)$ to denote the WLSH.
We show that the expectation of the WLSH estimator is a valid shift-invariant kernel.
The expectation of the estimator is given by the following claim,

\begin{claim} \label{claim:expectation-estimator-fourier} For any PDF $p(\cdot)$ with non-negative support, any even function $f: \RR \rightarrow \RR$ with support $[-1/2, 1/2]$ and $\|f\|_2=1$, and any $\x,\y \in \RR^d$, the expectation of the WLSH kernel $\widetilde{k}(\x,\y)$ over the random choice of LSH function $h_{\w,\z} \sim \mathcal{H}$ is given by
\begin{align*}
&\EE_{h_{\w,\z} \sim \mathcal{H} } \left[\widetilde{k}(\x,\y)\right] \\
&=  \int_{\RR^d} e^{2\pi i (\x - \y)^\top \vxi} \prod_{l=1}^d  \EE_{w_l \sim p(w)}\left[w_l \cdot \left|\wh{f}({w_l \xi_l})\right|^2\right] d\vxi.
\end{align*}	
Equivalently, it can be expressed as
\begin{multline*}
\EE_{h_{\w,\z} \sim \mathcal{H} } \left[\tilde{k}(\x,\y)\right]\\ =  \prod_{l=1}^d  \EE_{w_l \sim p(w)}\left[ (f * f) \left(\frac{x_l-y_l}{w_l}\right) \right].
\end{multline*}
\end{claim}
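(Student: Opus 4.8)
The plan is to exploit the tensor-product structure of both the estimator and the underlying randomness, reducing the whole computation to a one-dimensional integral, and then to establish the two stated expressions in sequence: first the convolution form and then the Fourier form. The first step is to observe that the estimator factorizes across coordinates. Since $[h_{\w,\z}(\x)]_l$ depends only on the pair $(w_l, z_l)$, the collision event $h_{\w,\z}(\x) = h_{\w,\z}(\y)$ is the intersection of the per-coordinate events $[h_{\w,\z}(\x)]_l = [h_{\w,\z}(\y)]_l$, and the weight $A$ is the product over $l$ of the one-dimensional factors coming from $f^{\otimes d}$. Because $(w_1,z_1),\dots,(w_d,z_d)$ are mutually independent (the $w_l$ are iid $\sim p$ and each $z_l$ is uniform on $[0,w_l]$), the expectation splits as $\EE[\widetilde k(\x,\y)] = \prod_{l=1}^d \EE_{w_l,z_l}[\cdots]$, each factor being the one-dimensional estimator evaluated at $(x_l,y_l)$. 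It therefore suffices to prove, for a single coordinate, that $\EE_{w,z}[\cdots] = \EE_{w\sim p}[(f*f)((x-y)/w)]$.

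For one coordinate I would condition on $w$ and average over $z$ uniform on $[0,w]$. The key simplification is that $h_{w,z}(x) + (z-x)/w = \mathrm{round}((x-z)/w) - (x-z)/w$ equals the negative of the centered fractional part of $(x-z)/w$, so by evenness of $f$ the weight depends only on these fractional parts. Substituting $s = (x-z)/w$ converts $\frac1w\int_0^w dz$ into $\int ds$ over an interval of length one, and the resulting integrand—assembled from centered fractional parts, the rounding operation, and the collision indicator—is periodic in $s$ with period one, so I may integrate over $s\in[-1/2,1/2)$. There $\mathrm{round}(s)=0$; the collision indicator then forces $\mathrm{round}(s+(y-x)/w)=0$, and combined with the support of $f$ this collapses the integrand to $f(s)\,f(s+(y-x)/w)$. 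Hence the conditional expectation is $\int_\RR f(s)\,f(s+(y-x)/w)\,ds$, which by evenness of $f$ equals $(f*f)((y-x)/w) = (f*f)((x-y)/w)$; averaging over $w$ gives the convolution form. This is the step I expect to be the main obstacle: correctly tracking the centered fractional parts and verifying that the collision indicator together with the compact support of $f$ leaves exactly the single overlap term, with no spurious contributions from neighboring bins or boundary cases (e.g.\ when $(y-x)/w\ge 1$, where no collision occurs).

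Finally, to recover the Fourier form from the convolution form, I would pass through Fourier analysis coordinate by coordinate. Since $\widehat{f*f} = \wh{f}^{\,2} = |\wh{f}|^2$ (using that $f$ real and even makes $\wh{f}$ real and even), the Fourier scaling rule gives that the map $t\mapsto (f*f)(t/w)$ has Fourier transform $\xi\mapsto w\,|\wh{f}(w\xi)|^2$. Applying the inverse Fourier transform at $t=x-y$ and then taking the expectation over $w$—with Fubini justified by $\int_\RR w\,|\wh{f}(w\xi)|^2\,d\xi = \|f\|_2^2 = 1$ via Parseval and the substitution $u=w\xi$—yields $\EE_{w\sim p}[(f*f)((x-y)/w)] = \int_\RR e^{2\pi i (x-y)\xi}\,\EE_{w\sim p}[w\,|\wh{f}(w\xi)|^2]\,d\xi$. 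Taking the product over $l=1,\dots,d$ and merging the exponentials into $e^{2\pi i(\x-\y)^\top\vxi}$ produces the first stated form, completing the proof.
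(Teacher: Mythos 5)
Your proof is correct, but it takes a genuinely different route from the paper's. The paper works in the frequency domain throughout: it first proves a spectral representation of the estimator (Lemma~\ref{lem:spectral-quadratic-form}), writing $\widetilde{k}(\x,\y)$ as a lattice sum and converting it, via the Parseval-type identity of Claim~\ref{cl:basic} and the Nyquist--Shannon claim, into an integral of $\wh{g}(\vxi)^*\wh{h}(\vxi)$ over the fundamental cell $[0,1/\w]$; the expectation over $\z\sim\mathrm{Unif}([0,\w])$ is then taken inside this integral, where orthogonality of the characters $e^{2\pi i \z^\top(\mathbf{j}-\mathbf{j}')/\w}$ kills all cross terms $\mathbf{j}\neq\mathbf{j}'$, and the surviving diagonal terms reassemble into $\int_{\RR^d} e^{2\pi i(\x-\y)^\top\vxi}\prod_l w_l|\wh{f}(w_l\xi_l)|^2\,d\vxi$ before averaging over $\w$ --- i.e., the paper obtains the Fourier form first and the convolution form as a corollary of the convolution theorem. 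You instead stay in the time domain: you factorize over coordinates using independence of the pairs $(w_l,z_l)$, and in one dimension you use the substitution $s=(x-z)/w$ together with periodicity and the compact support of $f$ to collapse the conditional expectation to $\int_\RR f(s)f(s+(y-x)/w)\,ds=(f*f)((x-y)/w)$, obtaining the convolution form first and then the Fourier form by the scaling rule and Parseval (with a clean Fubini justification via $\|f\|_2=1$). Your argument is more elementary and self-contained --- it avoids Dirac combs and distributional manipulations entirely, and your handling of the fractional-part bookkeeping, the single-overlap-term verification, and the no-collision regime $|y-x|/w\geq 1$ (where $(f*f)$ indeed vanishes) is exactly the care the computation requires. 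What the paper's heavier route buys is the spectral representation of the estimator itself, \emph{before} averaging over $\z$, which is the form its downstream frequency-domain analysis is organized around; your route buys a shorter, fully rigorous proof of this particular claim.
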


By Claim~\ref{claim:expectation-estimator-fourier}, $\EE \left[\tilde{k}(\x,\y)\right]$ is clearly shift-invariant. Moreover, by the convolution theorem (see Claim~\ref{claim:convthm}), the Fourier transform of the expectation is
\begin{multline*}
\Fc\left[ \EE \left[\tilde{k}(\cdot+\y,\y)\right] \right](\vxi)\\ = \prod_{l=1}^d  \EE_{w_l \sim p(w)}\left[w_l \cdot \left|\wh{f}({w_l \xi_l})\right|^2\right],
\end{multline*}
which is a positive function for every $\vxi$. Hence, the expectation of the WLSH kernel is a valid kernel.
We now formally define \emph{WLSH kernels families}.

\begin{defn}[WLSH Kernel Family]\label{def:gen-kernel}
	Let $p(\cdot)$ be some probability density function with support $\RR_{+}$ and let $f: \RR \rightarrow \RR$ be some even function with support $[-1/2, 1/2]$ and $\|f\|_2=1$. The WLSH kernel function $k_{f,p}:\RR^d \rightarrow \RR$ is defined as
	$$k_{f,p}(\x) = \prod_{l=1}^{d} \left(\int_{0}^\infty {p(w_l)}\cdot (f * f)\left( \frac{x_l}{w_l} \right) \, dw_l\right),$$
	for any $\x\in \RR^d$. We often drop the subscripts $f,p$ and just write $k(\cdot)$ to denote the WLSH kernel.
\end{defn}

It follows from Claim \ref{claim:expectation-estimator-fourier} that for any WLSH kernel $k(\cdot)$, there exists an unbiased WLSH estimator
\[\EE_{h_{\w,\z} \sim \mathcal{H}} \left[\widetilde{k}(\x,\y)\right] = k(\x-\y).\]

\subsection{Smoothness of WLSH Gaussian process}\label{sec:smoothness}

In the context of Bayesian estimation, some regularity assumptions are often made about the function being learned. Smoothness is the most common assumption. 
Suppose that $\eta : \RR^d \rightarrow \RR$ is a sample path from a \emph{Gaussian process} GP$(0, k(\x-\y))$, i.e., its mean is $\mathbb{E}[\eta(\x)] = 0$ for every $\x \in \RR^d$ and its covariance is given by the kernel function $\mathbb{E}[ \eta(\x) \eta(\y) ] = k(\x-\y)$ for every $\x,\y \in \RR^d$, where $k(\cdot)$ is a shift-invariant positive definite kernel. The Bayesian estimation algorithms commonly assume that the sample paths of the GP, $\eta(\x)$ satisfy certain smoothness properties with high probability. For instance, in the context of Gaussian process optimization in bandit setting, to get a provable guarantee, the known algorithms require the derivatives of the GP's sample path, $\frac{\partial \eta(\mathbf{x})}{\partial \mathbf{x}}$, to be bounded everywhere with sub-Gaussian tail probability \cite{srinivas2009gaussian}. We prove that our WLSH construction (Definition \ref{def:gen-kernel}) provides a class of smooth kernels.

In the following lemma we prove that the sample paths of GP$(0,k_{f,p}(\x-\y))$ when the covariance $k_{f,p}(\cdot)$ is WLSH kernel (Definition \ref{def:gen-kernel}) inherit their smoothness from the bucket-shaping function $f$. The lemma shows that our construction of \emph{WLSH} family of kernels is able to generate a GP such that the partial derivatives of a sample path from this GP is bounded everywhere with a sub-Gaussian distribution as long as the function $f(\cdot)$ is smooth. As shown in Figure \ref{LSH-quadatic-form}, we use a bucket shape $f(\cdot)$ which has a smooth transition around the edges as opposed to random binning features whose bucket shape is $\rect(\cdot)$ with a discontinuity at the edges.
Here we denote the partial derivative with respect to $j^\text{th}$ coordinate $\partial/\partial_j$ by $D_j$. The partial derivative of the GP with respect to the $j^\text{th}$ coordinate is denoted by $D_j\eta(\x)$. The sample paths of this process are $D_j\eta(\x)$, where $\eta(\x)$ is a sample path from the original GP.

\begin{lem}\label{lem:smooth-GP-lsh-kernel}
For any positive integer $q$, any integers $q_1, q_2, \cdots q_d \ge 0$ such that $\sum_j q_j = q$ let the derivative operator ${\bf D}$ be defined as ${\bf D} = D_1^{q_1} D_2^{q_2} \cdots D_d^{q_d}$. For any even function $f$ with support $[-1/2, 1/2]$ which has bounded derivatives of up to $q+1$ order and any PDF $p(\cdot)$ with non-negative support, if $\eta:[0,1]^d \rightarrow \RR$ is a sample path from GP$(0,k(x-y))$, where $k(\cdot)$ is the WLSH kernel (Definition \ref{def:gen-kernel}), then the mixed partial derivative of the sample path, ${\bf D} \eta(x)$, satisfies the following high probability bound:
$$\Pr\left[ \sup_{\x \in [0,1]^d} \left| {\bf D} \eta(\x)\right| > M \right] \le \left(\frac{L M}{\sigma^2}\right)^d e^{-\frac{M^2}{\sigma^2}},$$
where $\sigma^2 = \prod_{l=1}^d \left\|f^{(q_l)}\right\|_2^2 \int_{\RR_{+}} {\frac{p(w_l)}{w_l^{2q_l}}}  \, dw_l$ and $L = O\left( \sup_{{ j \in [d]}} \left| \prod_{\substack{ l \in [d] }} \left\|f^{(q_l+\delta_{l,j})}\right\|_2^2 \int_{\RR_{+}} {\frac{p(w_l)}{w_l^{2(q_l+\delta_{l,j})}}} \, dw_l \right| \right)$ where $\delta_{l,j} = 0$ for every $l \neq j$ and $\delta_{j,j}=1$.
\end{lem}

\section{Spectral approximation and Kernel Ridge Regression (KRR)} 
In this section we prove our main results which show that our weighted LSH estimator provides an OSE for kernel matrices.
Suppose that you are given a collection of points in the $d$ dimensional Euclidean space $\x^1, \x^2, \ldots \x^n \in \RR^d$ together with (noisy) measurements of some unknown function $\eta^*: \RR^d \rightarrow \RR$,
$$\gamma_i = \eta^*(\x^i) + \epsilon_i,$$
where the $\epsilon_i$ are iid Gaussians with variance $\sigma_\epsilon^2$ and the aim is to estimate the underlying function $\eta^*(\x)$ from the data. One simple yet powerful method for solving this problem is the \emph{Kernel Ridge Regression} (KRR).
To find the KRR estimator, one needs to solve the least squares problem $\min_{\beta} \left\| K \beta - \vgamma \right\|_2^2 + \lambda \beta^\top K \beta$, where $K \in \RR^{n \times n}$ is the kernel matrix defined as $K_{ij} = k(\x^i,\x^j)$ and $\vgamma=(\gamma_1, \cdots \gamma_n)^\top$. The least squares solution is $\beta^* =  \left( K + \lambda I \right)^{-1} \vgamma$. If the function $\eta^*$ is a sample path from a GP$(0,k(\x,\y))$ then the KRR estimator (i.e., $\eta(\cdot) = \sum_{i \in [n]}\beta_i^* k(\cdot,\x^i)$) is optimal in the Bayesian sense.

In order to accelerate the computational complexity KRR, 
we approximate the kernel function $k(\cdot)$ using the WLSH estimator (Definition \ref{def:-lsh-estimator}). For any $\x^1,\x^2, \dots, \x^n \in \RR^d$, the approximated kernel matrix $\widetilde{K} \in \RR^{n \times n}$ is defined as,
$[\widetilde{K}]_{ij} = \widetilde{ k}_{f,p}(\x^i,\x^j)$, where $\widetilde{ k}_{f,p}(\cdot)$ is the WLSH estimator as in Definition~\ref{def:-lsh-estimator}.
One can see that the matrix $\widetilde{K}_{f,p}$ is very structured and typically sparse (it's $ij^\text{th}$ entry is nonzero only if $\x^{i}$ and $\x^{j}$ get hashed into the same bucket, i.e., $h_{\w,\z}(\x^i) = h_{\w,\z}(\x^j)$). Hence, $\widetilde{K}_{f,p}$ supports fast matrix vector multiplication and can be stored in small memory.

\paragraph{Approximate kernel matrix $\widetilde{K}$ can be stored in small memory and supports fast matrix vector multiplication:} 
Suppose that we want to build a data structure which can be stored in space $O(n)$ such that using this data structure we can compute the product $\widetilde{K} \beta$ for arbitrary vectors $\beta \in \RR^n$ in linear time $O(n)$. It follows from Definition~\ref{def:-lsh-estimator} that for any $s \in [n]$,
$$(\widetilde{K} \beta)_s =  B_{h_{\w,{\z}}(\x^s)}(\beta)  \cdot f^{\otimes d} \left( h_{\w,{\z}}(\x^s) + \frac{{\z} - \x^s}{\w} \right),$$
where $B_{\bf j}(\beta) = \sum_{i: h_{\w,\z}(\x^i) = {\bf j}} \beta_i \cdot f^{\otimes d} \left( {\bf j} + \frac{{\z} - \x^i}{\w} \right)$ for every bucket ${\bf j}$ and we call it the \emph{load} of bucket ${\bf j}$. 
This is illustrated in Figure \ref{LSH-quadatic-form} for the one dimensional case. In dimension one, to compute the load of $j^\text{th}$ bucket, we first shift the function $f$ to $z + jw$ and then for every $x^i$ which is hashed into $j^\text{th}$ bucket, we scale $\beta_i$ by the function value at point $x^i$, $f(\frac{x^i - jw - z}{w})$, and sum them all up. 

Therefore we construct the data structure as follows: We first hash all the data points $\x^i$ using the LSH function $h_{\w,{\z}}(\cdot)$ and keep the lists $L_{\bf j_1},L_{\bf j_1}, \dots$, where each list corresponds to one of the non-empty buckets of this hashing. Each list $L_{\bf j_r}$ contains the points $\x^i$ which are hashed to bucket ${\bf j_r}$, i.e., $L_{\bf j_r} = \{ i : h_{\w,{\z}}(\x^i) = {\bf j_r} \}$ for every $r$. All the lists can be formed in time $O(dn)$ which is the time to hash all data points. And the total size of all lists is the number of data points $n$, because each data point gets hashed into exactly one bucket, hence the data structure can be stored using $O(n)$ memory words. Then to compute the product $\widetilde{K} \beta$ first we compute the bucket load $B_{\bf j_r}(\beta)$ for every non-empty bucket~${\bf j_r}$, 
$$B_{\bf j_r}(\beta) = \sum_{i \in L_{\bf j_r}} \beta_i \cdot f_d \left( {\bf j_r} + \frac{{\z} - \x^i}{\w} \right).$$
We can do this for all buckets using time $O(n)$. Then every coordinate $s$ of the product $(\widetilde{K} \beta)_s$ is computed as follows:
$$(\widetilde{K} \beta)_s = B_{h_{\w,{\z}}(\x^s)}(\beta) \cdot f^{\otimes d} \left( h_{\w,{\z}}(\x^s) + \frac{{\z} - \x^s}{\w} \right),$$
where $B_{h_{\w,\z}(\x^s)}(\beta)$ denotes the load of the bucket $\x^s$ is hashed into.
Hence, the product can be computed in total time $O(n)$.

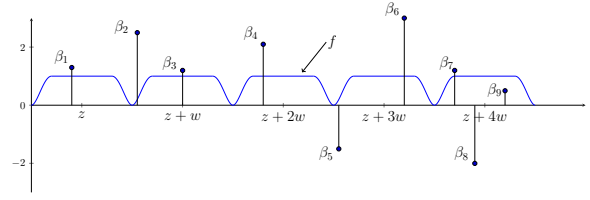
\begin{figure}[h]
	
	\scalebox{0.40}{
		\begin{tikzpicture}
		\begin{axis}[
		restrict y to domain=-10:10,
		samples=1000,
		% some fine-tuning for the display:
		width=20cm, height=210pt,
		ymin=-3 ,ymax=3,
		xmin=-2, xmax=3.5,
		xtick={-1.5,-0.5,...,2.5},
		xticklabels={\Large{${z}$},\Large{${z}+w$},\Large{${z}+2w$},\Large{${z}+3w$},\Large{${z}+4w$}},
		axis x line=center,
		axis y line=left,
		]
		%Below the red parabola is defined
		\addplot [
		domain=0:0.2, 
		samples=100, 
		color=blue, thick
		]
		{((5*x)^2)*((2-(5*x))^2)};
		%	\addlegendentry{$x^2 - 2x - 1$}
		%Here the blue parabola is defined
		\addplot [
		domain=0.2:0.8, 
		samples=100, 
		color=blue, thick
		]
		{ 1};
		%%	\addlegendentry{$x^2 + 2x + 1$}
		\addplot [
		domain=0.8:1, 
		samples=100, 
		color=blue, thick
		]
		{((5*(1-x))^2)*((2-(5*(1-x)))^2)};
		
		\addplot [
		domain=1:1.2, 
		samples=100, 
		color=blue, thick
		]
		{((5*(x-1))^2)*((2-(5*(x-1)))^2)};
		%	\addlegendentry{$x^2 - 2x - 1$}
		%Here the blue parabola is defined
		\addplot [
		domain=1.2:1.8, 
		samples=100, 
		color=blue, thick
		]
		{ 1};
		%%	\addlegendentry{$x^2 + 2x + 1$}
		\addplot [
		domain=1.8:2, 
		samples=100, 
		color=blue, thick
		]
		{((5*(1-(x-1)))^2)*((2-(5*(1-(x-1))))^2)};

		\addplot [
		domain=2:2.2, 
		samples=100, 
		color=blue, thick
		]
		{((5*(x-2))^2)*((2-(5*(x-2)))^2)};
		%	\addlegendentry{$x^2 - 2x - 1$}
		%Here the blue parabola is defined
		\addplot [
		domain=2.2:2.8, 
		samples=100, 
		color=blue, thick
		]
		{ 1};
		%%	\addlegendentry{$x^2 + 2x + 1$}
		\addplot [
		domain=2.8:3, 
		samples=100, 
		color=blue, thick
		]
		{((5*(1-(x-2)))^2)*((2-(5*(1-(x-2))))^2)};

		\addplot [
		domain=-1:-0.8, 
		samples=100, 
		color=blue, thick
		]
		{((5*(x+1))^2)*((2-(5*(x+1)))^2)};
		%	\addlegendentry{$x^2 - 2x - 1$}
		%Here the blue parabola is defined
		\addplot [
		domain=-0.8:-0.2, 
		samples=100, 
		color=blue, thick
		]
		{ 1};
		%%	\addlegendentry{$x^2 + 2x + 1$}
		\addplot [
		domain=-0.2:0, 
		samples=100, 
		color=blue, thick
		]
		{((5*(1-(x+1)))^2)*((2-(5*(1-(x+1))))^2)};

		\addplot [
		domain=-2:-1.8, 
		samples=100, 
		color=blue, thick
		]
		{((5*(x+2))^2)*((2-(5*(x+2)))^2)};
		%	\addlegendentry{$x^2 - 2x - 1$}
		%Here the blue parabola is defined
		\addplot [
		domain=-1.8:-1.2, 
		samples=100, 
		color=blue, thick
		]
		{ 1};
		%%	\addlegendentry{$x^2 + 2x + 1$}
		\addplot [
		domain=-1.2:-1, 
		samples=100, 
		color=blue, thick
		]
		{((5*(1-(x+2)))^2)*((2-(5*(1-(x+2))))^2)};

		\end{axis}

		\begin{axis}
		[
		samples=2,
		domain = -2:-1,
		% some fine-tuning for the display:
		width=20cm, height=210pt,
		ymin=-3 ,ymax=3,
		xmin=-2, xmax=3.5,
		xtick={-1.5,-0.5,...,2.5},
		xticklabels={${z}$,${z}+w$,${z}+2w$,${z}+3w$,${z}+4w$},
		hide axis
		]
		\addplot+[ycomb,black, thick] table [x={n}, y={xn}] {data.dat};
		\end{axis}
		\draw(1, +4.5) node {\Large $\beta_1$};
		\draw(3, +5.5) node {\Large $\beta_2$};
		\draw(4.6, +4.3) node {\Large $\beta_3$};
		\draw(7.3, +5.3) node {\Large $\beta_4$};
		\draw(9.8, 1.3) node {\Large $\beta_5$};
		\draw(12, +6.1) node {\Large $\beta_6$};
		\draw(13.8, +4.3) node {\Large $\beta_7$};
		\draw(14.3, 1.3) node {\Large $\beta_8$};
		\draw(15.4, 3.4) node {\Large $\beta_9$};
		\draw(10, 5) node {\Large $f$};
		\draw [->,thick] (9.8,5) -- (9,4);

		\end{tikzpicture}
	}
	
	\caption{The \emph{load} of ${\bf j}^\text{th}$ bucket corresponds to shifting the bucket-shaping function $f^{\otimes d}$ to ${\bf j}\w+{\z}$ and then integrating it against $\alpha(\x) = \sum_{j=1}^n \beta_j \delta(\x - \x^j)$.} \label{LSH-quadatic-form}
\end{figure}

\subsection{Oblivious subspace embedding via WLSH estimator}\label{sec:ose}
Recall that our aim is to solve the least squares problem $\min_{\beta} \left\| K \beta - \vgamma \right\|_2^2 + \lambda \beta^\top K \beta$ quickly by using an approximate kernel matrix $\widetilde{ K}$. In order to get a provable $(1\pm\epsilon)$-approximate solution to the least squares problem, $\widetilde{ K}$ must be spectrally close to original $K$ in some way. In this paper we focus on \emph{oblivious subspace embeddings} (see Definition \ref{OSE-definition}) and show that this property is enough to get a provably good approximation to the least squares problem.
We need the following claim before proving the main result,

\begin{claim}\label{claim:spectral-nomr-estimator}
	For any dataset $\x^1,\x^2,\dots,\x^n \in \RR^d$, if $k(\cdot)$ is the WLSH estimator as in Definition \ref{def:-lsh-estimator} then its corresponding kernel matrix $\widetilde{K} \in \RR^{n \times n}$, is symmetric and satisfies,
	$0 \preceq \widetilde{K} \preceq n \|f^{\otimes d}\|_\infty^2 \cdot  I$.
\end{claim}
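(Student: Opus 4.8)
The plan is to exploit the rank-one-per-bucket structure of $\widetilde{K}$ that a single draw of the hash function $h_{\w,\z}$ imposes. First I would fix the hash function and abbreviate, for each $i\in[n]$, the single-point factor
\[
g_i := f^{\otimes d}\!\left(h_{\w,\z}(\x^i) + \tfrac{\z - \x^i}{\w}\right),
\]
so that Definition~\ref{def:-lsh-estimator} reads, entrywise, $[\widetilde{K}]_{ij} = g_i\,g_j\cdot \mathbf{1}\!\left[h_{\w,\z}(\x^i) = h_{\w,\z}(\x^j)\right]$. Symmetry is then immediate, since this expression is symmetric in $i$ and $j$. (I would also note in passing that the argument of $f^{\otimes d}$ lies coordinatewise in $[-1/2,1/2]$, the support of $f$, so each $g_i$ is well defined and $|g_i|\le \InfNorm{f^{\otimes d}}$.)

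For the lower bound $0\preceq\widetilde{K}$, the key observation is that the collision indicator partitions $[n]$ into the non-empty buckets $S_{\mathbf{j}} = \{\, i : h_{\w,\z}(\x^i) = \mathbf{j}\,\}$, of which there are at most $n$, and that $[\widetilde{K}]_{ij}$ is nonzero only when $i$ and $j$ lie in the \emph{same} bucket. Defining $u^{(\mathbf{j})}\in\RR^n$ by $u^{(\mathbf{j})}_i = g_i$ if $i\in S_{\mathbf{j}}$ and $0$ otherwise, a direct comparison of entries gives the finite decomposition $\widetilde{K} = \sum_{\mathbf{j}} u^{(\mathbf{j})}(u^{(\mathbf{j})})^\top$ over non-empty buckets, a sum of rank-one positive semidefinite matrices, hence $\widetilde{K}\succeq 0$.

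For the upper bound I would bound the quadratic form directly. For any $\beta\in\RR^n$, the decomposition yields $\beta^\top\widetilde{K}\beta = \sum_{\mathbf{j}}\big(\sum_{i\in S_{\mathbf{j}}} g_i\beta_i\big)^2$. Applying Cauchy--Schwarz within each bucket and using $|g_i|\le \InfNorm{f^{\otimes d}}$ together with $|S_{\mathbf{j}}|\le n$ gives, for each $\mathbf{j}$,
\[
\Big(\sum_{i\in S_{\mathbf{j}}} g_i\beta_i\Big)^{2} \le \Big(\sum_{i\in S_{\mathbf{j}}} g_i^2\Big)\Big(\sum_{i\in S_{\mathbf{j}}} \beta_i^2\Big) \le n\,\InfNorm{f^{\otimes d}}^{2}\sum_{i\in S_{\mathbf{j}}}\beta_i^2 .
\]
Summing over the buckets and using that they partition $[n]$ collapses the right-hand side via $\sum_{\mathbf{j}}\sum_{i\in S_{\mathbf{j}}}\beta_i^2 = \|\beta\|_2^2$, so that $\beta^\top\widetilde{K}\beta \le n\,\InfNorm{f^{\otimes d}}^{2}\|\beta\|_2^2$, which is exactly $\widetilde{K}\preceq n\,\InfNorm{f^{\otimes d}}^{2} I$. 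Alternatively, since $\widetilde{K}$ is already known to be PSD, one could bound its top eigenvalue by its trace $\Trace{\widetilde{K}} = \sum_{i} g_i^2 \le n\,\InfNorm{f^{\otimes d}}^{2}$ and conclude identically.

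There is no serious obstacle here; the only point requiring care is the bookkeeping behind the block/rank-one decomposition — specifically, verifying that each data point hashes into \emph{exactly one} bucket, so that cross terms between distinct buckets vanish and the per-bucket $\ell_2$ masses of $\beta$ reassemble into $\|\beta\|_2^2$. Once that partition is pinned down, both inequalities are one-line consequences.
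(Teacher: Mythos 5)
Your proposal is correct and takes essentially the same route as the paper: both proofs rest on the bucket-wise sum-of-squares identity $\beta^\top \widetilde{K}\beta = \sum_{\mathbf{j}}\bigl(\sum_{i\in S_{\mathbf{j}}} g_i\beta_i\bigr)^2$ (your rank-one decomposition $\widetilde{K}=\sum_{\mathbf{j}} u^{(\mathbf{j})}(u^{(\mathbf{j})})^\top$ is just this identity made explicit) for positive semidefiniteness, followed by Cauchy--Schwarz for the upper bound. The only cosmetic difference is that the paper pulls out $\|f^{\otimes d}\|_\infty$ and applies Cauchy--Schwarz globally via $\|\beta\|_1^2 \le n\|\beta\|_2^2$, whereas you apply it per bucket (or via the trace); the computations are equivalent.
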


Now we are ready to prove the main theorem and show that WLSH estimator provides an oblivious subspace embedding for  WLSH kernel matrix $K$.

\begin{thm}\label{lem:OSE}

	For any positive integers $d,n$, any collection of points $\x^1,\x^2,\dots, \x^n \in \RR^d$, any PDF $p(\cdot)$ with non-negative support, any even function $f(\cdot)$ with support $[-1/2,1/2]$ and $\|f\|_2=1$, let $k(\cdot)$ be the WLSH kernel as in Definition \ref{def:gen-kernel} and let $K \in \RR^{n \times n}$ be its kernel matrix.
	If $\widetilde{ k}^{1}(\cdot), \widetilde{k}^{2}, \dots, \widetilde{ k}^{m}(\cdot)$ are independent instances of WLSH estimator as per Definition \ref{def:-lsh-estimator} and $\widetilde{K}^1,\widetilde{K}^2,\dots, \widetilde{K}^m$ are their kernel matrices, then for any $\lambda, \epsilon > 0$, the matrix $\widetilde{ K} := \frac{1}{m}\sum_{s=1}^{m} \widetilde{K}^s$ is an $\left( \epsilon, \frac{1}{\mathrm{poly}(n)}, \lambda \right)$-oblivious subspace embedding (see Definition \ref{OSE-definition}) for the kernel matrix $K$ as long as $m = \Omega\left( \frac{\|f^{\otimes d}\|_\infty^2}{\epsilon^2} \cdot ({n}/{\lambda})\cdot \log n\right)$.
\end{thm}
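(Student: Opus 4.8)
The plan is to establish the OSE bound \eqref{eq:aspectral} via a matrix concentration inequality applied to the sum $\widetilde K = \frac{1}{m}\sum_{s=1}^m \widetilde K^s$. First I would reduce the two-sided spectral bound to a statement about the ``whitened'' matrices. Since Claim~\ref{claim:expectation-estimator-fourier} gives $\EE[\widetilde K^s] = K$ (each instance is an unbiased estimator of the true kernel matrix $K$), I define $M = (K+\lambda I)^{-1/2}$ and consider the normalized summands $Y_s = M \widetilde K^s M$, so that $\EE[Y_s] = M K M = I - \lambda M^2 = I - \lambda(K+\lambda I)^{-1}$. The target inequality \eqref{eq:aspectral} is then equivalent, after conjugating by $M$ on both sides, to showing $\left\| \frac{1}{m}\sum_s Y_s - \EE[Y_s]\right\|_2 \le \epsilon$ with high probability, where the relevant quantity is $\left\|M(\widetilde K - K)M\right\|_2 \le \epsilon$.

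\textbf{Applying matrix Bernstein / Chernoff.} The core of the argument is an intrinsic-dimension matrix Bernstein inequality (e.g.\ Tropp) applied to the centered i.i.d.\ sum $\frac{1}{m}\sum_s (Y_s - \EE Y_s)$. Two parameters must be controlled: the per-term operator-norm bound and the variance. For the norm bound, I would invoke Claim~\ref{claim:spectral-nomr-estimator}, which gives $0 \preceq \widetilde K^s \preceq n\|f^{\otimes d}\|_\infty^2 \cdot I$; conjugating, $\|Y_s\|_2 \le n\|f^{\otimes d}\|_\infty^2 \cdot \|M\|_2^2 \le n\|f^{\otimes d}\|_\infty^2/\lambda$, since $\|M\|_2^2 = 1/\lambda_{\min}(K+\lambda I) \le 1/\lambda$. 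This is exactly the source of the $\frac{\|f^{\otimes d}\|_\infty^2}{\epsilon^2}\cdot\frac{n}{\lambda}$ factor in the sample complexity. For the variance, $\EE[Y_s^2] \preceq \|Y_s\|_2 \cdot \EE[Y_s] \preceq \frac{n\|f^{\otimes d}\|_\infty^2}{\lambda}\, \EE[Y_s]$, and since $\EE[Y_s] = MKM \preceq I$, the variance matrix has operator norm at most $n\|f^{\otimes d}\|_\infty^2/\lambda$ as well. Feeding $R = n\|f^{\otimes d}\|_\infty^2/\lambda$ and this variance into matrix Bernstein, the requirement for $\left\|\frac{1}{m}\sum_s(Y_s - \EE Y_s)\right\|_2 \le \epsilon$ with failure probability $1/\poly(n)$ becomes $m = \Omega\!\left(\frac{R}{\epsilon^2}\log n\right) = \Omega\!\left(\frac{\|f^{\otimes d}\|_\infty^2}{\epsilon^2}\cdot\frac{n}{\lambda}\cdot \log n\right)$, matching the claimed bound. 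The $\log n$ comes from the $\log$ of the (effective/intrinsic) dimension in the tail probability.

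\textbf{Main obstacle.} The routine part is plugging in the norm and variance bounds; the delicate step is obtaining the variance bound sharp enough that only the operator norm $R$, and not the ambient dimension $n$ in an uncontrolled way, drives the sample complexity. Specifically I must verify $\EE[Y_s^2] \preceq R \cdot \EE[Y_s]$ rather than a cruder $\EE[Y_s^2]\preceq R \cdot I$, since using $\EE[Y_s]\preceq I$ at the wrong moment can lose the dependence on the effective dimension $d_{\mathrm{eff}} = \Trace{K(K+\lambda I)^{-1}}$. In the regularized OSE setting, one typically wants the variance-driven term to scale with $d_{\mathrm{eff}}$, but since the theorem states the bound in terms of $n/\lambda$ (the worst-case bound on $d_{\mathrm{eff}}$, as $\Trace{MKM}\le \Trace{I}\cdot \|MKM\|_2 \le n$), the cleaner route is to bound everything by the operator norm $R$ and the trace, using the intrinsic-dimension version of matrix Bernstein to keep the logarithmic factor at $\log n$ rather than $\log(\text{something larger})$.

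\textbf{Assembling the OSE.} Finally I would translate the norm bound back: $\left\|M(\widetilde K - K)M\right\|_2 \le \epsilon$ is equivalent to $-\epsilon I \preceq M(\widetilde K - K)M \preceq \epsilon I$, i.e.\ $-\epsilon(K+\lambda I)\preceq \widetilde K - K \preceq \epsilon(K+\lambda I)$ after conjugating by $M^{-1}=(K+\lambda I)^{1/2}$. Adding $K+\lambda I$ to each side and rearranging yields precisely \eqref{eq:aspectral}, namely $(1-\epsilon)(K+\lambda I)\preceq \widetilde K + \lambda I \preceq (1+\epsilon)(K+\lambda I)$, which is the required $(\epsilon,1/\poly(n),\lambda)$-OSE property. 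This completes the proof.
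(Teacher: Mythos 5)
Your proposal is correct and follows essentially the same route as the paper: whiten by $(K+\lambda I)^{-1/2}$, use unbiasedness for the mean and Claim~\ref{claim:spectral-nomr-estimator} for the per-instance operator-norm bound $n\|f^{\otimes d}\|_\infty^2/\lambda$, and invoke a Tropp-style matrix concentration bound to get $m = \Omega\bigl(\epsilon^{-2}\|f^{\otimes d}\|_\infty^2 (n/\lambda)\log n\bigr)$. The only cosmetic difference is that the paper applies the matrix Chernoff bound (Lemma~\ref{lem-matrix-chenoff}) directly to the uncentered PSD sum $\frac{1}{m}\sum_s Z^\top U^\top(\widetilde K^s + \lambda I)UZ$ with expectation $I$, which sidesteps the variance computation your matrix Bernstein route requires.
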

\begin{proof}
	Let $U \in \RR^{n \times n}$ be the unitary matrix of eigenvectors of $K$, i.e., $i^\text{th}$ column of matrix $U$ corresponds to $i^\text{th}$ eigenvector of matrix $K$ (The eigenvalues are ordered in the decreasing order $\lambda_1 \ge \lambda_2 \ge \dots \ge \lambda_n \ge 0$). Since $U$ is unitary ($U^\top U=I_{n\times n}$), it is enough to prove that with probability $1 - \frac{1}{\mathrm{poly}(n)}$, 
	$(1-\epsilon) U^\top(K + \lambda I)U \preceq U^{\top}(\wt{K} + \lambda I)U \preceq (1+\epsilon) U^\top(K+\lambda I)U$.
	Let $Z= (U^\top (K+\lambda I) U )^{-1/2}$. Since $Z$ is a diagonal matrix with entries $Z_{i,i}=\frac{1}{\sqrt{\lambda_i+\lambda}}$ and is, therefore, positive definite, we can multiply the above identity from left and right by $Z$ and equivalently prove that,
	$(1-\epsilon) I \preceq Z^\top U^{\top}(\wt{K} + \lambda I)U Z \preceq (1+\epsilon) I$.
	In order to satisfy the above it is sufficient to have $||Z^\top U^{\top}(\wt{K} + \lambda I)U Z - I||_{op} \le \epsilon$ where $\|\cdot\|_{op}$ denotes the operator norm of matrices. Therefore, it suffices to prove
	$\Pr\left[ \left\|Z^\top U^{\top}(\wt{K} + \lambda I)U Z - I \right\|_{op} \le \epsilon \right] \ge 1 - \frac{1}{\mathrm{poly}(n)}$,
	which follows from the matrix Chernoff bound of Lemma~\ref{lem-matrix-chenoff} (see Appendix~\ref{sec:appD}).

	By Claim \ref{claim:spectral-nomr-estimator} the estimators $\widetilde{K}^s$ are PSD, therefore, $0 \preceq Z^\top U^\top \left(\widetilde{K}^s + \lambda I \right) U Z$, 
	for every $s \in [m]$. Also because of the unbiasedness of estimators, $\mathbb{E} \left[ Z^\top U^\top \left(\widetilde{K}^s + \lambda I\right) U Z \right] = I$.
	Therefore we can invoke Lemma~\ref{lem-matrix-chenoff}. In order to do so, we need to upper bound the operator norm of $Z^\top U^\top \left(\widetilde{K}_{f,p}^s + \lambda I \right) U Z$. By Claim \ref{claim:spectral-nomr-estimator}, we have $\left\|\widetilde{K}^s + \lambda I \right\|_{op} = \left\| \widetilde{K}^s \right\|_{op} + \lambda \le n \cdot \|f^{\otimes d}\|_\infty^2 + \lambda$; thus,
	\begin{align*}
	&\left\|Z^\top U^{\top} \left( \widetilde{K}^s + \lambda I \right)U Z \right\|_{op}\\
	&\le \left\|\widetilde{K}^s + \lambda I \right\|_{op} \cdot \|Z^\top U^{\top}U Z \|_{op}\\
	&\le \left( n \|f^{\otimes d}\|_\infty^2 + \lambda \right) \cdot ||Z^\top Z||_{op} \le \frac{n}{\lambda} \cdot \|f^{\otimes d}\|_\infty^2 +1.
	\end{align*}
	The result now follows by Lemma~\ref{lem-matrix-chenoff} (see Appendix~\ref{sec:appD}).	
\end{proof}

Now we show that our analysis in Theorem~\ref{lem:OSE} is not loose and in order to get an OSE for worst case datasets, one needs $m = \Omega \left( \frac{1}{\epsilon^2} (n/\lambda) \log n \right)$.

\begin{thm}[Lower Bound in order to achieve OSE] \label{thm:lowerbound} 
	Let $f(\cdot) = \rect (\cdot)$ and $p(w) = w e^{-w}$ (Gamma distribution) and let $k(\cdot)$ be the WLSH kernel as in Definition \ref{def:gen-kernel}. For any integer $d\ge 1$ any $\lambda > 0 $ and any integer $n \ge 8 \lambda$, there exists a dataset $\x^1, \cdots \x^n \in \RR^d$ such that if $K \in \RR^{n \times n}$ is the kernel matrix defined as $K_{ij} = k_{f,p}(\x^i - \x^j)$ and $\widetilde{ k}^{1}(\cdot), \widetilde{ k}^{2}, \cdots \widetilde{ k}^{m}(\cdot)$ are independent instances of WLSH estimator as per Definition \ref{def:-lsh-estimator} and $\widetilde{K}^1,\widetilde{K}^2,\cdots \widetilde{K}^m$ are their kernel matrices then for any $0 < \epsilon \le 1/6$ in order for $\widetilde{ K} := \frac{1}{m}\sum_{s=1}^{m} \widetilde{K}^s$ to be an $(\epsilon, \frac{1}{n}, \lambda)$-oblivious subspace embedding for $K$ one needs to have $m =  \Omega \left( \frac{1}{\epsilon^2} \cdot \frac{n}{\lambda} \cdot \log n \right)$.
\end{thm}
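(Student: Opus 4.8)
The plan is to prove the lower bound by contradiction with a single well-chosen test direction, reducing the (spectral, all-directions) OSE requirement to a one-dimensional concentration statement and then invoking an anti-concentration (reverse Chernoff) estimate. First I would note that since $f=\rect$, the estimator $\widetilde{k}^s$ of Definition~\ref{def:-lsh-estimator} is exactly the $\{0,1\}$ random-binning collision indicator, its expectation is the Laplace kernel $e^{-\|\x-\y\|_1}$, and the pairwise collision probabilities are given explicitly by $k(\x^i-\x^j)$. The OSE property implies, for every fixed $v\in\RR^n$, the scalar sandwich $\big|\tfrac1m\sum_{s=1}^m v^\top\widetilde{K}^s v - v^\top K v\big|\le \epsilon\,(v^\top K v + \lambda\|v\|_2^2)$, and if $\widetilde{K}$ is an $(\epsilon,1/n,\lambda)$-OSE this must hold with probability at least $1-1/n$. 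Fixing one such $v$ turns the theorem into a lower bound on the number of samples needed for the empirical mean of the iid bounded variables $Y_s:=v^\top\widetilde{K}^s v$ to fall within a prescribed window around $\EE[Y_s]=v^\top K v$.

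Next I would design the dataset and the test vector so that $Y_s$ is governed by a single rare collision event $E_s$ of probability $q=\Theta(\lambda/n)$. Writing $N$ for the number of the $m$ independent instances on which $E_s$ occurs, we have $N\sim\mathrm{Bin}(m,q)$, the empirical average $\tfrac1m\sum_s Y_s$ is an affine function of $N/m$, and the OSE window translates into requiring that $N$ deviate from its mean $mq$ by no more than a relative $\Theta(\epsilon)$ factor. The geometry is pinned down through the explicit collision probabilities $e^{-\|\cdot\|_1}$: distances are tuned so that the designated coincidence has probability of order $\lambda/n$ and so that the normalization $v^\top K v+\lambda\|v\|_2^2$ is of the same order $\Theta(\lambda n)$ as the mean contribution of the event to $Y_s$. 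The hypothesis $n\ge 8\lambda$ guarantees that $q=\Theta(\lambda/n)$ is a legitimate probability and keeps the construction non-degenerate.

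The final step is anti-concentration: by a lower-tail (reverse) Chernoff estimate for the binomial, $\Pr\big[\,|N-mq|>\epsilon\,mq\,\big]$ stays above $1/n$ unless $mq=\Omega(\log n/\epsilon^2)$. Hence if $m<c\,\tfrac{n}{\lambda}\cdot\tfrac{\log n}{\epsilon^2}$, then the scalar condition along $v$—and therefore the OSE—fails with probability exceeding $1/n$, which yields $m=\Omega\!\big(\tfrac1{\epsilon^2}\cdot\tfrac n\lambda\cdot\log n\big)$.

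The main obstacle is the construction in the second step. Because any event in which many points share a bucket forces all the corresponding pairwise Laplace kernel values, and hence the spectrum of $K$, to be large, the probability of a coherent $\Theta(n)$-scale collision event is tightly coupled to the pairwise collision probabilities; the delicate task is to realize an event of probability $\Theta(\lambda/n)$ whose effect on $v^\top\widetilde{K}^s v$ is both coherent and large relative to $v^\top(K+\lambda I)v=\Theta(\lambda n)$, while simultaneously controlling the ``baseline'' randomness of $Y_s$ off the event $E_s$ so that it does not enlarge the effective window. Making these scales line up, and carrying out the reverse-Chernoff bookkeeping so that the failure probability is genuinely at least $1/n$ and produces the $\log n/\epsilon^2$ factor, is where the real work lies.
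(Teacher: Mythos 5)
Your skeleton coincides with the paper's actual proof: fix one test direction, reduce the OSE to a scalar sandwich for $Y_s=v^\top\widetilde K^s v$, arrange things so that $Y_s$ is driven by a single rare event of probability $q=\Theta(\lambda/n)$, observe that $\frac1m\sum_s Y_s$ is then an affine function of a $\mathrm{Bin}(m,q)$ count $N$, and finish with a reverse binomial tail bound (the paper uses Slud's inequality, Lemma~\ref{slud}, to get $\Pr\left[N-mq\ge 3\epsilon mq\right]\ge\frac14 e^{-18\epsilon^2 mq}$, which forces $mq=\Omega(\log n/\epsilon^2)$, i.e., $m=\Omega(\frac{n}{\lambda}\cdot\frac{\log n}{\epsilon^2})$). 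However, the step you explicitly defer---exhibiting a dataset and vector realizing the rare event---is the entire content of the theorem, and your closing paragraph (worrying about ``baseline'' randomness of $Y_s$ off the event, and a tension between coherent $\Theta(n)$-scale collisions and the spectrum of $K$) indicates you have not seen how to discharge it. As written, the proposal therefore has a genuine gap: no construction is given, only its required scales.

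The paper's construction dissolves your obstacle by \emph{signing} the test vector: take two coincident clusters, $\x^1=\cdots=\x^{n/2}=(-\lambda/n,0,\dots,0)^\top$ and $\x^{n/2+1}=\cdots=\x^{n}=(\lambda/n,0,\dots,0)^\top$, with $\beta_i=-1$ on the first half and $+1$ on the second. With $f=\rect$, the quadratic form is $\beta^\top\widetilde K^s\beta=\sum_{{\bf j}\in\ZZ^d}\bigl(\sum_{i:\,h_{\w,\z}(\x^i)={\bf j}}\beta_i\bigr)^2$; since each cluster is a repeated point, only two buckets ever receive mass, so if the clusters hash together the signed masses cancel exactly and $Y_s=0$, while if they are split $Y_s=2(n/2)^2=n^2/2$. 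Thus $Y_s$ is literally two-valued---there is no off-event randomness to control. The splitting probability is $q=1-e^{-2\lambda/n}$, and $n\ge 8\lambda$ gives $\frac{7\lambda}{4n}\le q\le\frac{2\lambda}{n}\le\frac14$ (so Slud's inequality applies), while the scales match automatically: $\beta^\top K\beta=\frac{n^2}{2}q=\Theta(\lambda n)$ against $\lambda\|\beta\|_2^2=\lambda n$, so that $\epsilon\lambda\|\beta\|_2^2\le 2\epsilon\,\beta^\top K\beta$ and the upper OSE inequality along $\beta$ reduces to the one-sided deviation $N-mq\le 3\epsilon mq$ (only the upper tail is needed; this absorption is also where $\epsilon\le 1/6$ and $n\ge 8\lambda$ enter). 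With this construction supplied, your plan becomes exactly the paper's proof.
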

{\bf Proof sketch:} let the points $\{\x^i\}_{i=1}^n \subseteq \RR^d$ be positioned as $\x^1 =\dots = \x^{n/2} = (-\lambda/n, 0,0, \dots 0)^\top$ and $\x^{n/2+1} =\cdots = \x^{n} = (\lambda/n, 0, 0, \cdots 0)^\top$.
Let the vector $\beta \in \CC^n$ be defined as,
$\beta_1 =\beta_2=\cdots = \beta_{n/2} = -1$
and 
$\beta_{n/2+1} =\dots = \beta_{n} = 1$. 
The proof proceeds by showing that in order to preserve the quadratic form corresponding to this $\beta$, one needs to set $m =  \Omega \left( \frac{1}{\epsilon^2} \cdot \frac{n}{\lambda} \cdot \log n \right)$.
By some calculations, we see that $\beta^\top \widetilde{ K}^s \beta$ has the following distribution:
	$$\beta^\top \widetilde{ K}^s \beta = \begin{cases}
	\frac{n^2}{2} & \text{with probability } p \le  \frac{2 \lambda}{n}\\
	0 & \text{with probability } 1-p
	\end{cases}.$$
	Thus, to obtain a non-zero estimator with constant probability, one needs $m = \Omega(\frac{n}{\lambda})$. In order to obtain the $(1\pm\epsilon)$-approximation guarantee with high probability, the bound improves by a factor of $\frac{1}{\epsilon^2} \log n$ (see Appendix~\ref{sec:appD}).

\subsection{Approximate KRR via WLSH}\label{sec:approx-KRR-lsh}
In this section we give the algorithm for approximate KRR problem using the WLSH estimator. Let $\widetilde{ k}_{f,p}^s(\cdot)$ be independent instances of the WLSH estimator for all $s \in [m]$. We define the approximate kernel function $\widetilde{ k}(\cdot) := \frac{1}{m} \sum_{s=1}^m \widetilde{ k}_{f,p}^s(\cdot)$ and let $\widetilde{ K}$ be the corresponding kernel matrix. Suppose $\eta^*: \RR \rightarrow \RR$ is the underlying function to be learned via KRR and the measurements are $\gamma_i = \eta^*(\x^i) + \epsilon_i$,
where $\epsilon_i$'s are iid normal noise with variance $\sigma_\epsilon^2$.
We solve the approximate regressor by solving the linear system,
$(\widetilde{K} + \lambda I) \beta = { \vgamma}$,
where $ { \vgamma} = (\gamma_1, \dots, \gamma_n)^\top$. Then the approximate regressor estimates the function values at a point $\x$ as follows:
\begin{align*}
\widetilde{ \eta}(\x) &= \sum_{i \in [n]} \beta_i\widetilde{ k}(\x,\x^i)\\&= \frac{1}{m} \sum_{s=1}^m B_{h_{\w,{\z}}^s(\x)}(\beta) \cdot f^{\otimes d} \left( h_{\w,{\z}}^s(\x) + \frac{{\z} - \x}{\w} \right) 
\end{align*}
where $B_{h_{\w,{\z}}^s(\x)}(\beta) = \sum_{i: h_{\w,{\z}}^s(x^i) = h_{\w,{\z}}^s(\x)} \beta_i \cdot f^{\otimes d} \left( h_{\w,{\z}}^s(\x^i) + \frac{{\z} - \x^i}{\w} \right)$ is the load of the bucket that $\x$ gets hashed into via $s^\text{th}$ LSH function, $h_{\w,{\z}}^s$.

We give the \emph{empirical risk bound} for the WLSH estimator in Appendix~\ref{sec:appF}.

\section{Experiments} \label{sec:experiments}

\paragraph{Estimating a GP using the WLSH kernel:}

In the first set of experiments we show that our WLSH kernel family from Section \ref{sec:lsh-estimator} performs as accurately as the most popular kernel functions for learning Gaussian processes through KRR. Specifically, we generate a random function $\eta:[0,1]^d \rightarrow \RR$ which is a sample path from a Gaussian process with zero mean whose covariance $\sigma(x,y)=\EE[\eta(x)\eta(y)]$ is one of {\bf (1)} Laplace $ e^{-\|x-y\|_1}$ or {\bf (2)} Squared Exponential $ e^{-\|x-y\|_2^2}$ or {\bf (3)} Mat\'{e}rn with $\nu=5/2$: $C_{5/2}(x-y) = \left(1+\|x-y\|_2+\|x-y\|_2^2/3\right)e^{-\|x-y\|_2}$.

\begin{table}[h]
	\centering
	\caption{Test set RMSE for estimating GPs.} \label{fig-synthetic}
	\begin{center}
		\scalebox{0.65}{
			\begin{tabular}{ p{2cm}|p{1.0cm}||p{1.4cm}|p{2.3cm}|p{1.4cm}|p{1.2cm}  }
				
				{\bf Covariance of GP $\sigma(\cdot)$} & {\bf Dim.} &{\bf Laplace} &{\bf Squared exponential} & {\bf Mat\'{e}rn $\nu=5/2$}&{\bf WLSH $k_{f,p}(\cdot)$}\\
				\hline
				& $30$    &0.128&   0.086&   0.093&   0.088\\
				$e^{-{\|\cdot\|_2^2}}$&   $5$  & 0.043   &0.031&   0.032&   0.029\\
				\hline
				&$30$ & 0.385&  0.479&   0.481&   0.438\\
				$e^{-{\|\cdot\|_1}}$  &   $5$  & 0.103&0.230&   0.226&   0.166\\
				\hline
				& $30$  & 0.335   &0.291&   0.299&   0.294\\
				$C_{5/2}(\cdot)$& $5$ & 0.013&0.016&   0.013&   0.012\\
				
			\end{tabular}
		}
	\end{center}
\end{table}

We run this experiment for two settings: Low-dimensional data ($d=5$) and high-dimensional data ($d=30$). In each case, we sample $\eta(\x)$ uniformly over $[0,1]^d$ at $4000$ points. We use $3000$ samples for training the estimator and $1000$ samples for testing. Then we estimate the function value on test data using KRR on the training data. We run KRR with various kernel function choices and show that our WLSH kernel (Definition \ref{def:gen-kernel}) performs as well as the most popular kernel functions such as Mat\'{e}rn $\nu=5/2$, Squared Exponential, and Laplace. The WLSH kernel we used for this experiment has the bucket-shaping function $f(x) = \left(\rect * \rect_{1/4} * \rect_{1/4} \right)(2x)$. This function has a continuous derivative and a bounded second derivative. Moreover, we chose the PDF to be $p(w) = \frac{w^6}{5!}e^{-w}$. Thus, the resulting kernel has bounded mixed partial derivatives of up to the fourth order. This is the same type of smoothness as the Mat\'{e}rn kernel with $\nu=5/2$, but in our experiments (see Table \ref{fig-synthetic}), we outperform Mat\'{e}rn kernel on all datasets. Moreover, in the low-dimensional setting $d=5$, we outperform the Squared Exponential kernel.

\paragraph{Large scale KRR on real data:}
Our second set of experiments shows that the WLSH estimator speeds up KRR on standard real data sets by orders of magnitude compared to exact KRR and has better accuracy than the popular Random Fourier Features (RFF) \cite{DBLP:conf/nips/RahimiR07}. We evaluate the following methods:

{\bf Exact KRR} using exact kernel computation for various shift-invariant kernel functions.

{\bf Random Fourier Features (RFF)} for approximating the squared exponential kernel. The kernel value is approximated by $ \widetilde{k}(x^i,x^j) = \phi(x^i)^\top \phi(x^j)$, where $\phi:\RR^d \rightarrow \RR^D$ is a random mapping and $D$ denotes the number of random features.

{\bf WLSH} using the procedure explained in Section~\ref{sec:approx-KRR-lsh} with bucket-shaping function $f(\cdot) = \rect (\cdot)$ and PDF $p(w) = w e^{-w}$. 

\paragraph{Results:} The Root Mean Square Error (RMSE) of different methods on the test data set as well as the time to train the regressors are presented in Table \ref{table:realdata}.\footnote{All methods require solving a linear system which we do using the Conjugate Gradient method. The most expensive computation in each iteration is multiplying a vector by the (approximate) kernel matrix. This takes time $\approx n^2$ for exact methods and time $\approx n D$ for RFF, where $D$ is the number features, and time $\approx n m$ for WLSH method, where $m$ is the number of LSH functions.}\footnote{Since RFF and LSH method are randomized, we ran the experiments with 5 different random seeds and reported the avg. RMSE and running time in Table \ref{table:realdata}.}
One can see the LSH method is as accurate as the exact KRR on the first two datasets while its running time is at least 3x faster. On the last two datasets, the exact method did not converge to a solution within $12$ hours but the approximate methods could run pretty fast. The LSH method outperforms the accuracy of RFF on the large scale datasets. RFF requires a large number of features $D$ in order to be accurate which leads to a huge memory usage therefore on the large scale datasets where we have a memory constraint and cannot use large $D$, RFF's performance deteriorate. The running time of RFF is better than LSH method because its implementation can be optimized but when data is large and there is a memory constraint, RFF performs worse than LSH.

\begin{table}[h]
	\centering
	\caption{Test set RMSE of different regression methods together with the running times.} \label{table:realdata}
	\begin{center}
		\scalebox{0.63}{
			\begin{tabular}{ p{3.1cm}||p{1.4cm}|p{1.5cm}|p{1.4cm}|p{1.5cm}|p{1.3cm}  }
				
				{\bf Dataset} & {\bf Exact Laplace} &{\bf Exact Squared Exp.} &{\bf Exact Mat\'{e}rn $\nu=\frac{5}{2}$} & {\bf Random Fourier Features} &{\bf WLSH}\\
				\hline
				Wine Quality& 0.684 &0.728 &  0.709 & 0.737   & 0.701   \\
				$d=11$& 28 sec & 30 sec & 1 min & 2 sec  & 5 sec  \\
				size: $6497$&  &  &  & D=7000 & m=450  \\
				\hline
				Insurance Company & 0.231 & 0.231 & 0.231 & 0.231  & 0.232  \\
				$d=85$ & 3 min & 3 min & 5.5 min & 3 sec  & 2 sec  \\
				size: $9822$ &  &  &  & D=5000  & m=250  \\
				\hline
				CT Slices Location & N/A & N/A & N/A & 4.10  & 3.45  \\
				$d=384$ & \textgreater 12 hrs & \textgreater  12 hrs & \textgreater 12 hrs & 0.5 min  & 1 min  \\
				size: $53500$ & &  & & D=3500  & m=50  \\
				\hline
				Forest Cover& N/A & N/A & N/A & 0.968  & 0.720  \\
				$d=54$ & \textgreater 12 hrs & \textgreater 12 hrs & \textgreater 12 hrs & 6 min  & 7.5 min\\
				size: $581012$ &  &  &  & D=1500  & m=50\\
				
			\end{tabular}
		}
	\end{center}
\end{table}
We use the following standard large-scale regression datasets for Gaussian process regression:
The first dataset we used for regression is the {\bf Wine Quality} dataset. The dimensionality of this dataset is $d=11$. We used $4000$ samples for training the regressors and $2497$ samples for testing the accuracy.
The second dataset is {\bf Insurance Company} dataset. The dimensionality of this dataset is $d=85$. We used $5822$ samples for training the regressors and $4000$ samples for testing the performance of estimators.
The third dataset is the {\bf Location of CT Slices}. The dimensionality of this dataset is rather high $d=384$. We used $35000$ samples for training the regressors and $18500$ samples for testing their performance. 
The last dataset is the {\bf Forest Cover} dataset. The dimensionality of this dataset is $d=54$. We used $500000$ samples for training the regressors and $81012$ samples for testing the regressors.

\clearpage
\appendix
\section{Basic lemmas and claims}

The convolution theorem shows that the Fourier transform of the convolution of
two functions is simply the product of the individual Fourier transforms:
\begin{claim}[Convolution Theorem] \label{claim:convthm}
	Given functions $f:\RR^d\to\CC$ and $g:\RR^d\to\CC$ whose convolution is $h =
	f*g$, we have
	\[
	\wh{h}(\bs{\xi}) = \wh{f}(\bs{\xi})\cdot\wh{g}(\bs{\xi})
	\]
	for all $\bs{\xi}\in\RR^d$.
\end{claim}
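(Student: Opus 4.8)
The plan is to substitute the definitions of the convolution and of the Fourier transform directly and reduce the claim to an interchange of the order of integration followed by a single change of variables. First I would write, using the definition of convolution from the preliminaries,
\[
\wh{h}(\vxi) = \int_{\RR^d} (f*g)(\veta)\, e^{-2\pi i \veta^\top \vxi}\, d\veta = \int_{\RR^d} \left( \int_{\RR^d} f(\bv{t})\, g(\veta - \bv{t})\, d\bv{t} \right) e^{-2\pi i \veta^\top \vxi}\, d\veta.
\]

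The central step is to exchange the two integrals so that the $\bv{t}$-integration is performed last, giving
\[
\wh{h}(\vxi) = \int_{\RR^d} f(\bv{t}) \left( \int_{\RR^d} g(\veta - \bv{t})\, e^{-2\pi i \veta^\top \vxi}\, d\veta \right) d\bv{t}.
\]
Then, for each fixed $\bv{t}$, I would apply the change of variables $\bv{u} = \veta - \bv{t}$ (so that $d\veta = d\bv{u}$ and $\veta = \bv{u}+\bv{t}$) to the inner integral. This splits the exponential as $e^{-2\pi i (\bv{u}+\bv{t})^\top \vxi} = e^{-2\pi i \bv{t}^\top \vxi}\, e^{-2\pi i \bv{u}^\top \vxi}$, so the inner integral becomes $e^{-2\pi i \bv{t}^\top \vxi}\, \wh{g}(\vxi)$. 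Substituting this back and pulling the constant factor $\wh{g}(\vxi)$ out of the remaining $\bv{t}$-integral leaves exactly
\[
\wh{h}(\vxi) = \wh{g}(\vxi) \cdot \int_{\RR^d} f(\bv{t})\, e^{-2\pi i \bv{t}^\top \vxi}\, d\bv{t} = \wh{f}(\vxi)\, \wh{g}(\vxi),
\]
which is the claim.

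The one point that requires care, and the main obstacle, is justifying the interchange of integrals by Fubini's theorem; this is also where an integrability hypothesis must enter. Assuming $f, g \in L_1(\RR^d)$, the integrand $(\bv{t}, \veta) \mapsto f(\bv{t})\, g(\veta - \bv{t})\, e^{-2\pi i \veta^\top \vxi}$ is absolutely integrable on $\RR^d \times \RR^d$, since its modulus is $|f(\bv{t})|\,|g(\veta-\bv{t})|$ and the same change of variables gives $\int_{\RR^d}\int_{\RR^d} |f(\bv{t})|\,|g(\veta - \bv{t})|\, d\veta\, d\bv{t} = \|f\|_1 \|g\|_1 < \infty$. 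Fubini then licenses the swap, and the remaining computation is routine. In every application of this claim within the paper the relevant functions are bounded with compact support (indeed $f$ is supported on $[-1/2,1/2]$), so absolute integrability is immediate and no further hypotheses are needed.
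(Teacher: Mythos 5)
Your proof is correct and is the canonical Fubini argument for the convolution theorem; the paper states this claim as a known fact without any proof, and your argument supplies exactly the standard justification, with the $L_1$ hypothesis matching the integrability assumption already built into the paper's definition of the Fourier transform. One small caveat on your closing remark: the paper also invokes this claim with Dirac delta functions (e.g.\ in Claim~\ref{cl:shift}), where the identity holds in the distributional sense rather than via your $L_1$ computation, but this does not affect the validity of your proof of the claim as stated.
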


It is not hard to see that the Fourier transform of a $\delta_d$ is the constant function which is $1$ everywhere:
\[
(\Fc\delta_d)(\vxi) = \int_{\RR^d} e^{-2\pi i \t^\top \vxi}\cdot \delta_d(\t)\,d\t
= e^{-2\pi i \cdot 0^\top \cdot \vxi} = 1
\]
for all $\vxi$. Similarly, the Fourier transform of a shifted delta function is as follows:
\begin{align*}
	(\Fc\delta(\cdot\, -\, \a))(\vxi) &= \int_{\RR^d} e^{-2\pi i \t^\top \vxi}\cdot \delta_d(\t-\a)\,d\t\\
 &= e^{-2\pi i \a^\top\vxi}.
\end{align*}
Thus, by the convolution theorem, we obtain the following identity:
\begin{claim}\label{cl:shift}
	Given a function $f:\RR^d\to\CC$, we have
	\begin{align*}
		(\Fc f(\cdot\, - \, \a))(\vxi) &= (\Fc (f * \delta_d(\cdot - \a)))(\vxi)\\
		&= \hat{f}(\vxi)
	\cdot e^{-2\pi i \a^\top \vxi}.
	\end{align*}
\end{claim}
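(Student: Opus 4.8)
The plan is to reduce the claim to the convolution theorem (Claim~\ref{claim:convthm}), exactly as the surrounding text anticipates. First I would record the elementary fact that translating $f$ by $\a$ is the same as convolving $f$ against a Dirac delta centered at $\a$; that is, $f(\cdot - \a) = f * \delta_d(\cdot - \a)$. This is just the sifting property of the delta distribution: for any $\veta \in \RR^d$,
\[
(f * \delta_d(\cdot - \a))(\veta) = \int_{\RR^d} f(\t)\,\delta_d(\veta - \t - \a)\,d\t = f(\veta - \a).
\]

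With the translate rewritten as a convolution, I would apply Claim~\ref{claim:convthm} with $g = \delta_d(\cdot - \a)$, which gives $\Fc[f(\cdot - \a)](\vxi) = \hat{f}(\vxi)\cdot \Fc[\delta_d(\cdot - \a)](\vxi)$. The second factor has already been computed in the display immediately preceding the claim, where the sifting property yields $\Fc[\delta_d(\cdot - \a)](\vxi) = e^{-2\pi i \a^\top \vxi}$. Substituting this in finishes the derivation: $\Fc[f(\cdot - \a)](\vxi) = \hat{f}(\vxi)\, e^{-2\pi i \a^\top \vxi}$, which is the asserted identity.

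The only subtlety — and the thing I would be most careful about — is that $\delta_d$ is a distribution rather than an honest $L_1(\RR^d)$ function, so both the convolution theorem and the sifting identity are being invoked in a formal, distributional sense. At the level of rigor adopted in this paper these manipulations are standard and present no real obstacle. If one instead wanted a fully self-contained argument avoiding distributions altogether, the cleanest route is a direct change of variables in the defining integral: writing $\Fc[f(\cdot - \a)](\vxi) = \int_{\RR^d} f(\t - \a)\, e^{-2\pi i \t^\top \vxi}\,d\t$ and substituting $\u = \t - \a$ factors out $e^{-2\pi i \a^\top \vxi}$ and leaves precisely $\hat{f}(\vxi)$. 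Since the narrative here is built around the convolution theorem, I would present the convolution-based version, noting the change-of-variables computation as the elementary alternative.
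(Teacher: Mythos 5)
Your argument is correct and matches the paper's own derivation exactly: the claim is presented there as an immediate consequence of the convolution theorem (Claim~\ref{claim:convthm}) applied to $f * \delta_d(\cdot - \a)$, together with the Fourier transform of the shifted delta computed in the preceding display. Your remarks on the distributional nature of $\delta_d$ and the change-of-variables alternative are sound but go beyond what the paper records.
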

Similarly,
\begin{claim}
	Given a function $f:\RR^d\to\CC$, we have
	\[
	(\Fc (f(\x)\cdot e^{2\pi i \a^\top \x}))(\vxi) = \hat{f} (\vxi - \a).
	\]
\end{claim}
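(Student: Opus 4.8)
The plan is to establish this modulation (frequency-shift) identity by direct substitution into the definition of the Fourier transform; the essential observation is that the modulating factor $e^{2\pi i \a^\top \x}$ combines with the analysis kernel $e^{-2\pi i \t^\top \vxi}$ to produce a kernel whose frequency argument is shifted by $\a$.

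First I would expand the left-hand side using the definition of $\Fc$ given in the preliminaries:
\[
\bigl(\Fc\bigl(f(\x)\, e^{2\pi i \a^\top \x}\bigr)\bigr)(\vxi) = \int_{\RR^d} f(\t)\, e^{2\pi i \a^\top \t}\, e^{-2\pi i \t^\top \vxi}\, d\t .
\]
Next, using $\a^\top \t = \t^\top \a$, I would merge the two exponentials into the single factor $e^{-2\pi i \t^\top(\vxi - \a)}$, so that the integral becomes $\int_{\RR^d} f(\t)\, e^{-2\pi i \t^\top(\vxi - \a)}\, d\t$. Finally, I would recognize this last expression as precisely the definition of $\hat{f}$ evaluated at the shifted frequency $\vxi - \a$, which gives $\hat{f}(\vxi - \a)$ and completes the argument.

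The step that needs the most care (though it is minor) is justifying that $f(\x)\, e^{2\pi i \a^\top \x}$ is itself an admissible input to the Fourier transform, i.e.\ that it lies in $L_1(\RR^d)$; this is immediate because $\bigl|e^{2\pi i \a^\top \x}\bigr| = 1$, so the modulated function has the same $L_1$ norm as $f$ and the integral above converges absolutely, legitimizing the manipulation of the integrand. I would also remark that this identity is the exact dual of the spatial-shift theorem (Claim~\ref{cl:shift}): there a translation of $f$ in space produced a modulation in frequency, whereas here a modulation of $f$ in space produces a translation in frequency. Consequently an alternative derivation is available by applying Claim~\ref{cl:shift} to $\hat{f}$ and invoking Fourier inversion, but the direct computation above is the cleanest route and I expect no genuine obstacle beyond the routine convergence check.
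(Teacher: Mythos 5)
Your proposal is correct, and it matches what the paper intends: the paper states this claim without a written proof, offering only the word ``Similarly'' after deriving the spatial-shift identity (Claim~\ref{cl:shift}), and the implied argument is precisely your direct substitution --- merge $e^{2\pi i \a^\top \t}$ with the kernel $e^{-2\pi i \t^\top \vxi}$ to get $e^{-2\pi i \t^\top(\vxi-\a)}$ and recognize $\hat{f}(\vxi-\a)$. Your added observations (the $L_1$ admissibility check via $|e^{2\pi i \a^\top \x}|=1$, and the duality with the spatial-shift theorem) are accurate and only strengthen the write-up.
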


\begin{claim}\label{claim:stretch-variable}
	For any function $g: \RR^d \rightarrow \RR$, and any $\w \in \RR_{+}^d$ the following holds,
	$$\Fc \left[ g(\frac{\cdot}{\w}) \right](\vxi) =\left( \prod_{l=1}^d w_l\right) \wh{g}(\w \vxi).$$
\end{claim}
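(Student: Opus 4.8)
The plan is to prove this by the standard multidimensional change-of-variables argument that converts an anisotropic dilation of the argument into a reciprocal dilation in the frequency variable. Starting from the definition of the Fourier transform given in the Preliminaries, I would write
$$\Fc\left[g(\cdot/\w)\right](\vxi) = \int_{\RR^d} g(\t/\w)\, e^{-2\pi i \t^\top \vxi}\, d\t,$$
where, by the notation convention for $\t/\w$, the integrand evaluates $g$ at the coordinatewise-rescaled point $(t_1/w_1, \dots, t_d/w_d)^\top$.

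Next I would substitute $\u = \t/\w$, i.e. $t_l = w_l u_l$ for each $l \in [d]$. Since every $w_l > 0$ (as $\w \in \RR_{+}^d$), this is an orientation-preserving diffeomorphism of $\RR^d$ whose Jacobian determinant is the constant $\prod_{l=1}^d w_l$, so $d\t = \left(\prod_{l=1}^d w_l\right) d\u$ with no absolute value needed. The key observation is that the phase transforms cleanly: $\t^\top \vxi = \sum_{l=1}^d t_l \xi_l = \sum_{l=1}^d u_l (w_l \xi_l) = \u^\top(\w\vxi)$, where $\w\vxi = (w_1\xi_1, \dots, w_d\xi_d)^\top$ according to the notation introduced earlier.

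Pulling the constant Jacobian outside the integral then yields
$$\Fc\left[g(\cdot/\w)\right](\vxi) = \left(\prod_{l=1}^d w_l\right) \int_{\RR^d} g(\u)\, e^{-2\pi i \u^\top(\w\vxi)}\, d\u = \left(\prod_{l=1}^d w_l\right)\wh{g}(\w\vxi),$$
which is exactly the claimed identity. There is no substantial obstacle; the only points requiring care are keeping the per-coordinate scalings consistent so that the argument of $\wh{g}$ comes out as $\w\vxi$ rather than $\vxi/\w$, and observing that the positivity of each $w_l$ is what makes the Jacobian factor equal to $\prod_{l=1}^d w_l$ directly.
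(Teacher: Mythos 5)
Your proof is correct: the paper states Claim~\ref{claim:stretch-variable} without proof as a standard Fourier scaling fact, and your change-of-variables argument ($\u = \t/\w$ with Jacobian $\prod_{l=1}^d w_l$ and the phase identity $\t^\top\vxi = \u^\top(\w\vxi)$) is exactly the canonical verification one would supply. You also correctly flag the two points of care --- positivity of the $w_l$ making the Jacobian factor appear without absolute values, and the rescaled frequency coming out as $\w\vxi$ --- so nothing is missing.
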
 

Finally, we introduce a useful function known as the \emph{Dirac comb function}:
\begin{defn}
	For any ${\bf T} \in \RR_{+}^d$ the d-dimensional \emph{Dirac comb function} with period ${\bf T}$ is defined as
	$f$ satisfying
	\[
	f(\x) =  \sum_{\mathbf{j} \in \ZZ^d} \delta(\x - \mathbf{j}  {\bf T}),
	\]
	where $\mathbf{j}  {\bf T} = (j_1T_1, j_2T_2, \cdots j_dT_d)^\top$.
\end{defn}
We use the Dirac comb function in our lower bound constructions.
It is a standard fact that the Fourier transform of a Dirac comb function is
another Dirac comb function which is scaled and has the inverse period:
\begin{claim}\label{diracF}
	Let
	\[
	f(\x) = \sum_{\mathbf{j} \in \ZZ^d} \delta(\x - \mathbf{j}  {\bf T})
	\]
	be the d-dimensional Dirac comb function with period ${\bf T}$. Then,
	\[
	(\Fc f)(\vxi) = \prod_{l=1}^{d}\frac{1}{T_l} \cdot \sum_{\mathbf{j} \in \ZZ^d} \delta\left(\vxi - \frac{\mathbf{j}}{{\bf T}}\right),
	\]
	where $\frac{\mathbf{j}}{{\bf T}} = (j_1/T_1, j_2/T_2, \dots, j_d/T_d)^\top$.
\end{claim}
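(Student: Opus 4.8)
The plan is to prove the identity in the sense of tempered distributions by reducing first to one dimension and then to unit period, and finally invoking the Fourier-series expansion of the comb together with the modulation identity proved above.

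First I would exploit separability. The $d$-dimensional comb factorizes as $f(\x) = \prod_{l=1}^d g_{T_l}(x_l)$, where $g_T(x) = \sum_{j\in\ZZ}\delta(x - jT)$ is the $1$-dimensional comb of period $T$; this holds because $\delta_d(\x - \mathbf{j}{\bf T}) = \prod_l \delta(x_l - j_l T_l)$ and the sum over $\mathbf{j}\in\ZZ^d$ splits into a product of sums over $j_l\in\ZZ$. Since the Fourier transform of a tensor product of one-variable functions is the tensor product of their individual transforms, $(\Fc f)(\vxi) = \prod_{l=1}^d (\Fc g_{T_l})(\xi_l)$, so it suffices to establish the claim in dimension one.

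Next I would remove the dependence on the period by a scaling argument. Using $\delta(x/T - j) = T\,\delta(x - jT)$ one checks $g_T(x) = \frac1T\, g_1(x/T)$, where $g_1$ is the unit-period comb. Applying Claim~\ref{claim:stretch-variable} in one variable (with $w = T$) gives $(\Fc g_T)(\xi) = \frac1T\cdot T\,\wh{g_1}(T\xi) = \wh{g_1}(T\xi)$. Thus everything reduces to the unit-period statement $\wh{g_1}(\eta) = \sum_{n\in\ZZ}\delta(\eta - n)$: once that is known, $\wh{g_1}(T\xi) = \sum_n \delta(T\xi - n) = \frac1T\sum_n \delta(\xi - n/T)$, and recombining the $d$ coordinates yields exactly $\prod_l \frac{1}{T_l}\sum_{\mathbf{j}\in\ZZ^d}\delta(\vxi - \mathbf{j}/{\bf T})$. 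For the unit case I would expand the $1$-periodic distribution $g_1$ in a Fourier series; since each period contributes a single unit mass at an integer, its Fourier coefficients are all equal to $1$, giving $g_1(x) = \sum_{n\in\ZZ} e^{2\pi i n x}$. Transforming term by term, the modulation identity proved above (with $f\equiv 1$, whose transform is $\delta$ because $\Fc\delta = 1$ and the Fourier transform is an involution up to reflection of an even distribution) yields $\Fc[e^{2\pi i n\,\cdot}](\eta) = \delta(\eta - n)$, so $\wh{g_1}(\eta) = \sum_n \delta(\eta - n)$, as required.

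The main obstacle is rigor rather than computation: $g_1$ is a tempered distribution, not a function, so the Fourier-series expansion and the term-by-term Fourier transform must be interpreted distributionally and justified by pairing against Schwartz test functions (this step is precisely the Poisson summation formula). At the level of the surrounding basic-lemmas appendix the formal derivation above suffices, and the separability and scaling reductions are routine.
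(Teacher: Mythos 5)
Your proof is correct. There is nothing in the paper to compare it against: Claim~\ref{diracF} is stated in the basic-lemmas appendix as a standard fact with no proof given. Your derivation is the standard one and each step checks out --- the separability reduction (the comb and the Fourier transform both factor across coordinates), the scaling step $g_T = \tfrac{1}{T}\,g_1(\cdot/T)$ combined with Claim~\ref{claim:stretch-variable} to reduce to unit period, and the unit-period case via the Fourier-series expansion $g_1(x)=\sum_{n\in\ZZ}e^{2\pi i n x}$ transformed termwise by the modulation identity with $\Fc 1 = \delta$ --- and you correctly identify that the only genuinely nontrivial point is the distributional justification of that expansion (Poisson summation), which is an appropriate level of rigor for this appendix.
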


\begin{claim}[Nyquist-Shannon]\label{claim:nyquist}
	Given a function $f:\RR^d\to\CC$, we have:
	\begin{align*}
	&\Fc \left( f(\cdot)  \sum_{\mathbf{j}\in\ZZ^d}
	\delta_d(\cdot-{\bf w} \cdot \mathbf{j}) \right)(\bs{\xi})\\ 
	&\qquad= \left(\prod_{i=1}^{d}w_i^{-1}\right) \sum_{\mathbf{j}\in\ZZ^d}  \Fc(f)(\bs{\xi}-\mathbf{j}/{\bf w}).
	\end{align*}
\end{claim}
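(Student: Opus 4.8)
The plan is to recognize the summation $\sum_{\mathbf{j}\in\ZZ^d}\delta_d(\cdot - \w\cdot\mathbf{j})$ as the $d$-dimensional Dirac comb with period $\mathbf{T}=\w$, and then to compute the Fourier transform of the product of $f$ with this comb by passing to the frequency domain, where pointwise multiplication turns into convolution. First I would apply Claim~\ref{diracF} with $\mathbf{T}=\w$ to obtain the Fourier transform of the comb, namely the rescaled comb with reciprocal period $\left(\prod_{i=1}^d w_i^{-1}\right)\sum_{\mathbf{j}\in\ZZ^d}\delta_d(\vxi - \mathbf{j}/\w)$.

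The central step is the multiplication--convolution duality: the Fourier transform of a product is the convolution of the individual transforms, $\Fc(f\cdot g)=\wh{f} * \wh{g}$. This is the dual of the convolution theorem recorded in Claim~\ref{claim:convthm}, and under the symmetric transform convention used here it holds with no extra constant; I would justify it by substituting the inversion formula $f(\t)=\int\wh{f}(\veta)e^{2\pi i\veta^\top\t}\,d\veta$ into $\Fc(fg)(\vxi)=\int f(\t)g(\t)e^{-2\pi i\t^\top\vxi}\,d\t$ and recognizing the inner integral as $\wh{g}(\vxi-\veta)$. Taking $g$ to be the comb, I would therefore write $\Fc(f\cdot g)(\vxi)=(\wh{f} * \wh{g})(\vxi)=\int_{\RR^d}\wh{f}(\vxi-\u)\,\wh{g}(\u)\,d\u$.

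Substituting the expression for $\wh{g}$ from the first step and invoking the sifting property of the Dirac delta then closes the argument: integrating $\wh{f}(\vxi-\u)$ against $\sum_{\mathbf{j}}\delta_d(\u-\mathbf{j}/\w)$ collapses to $\sum_{\mathbf{j}}\wh{f}(\vxi-\mathbf{j}/\w)$, and the scalar prefactor $\prod_i w_i^{-1}$ is carried through unchanged, giving exactly the claimed identity.

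The main obstacle is not any hard estimate but the rigor of the distributional manipulations: the Dirac comb is a tempered distribution rather than a function in $L_1(\RR^d)$, so both its Fourier transform and the multiplication--convolution duality must be read in the distributional sense, tested against Schwartz functions. I would either invoke the standard theory of tempered distributions to legitimize the formal identity $\Fc(f\cdot g)=\wh{f}*\wh{g}$ together with the sifting property, or simply note that the equality is to be interpreted distributionally, which is precisely how it is used later in the lower-bound construction.
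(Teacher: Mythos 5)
Your proof is correct: the paper states Claim~\ref{claim:nyquist} without proof as a standard fact, and your derivation---Claim~\ref{diracF} for the Fourier transform of the Dirac comb with period $\w$, the multiplication--convolution duality dual to Claim~\ref{claim:convthm}, and the sifting property to collapse the convolution into the shifted sum with prefactor $\prod_{i=1}^{d} w_i^{-1}$---is exactly the standard argument the paper implicitly relies on, built from the tools it has already set up. Your remark that the identity must be read in the sense of tempered distributions (the comb not being in $L_1(\RR^d)$) correctly addresses the only point of rigor the formal manipulation leaves open.
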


\section{Omitted claims and proofs from Section~\ref{sec:lsh-exp}}

We use the following basic claim about Fourier transform of Nyquist-Shannon sampling of functions.
\begin{claim}\label{cl:basic}
	For any ${\bf w} \in \RR_{+}^d$, every sequences $\{c_{\bf j}\}_{{\bf j} \in \ZZ^d}$ and $\{b_{\bf j}\}_{{\bf j} \in \ZZ^d}$ such that $\sum_{{\bf j} \in \ZZ^d} |c_{\bf j}|^2<\infty$ and $\sum_{{\bf j} \in \ZZ^d} |b_{\bf j}|^2<\infty$, if $g({\cdot})=\sum_{{\bf j} \in \ZZ^d} c_{\bf j}\delta(\cdot - {\bf j}  {\w}) $ and $h({\cdot})=\sum_{{\bf j} \in \ZZ^d} b_{\bf j}\delta(\cdot - {\bf j}  {\w}) $, then the following conditions hold.
	\begin{description}
		\item[(1)] $\wh{g}({\vxi})=\sum_{{\bf j} \in \ZZ^d} c_{\bf j} \exp(-2\pi i \vxi^\top ({\bf j}{\w} ))$ for every ${\vxi}\in \RR^d$.
		\item[(2)] $c_{\bf j}=\left(\prod_{l=1}^{d} w_l\right) \int_{[0,1/{\bf w}]} \wh{g}(\vxi) \exp(2\pi i ({\bf j}{\w})^\top \vxi )\, d\vxi$ for every ${\bf j} \in \ZZ^d$. 
		\item[(3)] $\sum_{{\bf j} \in \ZZ^d} c_{\bf j}^* b_{\bf j} = \left(\prod_{l=1}^{d} w_l\right) \int_{[0,1/{\bf w}]} \wh{g}(\vxi)^* \wh{h}(\vxi) \, d\vxi$.
	\end{description}
\end{claim}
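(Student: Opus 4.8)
The plan is to recognize Claim~\ref{cl:basic} as the standard machinery of $d$-dimensional Fourier series adapted to the lattice $\ZZ^d\w$: part \textbf{(1)} is the Fourier-series expansion of the (distributional) transform $\wh g$, part \textbf{(2)} recovers the coefficients $c_{\bf j}$ by integrating $\wh g$ against the conjugate exponential over one fundamental domain $[0,1/\w]$, and part \textbf{(3)} is a Parseval identity obtained by combining (1) and (2). Throughout, the hypotheses $\sum_{\bf j}|c_{\bf j}|^2<\infty$ and $\sum_{\bf j}|b_{\bf j}|^2<\infty$ are exactly what guarantee that $\wh g$ and $\wh h$ are well-defined $1/\w$-periodic functions in $L^2([0,1/\w])$, so the sum--integral interchanges below are legitimate.

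For part (1) I would apply linearity of the Fourier transform term by term together with the already-established formula $(\Fc\,\delta(\cdot-\a))(\vxi)=e^{-2\pi i\a^\top\vxi}$ (the shifted-delta computation preceding Claim~\ref{cl:shift}), taking $\a={\bf j}\w$. Summing over ${\bf j}$ yields $\wh g(\vxi)=\sum_{\bf j} c_{\bf j}\exp(-2\pi i\vxi^\top({\bf j}\w))$, with convergence in $L^2$ over a period by Riesz--Fischer.

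The computational heart is part (2), which reduces to orthogonality of the exponentials $\{\vxi\mapsto\exp(-2\pi i\vxi^\top({\bf j}\w))\}_{{\bf j}\in\ZZ^d}$ over the box $[0,1/\w]$. I would compute the key integral as a product over coordinates,
\[
\int_{[0,1/\w]}\exp\!\big(2\pi i\,\vxi^\top(({\bf j}-{\bf k})\w)\big)\,d\vxi=\prod_{l=1}^d\int_0^{1/w_l}\exp\!\big(2\pi i\,\xi_l(j_l-k_l)w_l\big)\,d\xi_l,
\]
and observe that each one-dimensional factor equals $1/w_l$ when $j_l=k_l$ and $0$ otherwise, since the exponent runs through an integer number of full periods on $[0,1/w_l]$. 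Substituting the series from (1) into $\int_{[0,1/\w]}\wh g(\vxi)\exp(2\pi i({\bf j}\w)^\top\vxi)\,d\vxi$ and exchanging the sum with the integral, only the term ${\bf k}={\bf j}$ survives, leaving $c_{\bf j}\prod_l w_l^{-1}$; multiplying through by $\prod_l w_l$ gives the claimed identity.

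Finally, part (3) follows by expanding $\wh g(\vxi)^*$ via the conjugate of the series in (1), interchanging summation and integration, and applying part (2) to $h$ coordinate by coordinate, so that $\int_{[0,1/\w]}\wh h(\vxi)\exp(2\pi i({\bf j}\w)^\top\vxi)\,d\vxi=b_{\bf j}\prod_l w_l^{-1}$. This produces $\int_{[0,1/\w]}\wh g(\vxi)^*\wh h(\vxi)\,d\vxi=\big(\prod_l w_l^{-1}\big)\sum_{\bf j}c_{\bf j}^*b_{\bf j}$, which rearranges to the stated formula. The only genuine subtlety --- the step I would treat most carefully --- is that $g$ and $h$ are tempered distributions rather than functions, so one must read $\wh g,\wh h$ as the associated periodic $L^2$ functions and justify the interchanges using the $L^2$ convergence of the series together with the boundedness of the test exponentials on the compact domain $[0,1/\w]$ (Cauchy--Schwarz); once this is set up, all three parts are immediate consequences of one-variable Fourier-series orthogonality applied in product form.
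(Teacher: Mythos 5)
Your proof is correct and follows essentially the same route as the paper: term-by-term Fourier transform of the delta train for \textbf{(1)}, orthogonality of the exponentials $\exp(-2\pi i\vxi^\top({\bf j}\w))$ over the fundamental box $[0,1/\w]$ for \textbf{(2)}, and series expansion combined with orthogonality for \textbf{(3)} (the paper expands both series into a double sum and kills the off-diagonal terms directly, whereas you invoke part \textbf{(2)} for the inner integral --- a purely cosmetic difference). Your added care about the distributional reading of $\wh{g},\wh{h}$ and the $L^2$ justification of the sum--integral interchanges supplies rigor that the paper's formal computation leaves implicit.
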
  

\begin{proof}
	
	We have 
	\begin{equation*}
	\begin{split}
	\wh{g}(\vxi)&=\int_{\RR^d} \left(\sum_{{\bf j} \in \ZZ^d} c_{\bf j} \delta(\x-{\bf j}{\w})\right) \exp(-2\pi i \vxi^\top \x)\, d\x\\
	&=\sum_{{\bf j} \in \ZZ^d} c_{\bf j} \int_{\RR^d}  \delta(\x-{\bf j}{\w}) \exp(-2\pi i \vxi^\top \x) \, d\x\\
	&=\sum_{{\bf j} \in \ZZ^d} c_{\bf j}  \exp(-2\pi i  \vxi^\top( {\w}{\bf j})),\\
	\end{split}
	\end{equation*}
	which gives the first claim. The second claim can be verified directly:
	\begin{equation*}
	\begin{split}
	&\int_{[0,1/{\bf w}]} \wh{g}(\vxi) \exp(2\pi i ({\w}{\bf j})^\top \vxi) \,  d\xi\\
		&=\int_{[0,1/{\bf w}]} \sum_{{\bf l} \in \ZZ^d} c_{\bf l} e^{2\pi i ({\w}{\bf j})^\top \vxi - 2\pi i \vxi^\top ({\w}{\bf l})} \, d\vxi\\
		&=\sum_{{\bf l} \in \ZZ^d}  c_{\bf l} \int_{[0,1/w]^d} e^{2\pi i ({\w}{\bf j})^\top \vxi - 2\pi i \vxi^\top ({\w}{\bf l})}  \, d\vxi \\
	&=\left(\prod_{i=1}^{d}\frac1{w_i}\right) c_{\bf j},
	\end{split}
	\end{equation*}
	proving the second claim.
	
	For the third claim we have
	\begin{equation*}
	\begin{split}
	&\int_{[0,1/{\bf w}]} \wh{g}(\vxi)^* \wh{h}(\vxi) \, d\xi\\
	&=\int_{[0,1/{\bf w}]} \sum_{{\bf j} \in \ZZ^d}\sum_{{\bf l} \in \ZZ^d} c_{\bf j}^* b_{\bf l}  \exp(-2\pi i  \vxi^\top ({\w}({\bf l-j}))) \, d\vxi\\
	&=\sum_{{\bf j} \in \ZZ^d}\sum_{{\bf l} \in \ZZ^d} c_{\bf j}^* b_{\bf l} \int_{[0,1/{\w}]}  \exp(-2\pi i  \vxi^\top ({\w}({\bf l-j}))) \, d\vxi\\
	&=\left(\prod_{i=1}^{d}\frac1{w_i}\right) \sum_{{\bf j} \in \ZZ^d} c_{\bf j}^* b_{\bf j},
	\end{split}
	\end{equation*}
	as required.
\end{proof}

The properties of the WLSH estimator are best understood using the means of Fourier transform. Therefore, we express the WLSH estimator in the Fourier domain. The following lemma expresses the WLSH estimator in the spectral domain.

\begin{lem}[Spectral Representation of WLSH Estimator] \label{lem:spectral-quadratic-form}
	For any $\w \in \RR_{+}^d$, any ${\z}\in [0, {\w}]$, any $\x,\y \in \RR^d$, if the WLSH estimator $\widetilde{k}_{f,p}(\x,\y)$ is defined as in \eqref{def-lshestimator-eq}, then the following holds,
	\begin{equation}\label{estimator-fourier}
	\tilde{k}_{f,p}(\x,\y) = \left(\prod_{l=1}^{d} w_l\right) \int_{[0,1/{\bf w}]} \wh{g}(\vxi)^* \wh{h}(\vxi) \, d\vxi,
	\end{equation}
	where $\wh{g}(\vxi) = \sum_{{\bf j}\in\ZZ^d} e^{-2\pi i (\x - {\z})^\top (\vxi-\frac{\bf j}{\w})}\cdot \wh{f}^{\otimes d}({\w \vxi-{\bf j}})$ and $\wh{h}(\vxi) = \sum_{{\bf j}\in\ZZ^d} e^{-2\pi i (\y - {\z})^\top (\vxi-\frac{\bf j}{\w})}\cdot \wh{f}^{\otimes d}({\w \vxi-{\bf j}})$.
\end{lem}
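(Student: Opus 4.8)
The plan is to recognize the right-hand side of \eqref{estimator-fourier} as an instance of the Parseval-type identity in part (3) of Claim~\ref{cl:basic}, and to that end I first rewrite the estimator as a bilinear form in two sequences indexed by $\ZZ^d$. Define
\[
c_{\bf j} = f^{\otimes d}\!\left({\bf j} + \tfrac{\z - \x}{\w}\right), \qquad b_{\bf j} = f^{\otimes d}\!\left({\bf j} + \tfrac{\z - \y}{\w}\right)
\]
for every ${\bf j} \in \ZZ^d$. Since $f$ is supported on $[-1/2,1/2]$, the $l$-th factor of $c_{\bf j}$ is nonzero only when $j_l + \frac{z_l - x_l}{w_l} \in [-1/2,1/2]$, i.e.\ only for $j_l = \mathrm{round}\big(\frac{x_l - z_l}{w_l}\big)$; hence, by Definition~\ref{lsh-hashfunction-def}, $c_{\bf j}$ vanishes unless ${\bf j} = h_{\w,\z}(\x)$, and likewise $b_{\bf j}$ vanishes unless ${\bf j} = h_{\w,\z}(\y)$ (up to the measure-zero set of $\z$ for which some coordinate lands exactly on a bucket boundary, which is immaterial for the downstream uses in expectation, and for which both sides vanish when $f$ is continuous). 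Consequently each of the two sequences is supported on a single index, so $\sum_{\bf j} c_{\bf j} b_{\bf j}$ collapses to one term that is nonzero precisely when $h_{\w,\z}(\x) = h_{\w,\z}(\y)$ and then equals the quantity $A$ of Definition~\ref{def:-lsh-estimator}; that is, $\widetilde{k}_{f,p}(\x,\y) = \sum_{\bf j \in \ZZ^d} c_{\bf j} b_{\bf j}$. Because $f$ is real-valued, $c_{\bf j}^* = c_{\bf j}$, so this is exactly the left-hand side $\sum_{\bf j} c_{\bf j}^* b_{\bf j}$ of Claim~\ref{cl:basic}(3), and the finite support makes the square-summability hypotheses trivial.

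Next I identify the time-domain functions whose Nyquist--Shannon samples are these sequences. Setting $g(\cdot) = \sum_{\bf j} c_{\bf j}\,\delta(\cdot - {\bf j}\w)$ and $h(\cdot) = \sum_{\bf j} b_{\bf j}\,\delta(\cdot - {\bf j}\w)$, I observe that $c_{\bf j} = F({\bf j}\w)$ for the continuous function $F(\t) = f^{\otimes d}\!\big(\frac{\t - (\x - \z)}{\w}\big)$, so that $g(\t) = F(\t)\sum_{\bf j}\delta(\t - {\bf j}\w)$, and similarly for $h$ with $\x$ replaced by $\y$. Claim~\ref{cl:basic}(3) then already delivers $\sum_{\bf j} c_{\bf j}^* b_{\bf j} = (\prod_l w_l)\int_{[0,1/\w]} \wh{g}(\vxi)^* \wh{h}(\vxi)\, d\vxi$, so it only remains to verify that $\wh{g}$ and $\wh{h}$ have the stated closed form.

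For this last step I compute $\wh{g}$ via Claim~\ref{claim:nyquist}, which gives $\wh{g}(\vxi) = (\prod_l w_l^{-1})\sum_{\bf j} \wh{F}(\vxi - {\bf j}/\w)$. The transform $\wh{F}$ is obtained by combining the scaling rule (Claim~\ref{claim:stretch-variable}), which yields $\Fc[f^{\otimes d}(\cdot/\w)](\vxi) = (\prod_l w_l)\,\wh{f}^{\otimes d}(\w\vxi)$, with the shift rule (Claim~\ref{cl:shift}) applied to the translation by $\x - \z$, producing the factor $e^{-2\pi i (\x - \z)^\top \vxi}$; thus $\wh{F}(\vxi) = (\prod_l w_l)\,\wh{f}^{\otimes d}(\w\vxi)\,e^{-2\pi i (\x - \z)^\top \vxi}$. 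Substituting and using $\w(\vxi - {\bf j}/\w) = \w\vxi - {\bf j}$, the prefactors $\prod_l w_l^{-1}$ and $\prod_l w_l$ cancel and I recover $\wh{g}(\vxi) = \sum_{\bf j} e^{-2\pi i (\x - \z)^\top(\vxi - {\bf j}/\w)}\,\wh{f}^{\otimes d}(\w\vxi - {\bf j})$, exactly the claimed expression (and analogously for $\wh{h}$). The one step that needs care is the first: arguing that the support of $f$ forces the bilinear sum over all of $\ZZ^d$ to collapse to the single common-bucket term and hence to reproduce the hard indicator of Definition~\ref{def:-lsh-estimator}; the subsequent Fourier manipulations are routine bookkeeping with the scaling, shift, and sampling identities.
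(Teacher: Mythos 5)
Your proposal is correct and follows essentially the same route as the paper's proof: rewrite $\widetilde{k}_{f,p}(\x,\y)$ as the bilinear sum $\sum_{{\bf j}\in\ZZ^d} f^{\otimes d}({\bf j}+\frac{\z-\x}{\w})\, f^{\otimes d}({\bf j}+\frac{\z-\y}{\w})$, apply Claim~\ref{cl:basic}(3) to the Dirac-comb samplings $g$ and $h$, and compute $\wh{g}$, $\wh{h}$ via Claim~\ref{claim:nyquist} together with the shift and stretch rules. If anything, you are more careful than the paper on the one step it asserts without justification --- that the support of $f$ collapses the sum over $\ZZ^d$ to the single common-bucket term of Definition~\ref{def:-lsh-estimator}, including the measure-zero boundary caveat --- so nothing is missing.
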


\begin{proof}
	By \eqref{def-lshestimator-eq} and part {\bf (3)} of Claim~\ref{cl:basic} we have,
	
	\begin{align}
	\widetilde{k}_{f,p}(\x,\y) &= \sum_{{\bf j} \in \ZZ^d} f^{\otimes d} ( {\bf j}  + \frac{{\z} - \x}{\w} ) \cdot f^{\otimes d} ( {\bf j}  + \frac{{\z} - \y}{\w} )\nonumber\\
	&= \left(\prod_{l=1}^{d} w_l\right) \int_{[0,1/{\bf w}]} \wh{g}(\vxi)^* \wh{h}(\vxi) \, d\vxi,\label{eq:q3rfdef}
	\end{align}
	where $g(\t) = (\delta_d(\cdot - \x + {\z})*f^{\otimes d}(\frac{\cdot}{\w}) ) \cdot \sum_{{\bf j} \in \ZZ^d} \delta(\t - {\bf j}{\w})$ and $h(\t) = (\delta_d(\cdot - \y + {\z})*f^{\otimes d}(\frac{\cdot}{\w}) ) \cdot \sum_{{\bf j} \in \ZZ^d} \delta(\t - {\bf j}{\w})$.
	We now apply Claim~\ref{claim:nyquist} to $g(\cdot)$ and $h(\cdot)$, obtaining the following for every $\vxi \in \RR^d$,
	\begin{align}	
	\wh{g}(\vxi) &= \prod_{l=1}^{d} \frac{1}{w_l}\sum_{{\bf j}\in\ZZ^d} \Fc \left[ f^{\otimes d} \left(\frac{\cdot - \x + {\z}}{\w} \right) \right](\vxi-{\bf j}/{\w}).
	\end{align}
	Similarly,
	\[
		\wh{h}(\vxi) = \prod_{l=1}^{d} \frac{1}{w_l}\sum_{{\bf j}\in\ZZ^d} \Fc \left[ f^{\otimes d}\left( \frac{\cdot - \y + {\z}}{\w} \right) \right](\vxi-{\bf j}/{\w}).
	\]
	Now by Claim \ref{claim:convthm} and Claim~\ref{claim:stretch-variable},
	\begin{multline*}
		\Fc \left[ f^{\otimes d}( \frac{\cdot - \x + {\z}}{\w} ) \right](\vxi)\\ = \prod_{l=1}^{d} w_l \cdot e^{-2\pi i (\x - {\z})^\top \vxi} \wh{f}^{\otimes d}(\w \vxi).
	\end{multline*}
	And similarly $\Fc[f^{\otimes d}\left(\frac{\cdot - \y + {\z}}{\w} \right)](\vxi) = \prod_{l=1}^{d} w_l \cdot e^{-2\pi i (\y - {\z})^\top \vxi} \wh{f}^{\otimes d}({\w \vxi})$.
	Substituting these into~\eqref{eq:q3rfdef}, we get
	\begin{equation*}
	\widetilde{k}_{\w,{\z}}(\x,\y) = \left(\prod_{l=1}^{d} w_l\right) \int_{[0,1/{\bf w}]} \wh{g}(\vxi)^* \wh{h}(\vxi) \, d\vxi,
	\end{equation*}
	where
	\[
		\wh{g}(\vxi) = \sum_{{\bf j}\in\ZZ^d} e^{-2\pi i (\x - {\z})^\top (\vxi-\frac{\bf j}{\w})}\cdot \wh{f}^{\otimes d}({\w \vxi-{\bf j}})
	\]
	and
	\[
		\wh{h}(\vxi) = \sum_{{\bf j}\in\ZZ^d} e^{-2\pi i (\y - {\z})^\top (\vxi-\frac{\bf j}{\w})}\cdot \wh{f}^{\otimes d}({\w \vxi-{\bf j}}).
	\]
\end{proof}

\begin{proofof}{Claim \ref{claim:expectation-estimator-fourier}}

We first take the expectation of the WLSH estimator $\widetilde{k}(\x,\y)$ with respect to ${\z}\sim\text{Unif}([0,\w])$. By \eqref{estimator-fourier} we have,
\begin{equation*}
\begin{split}
&\EE_{{\z} \sim \text{Unif}([0,\w])} \left[\widetilde{k}(\x,\y)\right]\\ 
&= \mathbb{E}_{ {\z} }\left[\prod_{l=1}^{d} w_l \int_{[0,1/{\bf w}]} \wh{g}(\vxi)^* \wh{h}(\vxi) \, d\vxi\right]\\
&=\prod_{l=1}^{d} w_l\int_{[0,1/\w]} \sum_{{\bf j},{\bf j}'\in \ZZ^d} \mathbb{E}_{ {\z}}\left[ e^{+2\pi i {\z}^\top (\frac{\bf j - j'}{\w})} \right] \\
&\qquad \cdot e^{2\pi i \x^\top (\vxi-\frac{\bf j}{\w})} \wh{f}^{\otimes d}({\w \vxi-{\bf j}})\\ 
&\qquad \cdot e^{-2\pi i \y^\top (\vxi-\frac{{\bf j}'}{\w})}\cdot \wh{f}^{\otimes d}({\w \vxi-{\bf j}}) \, d\vxi.
\end{split}
\end{equation*}
Now if you take the expectation with respect to ${\z} \sim \text{Unif}([0,\w])$, by orthogonality, the only non-zero terms in the sum will correspond to the case when ${\bf j}={\bf j}'$. Hence,
\begin{align*}
&\EE_{{\z} } \left[\widetilde{k}(\x,\y)\right]\\ 
&= 
\prod_{l=1}^{d} w_l\int_{[0,1/\w]} \sum_{{\bf j}\in \ZZ^d} e^{+2\pi i \x^\top (\vxi-\frac{\bf j}{\w})} \wh{f}^{\otimes d}({\w \vxi-{\bf j}})\\ &\qquad\cdot e^{-2\pi i \y^\top (\vxi-\frac{{\bf j}}{\w})} \wh{f}^{\otimes d}({\w \vxi-{\bf j}}) d\vxi\\
&= \prod_{l=1}^{d} w_l\int_{\RR^d} e^{2\pi i (\x - \y)^\top \vxi} \left| \wh{f}^{\otimes d}({\w \vxi-{\bf j}}) \right|^2 d\vxi.
\end{align*}
Now taking the expectation of above with respect to $\w$ gives Claim \ref{claim:expectation-estimator-fourier}.
\end{proofof}

\begin{claim}[WLSH is Unbiased]\label{gen-kernel-unbiased}
	For any PDF $p(\cdot)$ with non-negative support and any even function $f: \RR \rightarrow \RR$ with support $[-1/2, 1/2]$ and $\|f\|_2=1$ if $\mathcal{H}$ is the LSH family as per Definition \ref{lsh-hashfunction-def} and $k(\cdot)$ is the WLSH kernel as in Definition \ref{def:gen-kernel} then for any $\x,\y \in \RR^d$ the following holds for the expectation of WLSH estimator (see Definition \ref{def:-lsh-estimator}),
	$$\EE_{h_{\w,\z} \sim \mathcal{H}} \left[\widetilde{k}(\x,\y)\right] = k(\x-\y).$$
\end{claim}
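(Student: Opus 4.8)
The plan is to obtain this unbiasedness statement as an immediate corollary of Claim~\ref{claim:expectation-estimator-fourier}, which already carries out the Fourier-analytic computation of the estimator's expectation over the random hash $h_{\w,\z}\sim\mathcal{H}$ (including the averaging over the shift $\z$, which eliminates the cross terms). Recall that Claim~\ref{claim:expectation-estimator-fourier} supplies the equivalent ``time-domain'' expression
\[
\EE_{h_{\w,\z}\sim\mathcal{H}}\left[\widetilde{k}(\x,\y)\right]=\prod_{l=1}^d \EE_{w_l\sim p(w)}\left[(f*f)\left(\frac{x_l-y_l}{w_l}\right)\right],
\]
so the only remaining task is to recognize the right-hand side as the WLSH kernel of Definition~\ref{def:gen-kernel} evaluated at $\x-\y$.

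First I would unfold each inner expectation over $w_l\sim p(w)$ as an integral against the density $p$. Since $p$ is supported on $\RR_{+}$, for each coordinate $l\in[d]$ we have $\EE_{w_l\sim p(w)}[(f*f)((x_l-y_l)/w_l)]=\int_0^\infty p(w_l)\,(f*f)((x_l-y_l)/w_l)\,dw_l$. Substituting this into the product gives
\[
\EE_{h_{\w,\z}\sim\mathcal{H}}\left[\widetilde{k}(\x,\y)\right]=\prod_{l=1}^d\left(\int_0^\infty p(w_l)\,(f*f)\left(\frac{x_l-y_l}{w_l}\right)\,dw_l\right).
\]

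Next I would match this expression against Definition~\ref{def:gen-kernel}: substituting the vector $\x-\y$ for the argument of $k_{f,p}$, the $l$-th factor of $k_{f,p}(\x-\y)$ is exactly $\int_0^\infty p(w_l)\,(f*f)((x_l-y_l)/w_l)\,dw_l$, so the displayed product equals $k_{f,p}(\x-\y)$, which completes the proof. I expect no genuine obstacle here: all of the substantive work is done in Claim~\ref{claim:expectation-estimator-fourier}, and the present claim is essentially a change of notation identifying the per-coordinate expectation with the defining integral of the kernel. The only points worth stating carefully are that the expectation over $\w$ factorizes coordinatewise because the entries $w_1,\dots,w_d$ are i.i.d.\ under $p$, and that the result depends on $\x$ and $\y$ only through the difference $\x-\y$, confirming shift-invariance; both facts are immediate from the form of Claim~\ref{claim:expectation-estimator-fourier}.
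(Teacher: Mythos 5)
Your proposal is correct and matches the paper's proof exactly: the paper also derives this claim as an immediate consequence of Claim~\ref{claim:expectation-estimator-fourier} together with Definition~\ref{def:gen-kernel}, and you have simply spelled out the coordinatewise unfolding of the expectation that the paper leaves implicit. No gaps.
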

\begin{proof}
	The proof follows from Claim~\ref{claim:expectation-estimator-fourier} and Definition~\ref{def:gen-kernel}.
\end{proof}

\section{Omitted claims and proofs from Section~\ref{sec:smoothness}}

The following lemma follows from Theorem 5 of \cite{ghosal2006posterior}.
\begin{lem}
	For any shift-invariant kernel $k(\cdot)$, which has bounded mixed partial derivatives of up to fourth order, if $\eta: \RR \rightarrow \RR$ is a sample path from the Gaussian Process GP$(0, k(\x-\y))$, then for any $j \in [d]$, the derivative process $D_j\eta(x)$ is a Gaussian Process with zero mean, i.e., $\mathbb{E}[D_j\eta(\x)] = 0$ for every $\x \in \RR^d$, and the covariance $\mathbb{E}[ D_j\eta(\x) \cdot D_j\eta(\y) ] = -{D_j^2 k(\x-\y)}$ for every $\x,\y \in \RR^d$.
\end{lem}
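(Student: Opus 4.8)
The plan is to invoke the standard theory of mean-square differentiation of Gaussian processes, using the smoothness hypothesis on $k$ to guarantee that the relevant limits exist. Concretely, I would define the derivative process as the mean-square limit $D_j\eta(\x) = \lim_{t\to 0}\frac{\eta(\x + t\e_j) - \eta(\x)}{t}$, where $\e_j$ is the $j$-th standard basis vector, and first show that this limit is well-defined. Existence of the mean-square limit is equivalent to the Cauchy criterion $\mathbb{E}\big[\big(\tfrac{\eta(\x+t\e_j)-\eta(\x)}{t} - \tfrac{\eta(\x+s\e_j)-\eta(\x)}{s}\big)^2\big] \to 0$, which after expanding using $\mathbb{E}[\eta(\a)\eta(\b)] = k(\a-\b)$ reduces to the existence and continuity of the second mixed partial $\frac{\partial^2}{\partial x_j\partial y_j}k(\x-\y)$. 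This is precisely what the bounded (up to fourth-order, though second order already suffices here) mixed partial derivatives of $k$ supply, and is the analytic content of Theorem 5 of \cite{ghosal2006posterior} that is being cited.

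Given existence, Gaussianity is immediate: each difference quotient $\frac{\eta(\x+t\e_j)-\eta(\x)}{t}$ is a linear functional of the jointly Gaussian family $\{\eta(\x)\}$, hence Gaussian, and a mean-square limit of jointly Gaussian random variables is again jointly Gaussian. Therefore $D_j\eta$ is a Gaussian process, and it suffices to identify its mean and covariance functions.

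For the mean, since mean-square ($L^2$) convergence implies $L^1$ convergence, expectation commutes with the limit, giving $\mathbb{E}[D_j\eta(\x)] = \lim_{t\to0}\frac{\mathbb{E}[\eta(\x+t\e_j)]-\mathbb{E}[\eta(\x)]}{t} = 0$ because the original process is centered. For the covariance, the same interchange applied in both arguments yields $\mathbb{E}[D_j\eta(\x)\,D_j\eta(\y)] = \lim_{s,t\to0}\frac{1}{st}\mathbb{E}\big[(\eta(\x+s\e_j)-\eta(\x))(\eta(\y+t\e_j)-\eta(\y))\big]$, and expanding the product and substituting $\mathbb{E}[\eta(\a)\eta(\b)]=k(\a-\b)$ produces exactly the second-order difference quotient of $k$ that converges to $\frac{\partial^2}{\partial x_j\partial y_j}k(\x-\y)$. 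Finally, by the chain rule on the shift-invariant form, differentiating $k(\x-\y)$ once in $x_j$ gives $(D_jk)(\x-\y)$ and then once in $y_j$ gives $-(D_j^2k)(\x-\y)$, which is the claimed covariance $-D_j^2k(\x-\y)$; the sign arises because $\y$ enters with a negative coefficient in the argument $\x-\y$.

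The main obstacle is the first step: rigorously establishing the existence of the mean-square derivative and justifying the interchange of limit and expectation, both of which rest on the second-order differentiability (and continuity of those derivatives) of the covariance kernel. This is exactly the input imported from Theorem 5 of \cite{ghosal2006posterior}; once it is in hand, the Gaussianity, vanishing mean, and covariance formula all follow from routine linearity and differentiation arguments.
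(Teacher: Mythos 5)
The paper itself records no proof of this lemma at all: it is stated as following from Theorem~5 of \cite{ghosal2006posterior}, full stop. Your reconstruction therefore goes beyond what the paper writes down, and the mean-square portion of it is correct and standard: the Cauchy-criterion computation through the covariance, closure of joint Gaussianity under $L^2$ limits, the interchange of limit and expectation via $L^2 \Rightarrow L^1$ convergence, and the sign bookkeeping $\partial_{x_j}\partial_{y_j} k(\x-\y) = -(D_j^2 k)(\x-\y)$ are all sound.

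The genuine gap is flagged by your own parenthetical remark that ``second order already suffices here.'' Second-order smoothness suffices only for the \emph{mean-square} derivative, whereas the lemma concerns the derivative of a \emph{sample path}, and it is consumed downstream in Lemma~\ref{lem:smooth-GP-lsh-kernel}, where $\sup_{\x \in [0,1]^d} |\mathbf{D}\eta(\x)|$ is bounded with high probability --- a statement that presupposes a pathwise-defined (indeed continuous) derivative process, not merely an $L^2$ limit defined separately at each $\x$. Mean-square differentiability does not by itself yield almost-sure differentiability of the paths; this is precisely why the hypothesis demands bounded mixed partials up to \emph{fourth} order rather than second. The missing step runs as follows: with $k$ four times differentiable, the $L^2$-derivative process $D_j\eta$ is itself a centered GP whose covariance $-(D_j^2 k)(\x-\y)$ is twice differentiable, so $\EE\bigl[(D_j\eta(\x) - D_j\eta(\y))^2\bigr] \le C\|\x-\y\|_2^2$, Gaussian hypercontractivity upgrades this to all moments, Kolmogorov--Chentsov yields a continuous modification, and the identity $\eta(\x + t e_j) - \eta(\x) = \int_0^t D_j\eta(\x + s e_j)\,ds$ (both sides agree in $L^2$ and are continuous in $t$, hence agree a.s.) identifies that modification with the honest pathwise derivative. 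Your sketch omits this upgrade entirely: as written it determines the law of the $L^2$-derivative but does not establish the sample-path statement the paper relies on. That upgrade --- not the routine $L^2$ computations you carry out --- is the actual analytic content of Theorem~5 of \cite{ghosal2006posterior} to which the paper (and, implicitly, your proposal) defers.
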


The above lemma can be applied multiple times and extend to higher order derivative of GP.

\begin{cor}\label{cor:derivative-GP}
	For any positive integer $q$, any shift-invariant kernel $k(\cdot)$ which has bounded mixed partial derivatives of order up to $2q+2$, if $\eta: \RR \rightarrow \RR$ is a sample path from the Gaussian Process GP$(0, k(\x-\y))$, then for any $j \in [d]$ the $q^\text{th}$ order partial derivative process $D_j^q\eta(\x)$ is a Gaussian Process with zero mean and covariance $\mathbb{E}[ D_j^q\eta(\x) \cdot D_j^q\eta(\y) ] = (-1)^qD_j^{2q}k(\x-\y)$.
\end{cor}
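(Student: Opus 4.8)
The plan is to prove the statement by induction on $q$, using the preceding Lemma as both the base case and the engine of the inductive step. The base case $q=1$ is precisely the Lemma: it requires $k$ to have bounded mixed partial derivatives up to order $4 = 2\cdot 1 + 2$ and concludes that $D_j\eta(\x)$ is a zero-mean Gaussian process with covariance $-D_j^2 k(\x-\y) = (-1)^1 D_j^2 k(\x-\y)$, matching the claimed formula.

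For the inductive step, assume the claim holds for $q-1$; that is, whenever $k$ has bounded mixed partials up to order $2(q-1)+2 = 2q$, the process $D_j^{q-1}\eta(\x)$ is a zero-mean Gaussian process whose covariance is $\widetilde{k}(\x-\y) := (-1)^{q-1} D_j^{2(q-1)} k(\x-\y)$. Now suppose $k$ has bounded mixed partials up to order $2q+2$; since $2q+2 \ge 2q$, the inductive hypothesis applies and identifies $D_j^{q-1}\eta$ as a Gaussian process with covariance kernel $\widetilde{k}$. The idea is then to treat $D_j^{q-1}\eta$ as the underlying process and apply the Lemma one more time, differentiating once more in coordinate $j$.

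First I would verify that $\widetilde{k}$ satisfies the hypothesis of the Lemma, namely bounded mixed partial derivatives up to order $4$. Since $\widetilde{k} = (-1)^{q-1}D_j^{2(q-1)}k$ differentiates $k$ a total of $2(q-1)$ times, and $k$ has bounded mixed partials up to order $2q+2$, the kernel $\widetilde{k}$ retains bounded mixed partials up to order $(2q+2) - 2(q-1) = 4$. This is exactly the regularity the Lemma demands, and it is the one bookkeeping point to handle carefully: each application of the Lemma consumes two orders of differentiability, so the assumption up to order $2q+2$ is precisely enough to survive $q$ successive differentiations. Applying the Lemma to $D_j^{q-1}\eta$ then yields that $D_j(D_j^{q-1}\eta) = D_j^q\eta$ is a zero-mean Gaussian process with covariance
\[
-D_j^2 \widetilde{k}(\x-\y) = -D_j^2\left[(-1)^{q-1} D_j^{2(q-1)} k(\x-\y)\right] = (-1)^q D_j^{2q} k(\x-\y),
\]
which completes the induction.

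I do not anticipate a genuine obstacle here: the argument is a direct induction, and the only thing to watch is the smoothness accounting in the inductive step, ensuring that the regularity hypothesis on $k$ is strong enough for every intermediate kernel to meet the fourth-order requirement of the Lemma. The zero-mean property propagates automatically, since differentiation is linear and commutes with expectation, so $\mathbb{E}[D_j^q\eta(\x)] = D_j^q\, \mathbb{E}[\eta(\x)] = 0$.
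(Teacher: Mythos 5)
Your proposal is correct and follows exactly the route the paper intends: the paper's own justification is the one-line remark that the preceding lemma ``can be applied multiple times,'' and your induction on $q$, with the observation that each application of the lemma consumes two orders of differentiability so that the hypothesis up to order $2q+2$ leaves exactly fourth-order regularity for the final step, is precisely that iteration made rigorous. Nothing further is needed.
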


The following lemma gives the derivatives of our WLSH kernel of Definition \ref{def:gen-kernel}. 

\begin{lem}\label{lem:derivative-gen-kernel}
	For any positive integer $q$, any $i \in [d]$, any function $f: \RR \rightarrow \RR$ with support $[-1/2, 1/2]$ and norm $\|f\|_2=1$ which is $\lceil q/2 \rceil$ times differentiable and any probability density function $p(\cdot)$ with non-negative support, if $k(\cdot)$ is the WLSH kernel as in Definition \ref{def:gen-kernel}, then for any $i \in [d]$ the following holds,
	\begin{align}
	&D_i^q k(\x) \label{derivative-kernel}\\
	&= \int_{\RR_{+}^d}  \frac{{p^{\otimes d}(\w)}}{w_i^q}  \left( D_i^{\lceil q/2 \rceil} f^{\otimes d}* D_i^{\lfloor q/2 \rfloor} f^{\otimes d}\right)(\frac{\x}{\w}) d\w,\nonumber
	\end{align}
	where $D_i^{j} f^{\otimes d}(\x) = \prod_{\substack{l\in[d]\\l\neq i}}f(x_l) \cdot f^{(j)}(x_i) $ for any integer $j \le \lceil q/2 \rceil$ and ${p^{\otimes d}(\w)} = \prod_{l=1}^d p(w_l)$.
\end{lem}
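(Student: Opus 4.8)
The plan is to exploit the separable scale-mixture structure of the WLSH kernel. First I would rewrite Definition~\ref{def:gen-kernel} as a single integral over $\RR_+^d$,
\[
k(\x) = \int_{\RR_+^d} p^{\otimes d}(\w)\,(f*f)^{\otimes d}\!\left(\frac{\x}{\w}\right) d\w,
\]
which agrees with the product form by Fubini, since $p^{\otimes d}(\w) = \prod_l p(w_l)$ and $(f*f)^{\otimes d}(\x/\w) = \prod_l (f*f)(x_l/w_l)$. Applying $D_i^q$ and passing the derivative inside the integral (justified below), only the $i$th tensor factor depends on $x_i$, so the product rule collapses to differentiating the single factor $(f*f)(x_i/w_i)$. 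By the chain rule each of the $q$ derivatives produces a factor $1/w_i$, giving
\[
D_i^q\,(f*f)\!\left(\frac{x_i}{w_i}\right) = \frac{1}{w_i^q}\,(f*f)^{(q)}\!\left(\frac{x_i}{w_i}\right),
\]
so that $D_i^q k(\x)$ equals $\int_{\RR_+^d} w_i^{-q}\,p^{\otimes d}(\w)\,\big[\prod_{l\neq i}(f*f)(x_l/w_l)\big]\,(f*f)^{(q)}(x_i/w_i)\,d\w$.

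The key step is to rewrite the one-dimensional factor $(f*f)^{(q)}$ as a convolution of derivatives of $f$. Using the standard identity $(\phi*\psi)' = \phi'*\psi = \phi*\psi'$ applied repeatedly, one may move $\lceil q/2\rceil$ derivatives onto the first copy of $f$ and $\lfloor q/2\rfloor$ onto the second, obtaining
\[
(f*f)^{(q)} = f^{(\lceil q/2\rceil)} * f^{(\lfloor q/2\rfloor)}.
\]
This even split is exactly why the hypothesis only requires $f$ to be $\lceil q/2\rceil$ times differentiable: neither copy ever needs more than $\lceil q/2\rceil$ derivatives. Finally, since convolution of tensor-product functions factors coordinatewise, the bracketed product together with this last factor is precisely $(D_i^{\lceil q/2\rceil} f^{\otimes d} * D_i^{\lfloor q/2\rfloor} f^{\otimes d})(\x/\w)$: in every coordinate $l\neq i$ the convolution of the two copies of $f$ yields $(f*f)(x_l/w_l)$, and in coordinate $i$ it yields $(f^{(\lceil q/2\rceil)}*f^{(\lfloor q/2\rfloor)})(x_i/w_i)$. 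Substituting this back gives exactly~\eqref{derivative-kernel}.

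The main obstacle is the justification of differentiation under the integral sign, since the factor $w_i^{-q}$ is singular as $w_i\to 0$. Here I would invoke dominated convergence: because $f$ is supported on $[-1/2,1/2]$ with bounded derivatives up to order $\lceil q/2\rceil$, each function $(f*f)^{(k)}$ for $k\le q$ is bounded and supported on $[-1,1]$. Hence for $\x$ in a neighborhood the integrand vanishes unless $w_i \ge |x_i|$, so it is dominated by $C\,w_i^{-q}\,p^{\otimes d}(\w)$ restricted to $\{w_i \ge |x_i|\}$ times uniformly bounded factors; the compact support of $f*f$ thus tames the singularity away from $x_i = 0$, and the decay of $p$ near the origin handles the remaining case, so the dominating function is integrable precisely when the right-hand side of~\eqref{derivative-kernel} converges.
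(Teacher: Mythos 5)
The paper states Lemma~\ref{lem:derivative-gen-kernel} without any proof, so there is no official argument to compare against; judged on its own, your proof is correct, and it is evidently the argument the authors had in mind. The three ingredients are all sound: rewriting the product of one-dimensional integrals in Definition~\ref{def:gen-kernel} as a single integral over $\RR_+^d$ via Fubini; differentiating under the integral so that $D_i^q$ hits only the factor $(f*f)(x_i/w_i)$, with each derivative contributing a factor $1/w_i$ by the chain rule; and the balanced split $(f*f)^{(q)} = f^{(\lceil q/2\rceil)} * f^{(\lfloor q/2\rfloor)}$ obtained by repeatedly applying $(\phi*\psi)' = \phi'*\psi = \phi*\psi'$, which together with the coordinatewise factorization of convolutions of tensor products yields exactly the right-hand side of \eqref{derivative-kernel}. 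The even split also correctly explains why the hypothesis demands only $\lceil q/2\rceil$ derivatives of $f$. Your treatment of the interchange of derivative and integral is, if anything, more careful than the paper: you rightly observe that the integrand vanishes unless $w_i \ge |x_i|$ (since $f*f$ is supported on $[-1,1]$), which tames the singular factor $w_i^{-q}$ away from $x_i=0$, and that near $x_i=0$ one additionally needs the moment condition $\int_0^\infty p(w)\,w^{-q}\,dw < \infty$. The lemma as stated never imposes this condition, but it is implicitly assumed throughout the paper --- the finiteness of $\sigma^2$ and $L$ in Lemma~\ref{lem:smooth-GP-lsh-kernel} requires exactly such moments of $p$ --- so flagging it as the condition under which the right-hand side of \eqref{derivative-kernel}, and hence the domination argument, makes sense is the right call. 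One cosmetic point: you assume bounded derivatives of $f$ up to order $\lceil q/2\rceil$ where the lemma literally asks only for differentiability; this mild strengthening is harmless and consistent with the hypotheses used downstream, where bounded derivatives up to order $q+1$ are assumed.
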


Therefore if the function $f(\cdot)$ is $\lceil q/2 \rceil$ times differentiable then $k(\cdot)$ will be $q$ times partially differentiable with respect to any coordinate. Hence, the LSH-able kernel $k(\cdot)$ inherits certain smoothness properties from the band-limited function $f(\cdot)$.

Now we use the result of Corollary \ref{cor:derivative-GP} to show that a GP$(0,k(\x-\y))$ with WLSH covariance kernel $k(\cdot)$ defined as in Definition \ref{def:gen-kernel} inherits its smoothness from the band-limited function $f(\cdot)$.

\begin{lem}\label{lem-covariance-deriv-process}
	For any positive integer $q$, any even function $f: \RR \rightarrow \RR$ with support $[-1/2, 1/2]$ which has bounded derivatives of order up to $q+1$, if $\eta: \RR^d \rightarrow \RR$ is a sample path from a GP$(0, k(\x-\y))$, where $k(\cdot)$ is the WLSH kernel as in Definition \ref{def:gen-kernel}, then for any $j \in [d]$, $D_j^q\eta(\x)$ is a Gaussian process with zero mean and covariance
	\begin{align*}
	&\mathbb{E}[ D_j^q\eta(x) \cdot D_j^q\eta(y) ]\\ 
	&= (-1)^q \int_{\RR_{+}^d} \frac{{p^{\otimes d}(\w)}} {w_j^{2q}}  \left( D_j^{ {q}} f^{\otimes d} * D_j^{{q}} f^{\otimes d} \right)\left(\frac{\x-\y}{\w}\right) \, d\w
	\end{align*}
	
	where $D_j^{q} f^{\otimes d}(\x) = \prod_{\substack{l\in[d]\\l\neq j}}f(x_l) \cdot f^{(q)}(x_j) $.
\end{lem}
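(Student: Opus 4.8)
The plan is to derive Lemma~\ref{lem-covariance-deriv-process} by simply chaining together the two results already established: Corollary~\ref{cor:derivative-GP}, which converts a smoothness assumption on the kernel into a covariance formula for the derivative process, and Lemma~\ref{lem:derivative-gen-kernel}, which gives a closed form for the partial derivatives of the WLSH kernel. The only real work is checking that the hypotheses of the corollary are met by the WLSH kernel under the stated assumption that $f$ has bounded derivatives of order up to $q+1$.

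First I would verify the regularity hypothesis of Corollary~\ref{cor:derivative-GP}, namely that the WLSH kernel $k(\cdot)$ of Definition~\ref{def:gen-kernel} has bounded mixed partial derivatives of order up to $2q+2$. By Lemma~\ref{lem:derivative-gen-kernel}, computing a derivative of $k$ of order $m$ requires $f$ to be only $\lceil m/2\rceil$ times differentiable; applied at the top order $m = 2q+2$ this demands exactly $\lceil (2q+2)/2\rceil = q+1$ bounded derivatives of $f$, matching the hypothesis. To see that the resulting derivative is bounded, note that Lemma~\ref{lem:derivative-gen-kernel} expresses $D_j^{2q+2}k(\x)$ as $\int_{\RR_{+}^d} \frac{p^{\otimes d}(\w)}{w_j^{2q+2}}\bigl(D_j^{q+1}f^{\otimes d} * D_j^{q+1}f^{\otimes d}\bigr)(\x/\w)\,d\w$; since $D_j^{q+1}f^{\otimes d}$ is compactly supported, its self-convolution is pointwise bounded by $\|D_j^{q+1}f^{\otimes d}\|_2^2 = \|f^{(q+1)}\|_2^2$ via Cauchy--Schwarz (using $\|f\|_2=1$ on the remaining coordinates), so boundedness reduces to finiteness of $\int_0^\infty \frac{p(w)}{w^{2q+2}}\,dw$. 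This last integral converging is the implicit regularity assumption on $p$ (the same quantity governing $\sigma^2$ in Lemma~\ref{lem:smooth-GP-lsh-kernel}).

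With the hypotheses in place, I would invoke Corollary~\ref{cor:derivative-GP} to conclude that $D_j^q\eta(\x)$ is a zero-mean Gaussian process with covariance $\mathbb{E}[D_j^q\eta(\x)\cdot D_j^q\eta(\y)] = (-1)^q D_j^{2q}k(\x-\y)$. It then remains to evaluate $D_j^{2q}k$ explicitly. Applying Lemma~\ref{lem:derivative-gen-kernel} at order $2q$, and using that $\lceil 2q/2\rceil = \lfloor 2q/2\rfloor = q$, gives
\[
D_j^{2q}k(\x) = \int_{\RR_{+}^d} \frac{p^{\otimes d}(\w)}{w_j^{2q}}\bigl(D_j^{q}f^{\otimes d} * D_j^{q}f^{\otimes d}\bigr)\!\left(\frac{\x}{\w}\right)d\w.
\]
Substituting $\x-\y$ for $\x$ and multiplying by $(-1)^q$ yields exactly the claimed covariance.

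The main obstacle, such as it is, lies not in the algebra but in the bookkeeping of smoothness orders: one must track that the order-$2q$ (and, for the corollary's hypothesis, order-$2q+2$) derivatives of $k$ are both well-defined and bounded, and that the halving in Lemma~\ref{lem:derivative-gen-kernel} keeps the differentiability requirement on $f$ at the stated $q+1$ rather than a larger number. Confirming the convergence of $\int_0^\infty p(w)/w^{2q+2}\,dw$ is the one genuinely analytic point, and it is the condition that must silently be assumed for the derivative process to exist at all.
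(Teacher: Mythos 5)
Your proposal takes exactly the paper's route: the paper's entire proof is the single line ``follows from Corollary~\ref{cor:derivative-GP} and Lemma~\ref{lem:derivative-gen-kernel},'' i.e., the same chaining of the derivative-process covariance formula with the order-$2q$ evaluation $\lceil 2q/2\rceil = \lfloor 2q/2\rfloor = q$ that you carry out. Your extra bookkeeping --- verifying the order-$(2q+2)$ hypothesis of the corollary via the Cauchy--Schwarz bound $\|D_j^{q+1}f^{\otimes d} * D_j^{q+1}f^{\otimes d}\|_\infty \le \|f^{(q+1)}\|_2^2$ and flagging the implicit finiteness of $\int_0^\infty p(w)/w^{2q+2}\,dw$ --- is more careful than the paper, which leaves these points unstated.
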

\begin{proof}
	The proof follows from Corollary \ref{cor:derivative-GP} and Lemma \ref{lem:derivative-gen-kernel}.
\end{proof}

One can use Cauchy-Schwarz inequality to bound the covariance of $D^q\eta(\x)$ by the following,
$$\left| \mathbb{E}[ D_j^q\eta(\x) \cdot D_j^q\eta(\y) ] \right| \le \left\|D_j^q f^{\otimes d}\right\|_2^2 \cdot \int_{\RR_{+}} \frac{ p(w)}{w^{2q}} \, dw.$$
In particular, the derivative Gaussian Process $D_j^q\eta(\x)$ has the following variance, as long as the band-limited filter $f$ is a normalized function ($\|f\|_2=1$),
$$ \mathbb{E} \left[ |D_j^q\eta(\x)|^2 \right] = \left\|f^{(q)}\right\|_2^2 \cdot \int_0^\infty \frac{p(w)}{w^{2q}} \, dw.$$

Now we are ready to prove Lemma \ref{lem:smooth-GP-lsh-kernel}

\begin{proofof}{Lemma \ref{lem:smooth-GP-lsh-kernel}}
	It follows from multiple application of Lemma \ref{lem-covariance-deriv-process} that the derivative process ${\bf D} \eta(x)$ is a Gaussian process with zero mean and the covariance of ${\bf D} \eta(x)$ is the following,
	\begin{align*}
	&\mathbb{E}[ {\bf D}\eta(\x) \cdot {\bf D}\eta(\y) ]\\ 
	&= (-1)^q \int_{\RR_{+}^d} \prod_{l=1}^d {\frac{p(w_l)}{w_l^{2q_l}}}  \left( {\bf D} f^{\otimes d} * {\bf D} f^{\otimes d} \right)(\frac{\x-\y}{\w}) \, d\w\\
	&= (-1)^q \prod_{l=1}^d \int_{\RR_{+}} {\frac{p(w_l)}{w_l^{2q_l}}}  \left( D_l^{q_l} f * D_l^{q_l} f \right)(\frac{x_l-y_l}{w_l}) \, dw_l
	\end{align*}
	In order to show that the supremum of the Gaussian process ${\bf D}\eta(\x)$ has sub-Gaussian tail bound we use Proposition A.2.7 of \cite{van1996weak}. Let $\|\cdot\|_\rho$ denote the intrinsic semi-metric of the process ${\bf D}\eta(\x)$ which is defined as follows:
	\begin{align*}
	&\|\x-\y\|_\rho^2 = \mathbb{E}\left[ \left| {\bf D}\eta(\x) - {\bf D}\eta(\y) \right|^2 \right]\\
	&= 2  \prod_{l=1}^d \left(\left\|f^{(q_l)}\right\|_2^2 \int_{\RR_{+}} {\frac{p(w_l)}{w_l^{2q_l}}}  \, dw_l\right) \\
	& - 2(-1)^q \prod_{l=1}^d \int_{\RR_{+}} {\frac{p(w_l)}{w_l^{2q_l}}} \left( D_l^{q_l} f * D_l^{q_l} f \right)(\frac{x_l-y_l}{w_l}) \, dw_l.
	\end{align*}
	Since $f$ is an even function with bounded derivatives of order up to $q+1$, we have that 
	$$D_j \| \x \|_\rho^2|_{\x=0} = 0,$$ 
	for every $j \in [d]$ and also 
	$$D_j D_k \| \x \|_\rho^2|_{\x=0} = 0$$
	for every $j \neq k \in [d]$. Therefore, by Taylor's theorem we have the following,
	\begin{align*}
	&\|\x-\y\|_\rho^2 \le \sup_{\substack{ \z \in[0,1]^d \\ j \in [d]}} \left| D_j^2 \|\z\|_\rho^2 \right| \cdot \|\x-\y\|_2^2\\
	&= 2\sup_{{ j \in [d]}} \left| \prod_{\substack{ l \in [d] \\ l \neq j}} \left\|f^{(q'_l+\delta_{l,j})}\right\|_2^2 \int_{\RR_{+}} {\frac{p(w_l)}{w_l^{2(q_l+\delta_{l,j})}}} \right| \|\x-\y\|_2^2,
	\end{align*}
	where $\delta_{l,j} = 1$ iff $l = j$, and $\delta_{l,j} = 0$ otherwise.
	Therefore, the covering number of $[0,1]^d$ with respect to $\rho$ is bounded as follows:
	
	$$N\left(\epsilon, [0,1]^d, \rho\right) \le \left( \frac{C}{\epsilon} \right)^d,$$
	where 
	$$C = 2\sqrt{d} \cdot \sup_{{ j \in [d]}} \left| \prod_{\substack{ l \in [d] \\ l \neq j}} \left\|f^{(q'_l)}\right\|_2^2 \int_{\RR_{+}} {\frac{p(w_l)}{w_l^{2q'_l}}} \, dw_l \right|.$$ 
	Now using Proposition A.2.7 of \cite{van1996weak}, we have that 
	$$\Pr\left[ \sup_{x \in [0,1]^d} \left| {\bf D} \eta(x)\right| > M \right] \le \left(\frac{L M}{\sigma^2}\right)^d e^{-\frac{M^2}{\sigma^2}},$$
	where $\sigma^2 = \prod_{l=1}^d \left(\left\|f^{(q_l)}\right\|_2^2 \int_{\RR_{+}} {\frac{p(w_l)}{w_l^{2q_l}}}  \, dw_l\right)$ and $L = O\left( \sup_{{ j \in [d]}} \left| \prod_{\substack{ l \in [d] \\ l \neq j}} \left\|f^{(q_l+\delta_{l,j})}\right\|_2^2 \int_{\RR_{+}} {\frac{p(w_l)}{w_l^{2(q_l+\delta_{l,j})}}} \, dw_l \right| \right)$.
\end{proofof}

\section{Omitted lemmas and proofs from Section~\ref{sec:ose}} \label{sec:appD}
\begin{lem}[Running time and Memory of WLSH Kernel Matrix]
	For any positive integers $n,d$ and any dataset $\x^1,\x^2,\dots, \x^n \in \RR^d$, if $k_{f,p}(\cdot)$ is the WLSH estimator as in Definition~\ref{def:-lsh-estimator} and $\widetilde{K} \in \RR^{n \times n}$ is its corresponding kernel matrix then there exists an algorithm which using $O(dn)$ pre-processing time forms a data structure which can be stored using $O(n)$ memory words such that using this data structure, the product $\widetilde{K} \beta$ can be computed in time $O(n)$ for an arbitrary vector $\beta \in \RR^n$.
\end{lem}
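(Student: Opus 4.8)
The plan is to exhibit an explicit data structure and verify its three cost bounds (preprocessing $O(dn)$, storage $O(n)$ words, and query $O(n)$). The starting point is the factorized form of the estimator from Definition~\ref{def:-lsh-estimator}: writing $v_i := f^{\otimes d}\!\left( h_{\w,\z}(\x^i) + \frac{\z - \x^i}{\w}\right)$ for each $i \in [n]$, the entry $[\widetilde{K}]_{ij}$ equals $v_i v_j$ when $h_{\w,\z}(\x^i) = h_{\w,\z}(\x^j)$ and $0$ otherwise. Hence for every $s$,
\[
(\widetilde{K}\beta)_s = v_s \sum_{j:\, h_{\w,\z}(\x^j) = h_{\w,\z}(\x^s)} v_j \beta_j = v_s \cdot B_{h_{\w,\z}(\x^s)}(\beta),
\]
where $B_{\bf j}(\beta) = \sum_{i \in L_{\bf j}} v_i \beta_i$ is the bucket load. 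Thus $\widetilde{K}$ decouples into a diagonal scaling by the $v_i$'s and a ``same-bucket'' summation, and the whole task reduces to computing bucket loads and reading them back off.

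First I would build the preprocessing structure. Hash all $n$ points with $h_{\w,\z}$, which costs $O(d)$ per point and $O(dn)$ in total; as the points are processed I would insert their $d$-dimensional indices $h_{\w,\z}(\x^i)$ into a hash table, assigning each distinct index a fresh integer label in $\{1,\dots,t\}$, where $t \le n$ is the number of nonempty buckets. Simultaneously I would precompute each scalar $v_i$, again $O(d)$ per point as a product of $d$ univariate evaluations of $f$, for a further $O(dn)$. After this pass the structure retains, per point $i$, only its integer bucket label $\ell_i$ and the scalar $v_i$; the full $d$-dimensional indices are discarded. Since each point lands in exactly one bucket and there are at most $n$ labels, this occupies $O(n)$ memory words, and the total preprocessing time is $O(dn)$.

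Given a query vector $\beta$, the product is computed in two linear passes. In the first pass I would zero an array $B[1\dots t]$ and, for each $i$, add $v_i \beta_i$ to $B[\ell_i]$; this realizes all loads $B_{\bf j}(\beta)$ in $O(n)$ time. In the second pass I would set $(\widetilde{K}\beta)_s = v_s \cdot B[\ell_s]$ for each $s$, which is $O(1)$ per coordinate and $O(n)$ overall. Correctness is immediate from the displayed identity.

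The main obstacle is bookkeeping rather than analysis: to keep both the query at $O(n)$ and the final storage at $O(n)$ words, the $d$-dimensional bucket indices must be resolved to $O(1)$-word labels with $O(1)$ amortized lookup. This is exactly what the label-assignment hash table provides, and the point of discarding the raw indices after preprocessing is that the query phase then touches only the precomputed scalars $v_i$ and the integer labels $\ell_i$, so it is \emph{dimension-free}, i.e.\ genuinely $O(n)$ with no hidden factor of $d$. The only work that scales with $d$ (the hashing and the evaluation of $f^{\otimes d}$) is confined to the one-time preprocessing.
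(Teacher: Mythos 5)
Your proposal is correct and follows essentially the same route as the paper: hash all points into buckets in $O(dn)$ time, store the bucket membership in $O(n)$ words, compute the bucket loads $B_{\bf j}(\beta)$ in one linear pass, and recover $(\widetilde{K}\beta)_s$ by a per-coordinate scaling. Your one refinement---precomputing the scalars $v_i = f^{\otimes d}\bigl( h_{\w,\z}(\x^i) + \frac{\z - \x^i}{\w}\bigr)$ and $O(1)$-word bucket labels at preprocessing time---actually tightens the paper's bookkeeping, since the paper's presentation evaluates $f^{\otimes d}$ (an $O(d)$ operation per point) inside the query-time load computation, so its stated $O(n)$ query bound implicitly relies on exactly the dimension-free query phase you made explicit.
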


\begin{proofof}{Claim \ref{claim:spectral-nomr-estimator}} First note that for any $\beta \in \RR^{n}$, we can write the quadratic form as,
	$$\beta^\top \widetilde{ K} \beta =\sum_{{\bf j} \in \ZZ^d} \left(\sum_{i=1}^n \beta_i \cdot f^{\otimes d} \left({\bf j} + \frac{{\z} - \x^i}{\w} \right) \right)^2 \ge 0$$	
	Also by Cauchy-Schwarz inequality we have, 
	\begin{align*}
	\beta^\top \widetilde{ K} \beta &=\sum_{{\bf j} \in \ZZ^d} \left(\sum_{i=1}^n \beta_i \cdot f^{\otimes d} \left({\bf j} + \frac{{\z} - \x^i}{\w} \right) \right)^2\\
	&\le \|f^{\otimes d}\|_\infty^2 \sum_{{\bf j} \in \ZZ^d} \left(\sum_{i: h_{\w,{\z}}(x^i) = {\bf j}} |\beta_i| \right)^2\\
	&\le \|f^{\otimes d}\|_\infty^2 \|\beta\|_1^2 \\
	&\le n \|f^{\otimes d}\|_\infty^2 \|\beta\|_2^2.
	\end{align*}
\end{proofof}

\begin{lem}\label{lem-matrix-chenoff}
	(Matrix Chernoff, \cite{DBLP:journals/focm/Tropp12}) Let $A_i\in \RR^{n\times n}$ be independent random positive semi-definite matrices satisfying $\mathbf{E}[\sum_i A_i]=I$ and for all $i$, $\|A_i\|_{op}\le \alpha$ with probability 1. Then for any $0<\epsilon<1$,
	$\Pr\left[\left\|\sum_i A_i - I\right\|_{op} \le \epsilon\right] \ge 1-2n \cdot  \exp\left( -\frac{\epsilon^2}{3\alpha} \right)$.
\end{lem}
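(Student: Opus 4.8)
The plan is to derive this statement as a direct specialization of Tropp's two-sided matrix Chernoff inequality~\cite{DBLP:journals/focm/Tropp12}. First I would rewrite the operator-norm event as a pair of one-sided eigenvalue events: since $\sum_i A_i - I$ is symmetric, the bound $\left\|\sum_i A_i - I\right\|_{op} \le \epsilon$ holds if and only if $(1-\epsilon) I \preceq \sum_i A_i \preceq (1+\epsilon) I$, i.e., if and only if both $\lambda_{\min}\!\left(\sum_i A_i\right) \ge 1-\epsilon$ and $\lambda_{\max}\!\left(\sum_i A_i\right) \le 1+\epsilon$ hold. Hence it suffices to control the complementary lower-tail and upper-tail events separately and combine them by a union bound.

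Next I would invoke Tropp's matrix Chernoff bounds in the form applying to a sum of independent PSD matrices $A_i$ with $\lambda_{\max}(A_i) \le \alpha$ almost surely; note that for PSD matrices $\lambda_{\max}(A_i)$ coincides with $\|A_i\|_{op}$, so the hypothesis $\|A_i\|_{op}\le\alpha$ supplies exactly the required uniform bound with $R=\alpha$. Writing $\mu_{\min} = \lambda_{\min}(\EE[\sum_i A_i])$ and $\mu_{\max} = \lambda_{\max}(\EE[\sum_i A_i])$, the hypothesis $\EE[\sum_i A_i] = I$ gives $\mu_{\min} = \mu_{\max} = 1$. Tropp's bounds then yield, for the lower tail, $\Pr[\lambda_{\min}(\sum_i A_i) \le 1-\epsilon] \le n \cdot \left(e^{-\epsilon}/(1-\epsilon)^{1-\epsilon}\right)^{1/\alpha}$, and for the upper tail, $\Pr[\lambda_{\max}(\sum_i A_i) \ge 1+\epsilon] \le n \cdot \left(e^{\epsilon}/(1+\epsilon)^{1+\epsilon}\right)^{1/\alpha}$.

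Finally I would simplify both tail functions using the elementary estimates $e^{-\epsilon}/(1-\epsilon)^{1-\epsilon} \le e^{-\epsilon^2/2}$ and $e^{\epsilon}/(1+\epsilon)^{1+\epsilon} \le e^{-\epsilon^2/3}$, valid for $0 < \epsilon < 1$ and each obtained by expanding the logarithm of the left-hand side as a power series in $\epsilon$ and discarding the higher-order terms of the appropriate sign. Since $-\epsilon^2/3 > -\epsilon^2/2$, the weaker (larger) of the two bounds is $e^{-\epsilon^2/(3\alpha)}$, so after raising to the power $1/\alpha$ each tail event has probability at most $n \cdot \exp(-\epsilon^2/(3\alpha))$. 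A union bound over the two events then gives $\Pr[\|\sum_i A_i - I\|_{op} > \epsilon] \le 2n \exp(-\epsilon^2/(3\alpha))$, which is equivalent to the claimed lower bound on the complementary event.

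Since the content is a specialization of a known theorem, there is no genuine obstacle; the only points requiring care are matching the normalization conventions of Tropp's statement and verifying the two scalar inequalities that convert the tail functions into clean Gaussian-type bounds. Taking the larger upper-tail bound uniformly over both events is precisely what produces the factor $1/3$ in the exponent.
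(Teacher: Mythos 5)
Your proposal is correct: the paper gives no proof of this lemma at all, quoting it directly from Tropp (2012), and your derivation is exactly the intended specialization of Tropp's two-sided matrix Chernoff bounds with $\mu_{\min}=\mu_{\max}=1$, $R=\alpha$, followed by the standard scalar estimates $e^{-\epsilon}/(1-\epsilon)^{1-\epsilon}\le e^{-\epsilon^2/2}$ and $e^{\epsilon}/(1+\epsilon)^{1+\epsilon}\le e^{-\epsilon^2/3}$ on $(0,1)$ and a union bound. The only cosmetic caveat is that for the upper-tail inequality the log-series $\epsilon-(1+\epsilon)\ln(1+\epsilon)=-\epsilon^2/2+\epsilon^3/6-\cdots$ is alternating, so ``discarding higher-order terms'' should be justified via the alternating-series bound (sum $\le -\epsilon^2/2+\epsilon^3/6\le-\epsilon^2/3$ for $\epsilon\le 1$) rather than by sign alone, but the claimed estimate is true and the proof stands.
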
 

\begin{lem}[Slud's Inequality \cite{slud1977distribution}]\label{slud}
Let $X_1,X_2, \cdots X_n$ be iid Bernoulli random variables with $\Pr[X_i=1] = p$. If $p \le 1/4$ and $np \le r \le n$ or $np \le r \le (1-p)n$ then the following holds,
$$\Pr \left[ \sum_{i\in [n]} X_i \ge r\right] \ge \Pr\left[ Z \ge \frac{r - np}{\sqrt{np(1-p)}} \right],$$
where $Z$ is a normal random variable with zero mean and variance one.
\end{lem}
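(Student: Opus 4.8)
The plan is to prove the bound by comparing the binomial upper tail with the Gaussian upper tail directly, after standardizing, treating $r$ as an integer threshold (the classical formulation of Slud's inequality). Write $q = 1-p$, $\sigma = \sqrt{npq}$, and for an integer $k$ set $y_k = (k-np)/\sigma$; let $b(k) = \binom{n}{k} p^k q^{n-k}$ be the binomial pmf and let $\phi, \overline{\Phi}$ be the standard normal density and survival function. Since $X = \sum_i X_i \sim \mathrm{Binomial}(n,p)$, the target $\Pr[X \ge r] \ge \overline{\Phi}(y_r)$ becomes, after the change of variables $x = \sigma t + np$, the statement $\sum_{k=r}^{n} b(k) \ge \int_{r}^{\infty} \tfrac{1}{\sigma}\phi\!\big(\tfrac{x-np}{\sigma}\big)\,dx$, i.e.\ a lower bound on a finite sum of point masses by an integral of the normal density over $[r,\infty)$.

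The natural first attempt is a term-by-term, continuity-corrected comparison: pair $b(k)$ with the normal mass $\int_{k}^{k+1} \tfrac{1}{\sigma}\phi(\tfrac{x-np}{\sigma})\,dx$ and try to show $b(k) \ge \int_k^{k+1}$ for each $k \ge r$, which, summed over $k$, would give the claim up to the tiny mass the Gaussian places beyond the support endpoint $k=n$. This reduces to the pointwise comparison $b(k) \gtrsim \tfrac{1}{\sigma}\phi(y_k)$, which one analyzes via the local limit theorem with its first Edgeworth correction, $b(k) = \tfrac{1}{\sigma}\phi(y_k)\big(1 + \tfrac{\gamma}{6}(y_k^3 - 3y_k) + \cdots\big)$, where $\gamma = (q-p)/\sigma$ is the skewness, which is positive because $p < 1/2$. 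The main obstacle surfaces here: the correction factor is negative exactly for $y_k \in (0, \sqrt{3})$, so just above the mean the binomial pmf dips below the normal density, and the per-term inequality fails precisely in the regime $r \ge np$ that we care about. Consequently no purely local bound suffices, and the positive skewness must instead be harnessed globally.

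Therefore the heart of the proof is a global aggregation argument, and I would carry it out by partial (Abel) summation: write $\Pr[X \ge r] - \overline{\Phi}(y_r) = \sum_{k \ge r} g(k)$ with $g(k) = b(k) - \int_k^{k+1} \tfrac{1}{\sigma}\phi$, and use that $g$ has the sign structure forced by skewness---negative on a bounded band just above the mean and positive in the far right tail---so that the surplus of binomial mass in the tail compensates the near-mean deficit. Making this quantitative is where the work lies and where the hypotheses enter: the condition $p \le 1/4$ ensures the skewness is large enough (and the sign-change location of $g$ controlled) that the tail surplus dominates the deficit for every starting point $r \ge np$, while the alternative range $r \le (1-p)n$ keeps $r$ away from the extreme right edge of the support, where the binomial runs out of mass at $k=n$ but the Gaussian still carries a subtracted tail $\overline{\Phi}(y_{n+1})$ that would otherwise break the bound. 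Concretely, I expect the cleanest route is to follow Slud's original quantitative estimates: control the ratio $b(k)\big/\big(\tfrac{1}{\sigma}\phi(y_k)\big)$ through Stirling's formula with explicit error bounds, bound the normal tail increments by Mills-ratio inequalities, and then verify the aggregated inequality by a careful accounting of the deficit against the surplus across the whole window $[r,n]$. An alternative representation worth exploiting, should the direct accounting prove delicate, is $\Pr[X \ge r] = \Pr[\mathrm{Beta}(r, n-r+1) \le p]$, reducing the claim to a comparison of a Beta tail with the normal, for which the same skewness-driven analysis applies.
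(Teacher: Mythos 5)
First, a point of comparison: the paper does not prove this lemma at all --- Lemma~\ref{slud} is imported verbatim from \citet{slud1977distribution} and used as a black box in the proof of Theorem~\ref{thm:lowerbound}, so there is no in-paper argument to measure your attempt against; the relevant question is whether your proposal stands on its own as a proof. It does not. You correctly diagnose why the naive route fails: with skewness $\gamma=(q-p)/\sigma>0$, the local expansion $b(k)\approx\frac{1}{\sigma}\phi(y_k)\bigl(1+\frac{\gamma}{6}(y_k^3-3y_k)\bigr)$ has its correction term negative for $y_k\in(0,\sqrt{3})$ (note: the \emph{factor} is less than $1$, not negative), so the term-by-term comparison $b(k)\ge\int_k^{k+1}\frac{1}{\sigma}\phi$ breaks exactly in the regime $r\ge np$ that the lemma addresses. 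That is a genuine insight, but it is only the identification of the obstacle. The entire quantitative heart of the proof --- the Abel-summation accounting showing that the far-tail surplus of binomial mass dominates the near-mean deficit \emph{uniformly over all starting points} $r\ge np$, and the precise mechanism by which the hypotheses $p\le 1/4$ (resp.\ $r\le(1-p)n$) make this dominance hold --- is deferred with the phrase ``follow Slud's original quantitative estimates.'' For a blind proof this is circular: the estimates being invoked are exactly the content that needs to be established.

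There is also a structural reason the sketch cannot be completed as written. The Edgeworth/local-limit expansion you lean on carries error terms of order $O(1/\sigma^2)$ whose sign is uncontrolled, so it yields the claimed inequality at best asymptotically in $n$; the lemma, however, is a nonasymptotic statement that must hold for every finite $n$ (and is applied in Theorem~\ref{thm:lowerbound} in a regime where $np\approx 2\lambda/n\cdot n$ can be a small constant, i.e.\ far from the CLT regime). An actual proof must replace the expansion by exact nonasymptotic inequalities --- Slud's own argument works through the incomplete-beta representation $\Pr[X\ge r]=\Pr[\mathrm{Beta}(r,n-r+1)\le p]$ (which you mention only as a fallback) together with monotonicity and ratio comparisons, treating the two hypothesis regimes separately. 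A further small gap: your framework compares tails at integer thresholds, but since $\overline{\Phi}(y_r)$ is continuous in $r$ while $\Pr[X\ge r]$ is a step function, the inequality at integer $r$ does not automatically transfer to the continuous-$t$ corollary $\Pr[\sum_i X_i-\mu\ge t]\ge\Pr[Z\ge t/\sqrt{np(1-p)}]$ that the paper actually uses after the lemma; this monotonicity bookkeeping also needs to be done. In short: correct identification of the difficulty and of the right global strategy, but the proof itself --- the part where the hypotheses do work --- is missing.
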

Therefore it follows from the above that, if $p \le 1/4$ and $t \ge 0$ or $p\le 1/2$ and $0\le t \le n(1-2p)$ then,

$$\Pr \left[ \sum_{i\in [n]} X_i - \mu \ge t\right] \ge \Pr\left[ Z \ge \frac{t}{\sqrt{np(1-p)}} \right],$$
where $\mu = \EE[\sum_{i\in [n]} X_i]$. This probability can be further lower bounded as follows:
$$\Pr \left[ \sum_{i\in [n]} X_i - \mu \ge t\right] \ge \frac{1}{4} e^{-2t^2/\mu}.$$

\begin{proofof}{Theorem \ref{thm:lowerbound}} Let the points $\{x^i\}_{i=1}^n \subseteq \RR^d$ be positioned as $x^1 =\cdots = x^{n/2} = (-\lambda/n, 0,0, \cdots 0)^\top$ and $x^{n/2+1} =\cdots = x^{n} = (\lambda/n, 0, 0, \cdots 0)^\top$.
	Let the vector $\beta \in \CC^n$ be defined as,
	$\beta_1 =\beta_2=\cdots = \beta_{n/2} = -1$
	and 
	$\beta_{n/2+1} =\dots = \beta_{n} = 1$. 
	The proof proceeds by showing that in order to preserve the quadratic form corresponding to this $\beta$, one needs to set $m =  \Omega \left( \frac{1}{\epsilon^2} \cdot \frac{n}{\lambda} \cdot \log n \right)$ and hence the lower bound follows for achieving an OSE.
	Let us compute the expectation of the quadratic form, 	
	\begin{align*}
	\EE_{h_{\w,\z} \sim \mathcal{H}}[\beta^\top \widetilde{ K}^s \beta]&= \beta^\top K \beta \\
	&=n^2 \left( 1 - \exp(-2\lambda/n) \right)/2.
	\end{align*}
	Now we compute the second moment of the quadratic form as follows:
	\begin{equation*}
	\begin{split}
	&\EE_{h_{\w,\z} \sim \mathcal{H}}\left[ (\beta^\top \widetilde{ K}^s \beta)^2 \right]\\
	&= \EE_{\w,{\z}} \left[ \left( \sum_{{\bf j} \in \ZZ^d} \left(\sum_{i=1}^n \beta_i \, \rect^{\otimes d} \left( \frac{{\bf j} \w - x^i + {\z}}{\w} \right) \right)^2 \right)^2\right]\\
		&= \EE_{w,{z}} \left[ \frac{n^4}{4} \cdot \mathbf{1}_{\left\{ |{z}| > \frac{w}{2}-\frac{\lambda}{n} \text{ or } w \le \frac{2\lambda}{n} \right\} }\right]\\
	&=\EE_{w}\left[ \frac{n^4}{4} \cdot \min\left( 1,\frac{2\lambda}{nw} \right) \right]= \frac{n^4}{4} \cdot \left( 1 - e^{-2\lambda/n} \right).
	\end{split}
	\end{equation*}
	
	Hence, we have the following for the ratio of second moment to the square of the first moment,
	$$\frac{\EE\left[ (\beta^\top \widetilde{ K}^s \beta)^2 \right]}{\EE\left[ \beta^\top \widetilde{ K}^s \beta \right]^2} = \frac{1}{1 - e^{-2\lambda/n}} \ge \frac{n}{2\lambda}.$$
	
	Note that the LSH estimator $\beta^\top \widetilde{ K}^s \beta$ for this particular dataset $x_1, \cdots, x_n$ and vector $\beta$ take in two possible values, zero and $\frac{n^2}{2}$. Therefore,
	$$\beta^\top \widetilde{ K}^s \beta = \begin{cases}
	\frac{n^2}{2} & \text{with probability } p \le  \frac{2 \lambda}{n}\\
	0 & \text{with probability } 1-p
	\end{cases}.$$
	Note that $(1+\epsilon)\beta^\top K \beta + \epsilon\lambda \|\beta\|_2^2 \le (1+3\epsilon)\beta^\top K \beta $. Since $p \le 1/4$, by using Slud's inequality, Lemma \ref{slud}, the probability of guaranteeing that $\frac{1}{m} \sum_{s=1}^m  \beta^\top \widetilde{ K}^s \beta \le (1+\epsilon) \beta^\top K \beta + \epsilon\lambda \|\beta\|_2^2 $ is bounded as follows:
	\begin{align*}
	&\Pr\left[ \frac{1}{m} \sum_{s=1}^m  \beta^\top \widetilde{ K}^s \beta > (1+\epsilon) \beta^\top K \beta + \epsilon\lambda \|\beta\|_2^2 \right]\\ 
	&\ge \Pr\left[ \frac{1}{m} \sum_{s=1}^m  \beta^\top \widetilde{ K}^s \beta > (1+3\epsilon) \beta^\top K \beta \right]\\
	&= \Pr\left[ \frac{1}{m} \sum_{s=1}^m \beta^\top \widetilde{ K}^s \beta > (1+3\epsilon) \cdot \EE\left[ \beta^\top \widetilde{ K} \beta \right] \right]\\
	&\ge \frac{1}{4} e^{-\frac{2 (3\epsilon p m )^2}{p m}}= \frac{1}{4} e^{-18\epsilon^2 p m} \ge \frac{1}{4} e^{-36\epsilon^2 \frac{\lambda}{n} m}.
	\end{align*}
	Therefore, in order to have $\Pr[ \frac{1}{m} \sum_{s=1}^m  \beta^\top \widetilde{ K}^s \beta > (1+\epsilon) \beta^\top K \beta + \epsilon\lambda \|\beta\|_2^2 ] < \frac{1}{n}$, we need to average at least $m = \Omega(\frac{n}{\lambda} \cdot \log n / \epsilon^2)$ independent instances of WLSH estimator.
\end{proofof}

\section{Risk bound of approximate KRR via LSH-Estimator} \label{sec:appF}
We use risk bounds to analyze the quality our approximate KRR estimator. It is common to bound the expected in-sample predication error of the KRR estimator as an empirical estimate of the statistical risk \cite{avron2017random, bach2013sharp, alaoui2015fast, musco2017recursive}. Formally, the \emph{empirical risk} of an estimator ${ \eta}$ is defined as,
$$\R({ \eta}) = \EE_{\{\epsilon_i\}} \left[ \frac{1}{n} \sum_{i=1}^n \left( {\eta}(\x^i) - \eta^*(\x^i) \right)^2 \right].$$
Suppose $\eta(\cdot)$ is the exact KRR estimator using kernel function $k(\cdot)$. Also suppose that $\widetilde{\eta}(\cdot)$ is the regressor obtained by solving the approximate KRR problem using the approximate kernel function $\widetilde{ k}(\cdot)$.
The following Lemma bounds the excess risk of approximate KRR estimator $\widetilde{\eta}(\cdot)$.
\begin{lem}[Approximate KRR Empirical Risk Bound]\label{lem:risk-bound}
	Let $\eta(\cdot)$ be the exact KRR estimator using the WLSH kernel function $k(\cdot)$ (Definition \ref{def:gen-kernel}). Suppose $\widetilde{ k}^s(\cdot)$ are independent instances of WLSH estimator for all $s \in [m]$. Let $\widetilde{\eta}$ be the approximate KRR estimator obtained by using the approximate kernel function $\widetilde{ k}(\cdot) := \frac{1}{m} \sum_{s=1}^m \widetilde{ k}^s(\cdot)$ and let $\widetilde{ K}$ be the corresponding kernel matrix to $\widetilde{ k}(\cdot)$. If $m = \Omega\left( \frac{\|f^{\otimes d}\|_\infty^2}{\epsilon^2} \cdot ({n}/{\lambda})\cdot \log n\right)$ then the following holds\footnote{When we hash $n$ points using LSH, we expect the number of non-empty buckets to grow at a lower rate than $n$. Therefore, we expect to have $\frac{\mathrm{rank}(\widetilde{ K})}{n} \rightarrow 0$ as $n$ grows.},
	$$\Pr\left[\R(\widetilde{ \eta}) \le \frac{\R({ \eta})}{1-\epsilon} + \frac{\epsilon \sigma_\epsilon^2 \cdot \mathrm{rank}(\widetilde{K})}{(1+\epsilon)n} \right] \ge 1 - \frac{1}{\mathrm{poly}(n)}.$$
\end{lem}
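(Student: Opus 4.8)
The plan is to treat Theorem~\ref{lem:OSE} as a black box supplying the spectral guarantee, and then run a bias--variance analysis of the in-sample prediction error, transferring every spectral quantity from $\wt K$ to $K$ through the Loewner sandwich. Write $A=K+\lambda I$ and $\wt A=\wt K+\lambda I$, let $\bs{\mu}=(\eta^*(\x^1),\dots,\eta^*(\x^n))^\top$ be the clean signal and $\bs{\epsilon}=(\epsilon_1,\dots,\epsilon_n)^\top$ the noise, so that $\vgamma=\bs{\mu}+\bs{\epsilon}$. Since $m=\Omega(\|f^{\otimes d}\|_\infty^2\,\epsilon^{-2}(n/\lambda)\log n)$, Theorem~\ref{lem:OSE} guarantees that with probability $1-1/\poly{n}$ the OSE event $(1-\epsilon)A\preceq\wt A\preceq(1+\epsilon)A$ holds, and I condition on this event throughout. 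Because matrix inversion reverses the Loewner order, this is equivalent to $\tfrac{1}{1+\epsilon}A^{-1}\preceq\wt A^{-1}\preceq\tfrac{1}{1-\epsilon}A^{-1}$; moreover Claim~\ref{claim:spectral-nomr-estimator} gives $\wt K\succeq0$, hence $0\preceq\lambda\wt A^{-1}\preceq I$.

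Next I would record the closed forms of the fitted values at the training points, $H\vgamma$ for the exact estimator and $\wt H\vgamma$ for the approximate one, where $H=KA^{-1}$ and $\wt H=\wt K\wt A^{-1}$ are the symmetric PSD smoothing matrices. Subtracting $\bs{\mu}$ and using $H-I=-\lambda A^{-1}$, the approximate error vector is $-\lambda\wt A^{-1}\bs{\mu}+\wt H\bs{\epsilon}$; taking $\EE_{\bs{\epsilon}}$ (mean zero, covariance $\sigma_\epsilon^2 I$) gives $n\,\R(\wt\eta)=\lambda^2\,\bs{\mu}^\top\wt A^{-2}\bs{\mu}+\sigma_\epsilon^2\,\Trace{\wt H^2}$, and likewise $n\,\R(\eta)=\lambda^2\,\bs{\mu}^\top A^{-2}\bs{\mu}+\sigma_\epsilon^2\,\Trace{H^2}$ for the exact risk. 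It then remains to bound the approximate bias and variance by their exact counterparts.

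For the variance $\sigma_\epsilon^2\Trace{\wt H^2}$ I would use that it is a \emph{trace}, where the Loewner order does survive squaring: $0\preceq X\preceq Y$ implies $\Trace{X^2}\le\Trace{Y^2}$. From $\lambda\wt A^{-1}\succeq\tfrac{1}{1+\epsilon}\lambda A^{-1}$ and $\wt H=I-\lambda\wt A^{-1}$ I get $\wt H\preceq\tfrac{1}{1+\epsilon}(H+\epsilon I)$. The key refinement is to compose this with the orthogonal projection $\wt\Pi$ onto $\range{\wt K}$: since $\wt H$ is supported on $\range{\wt K}$, $\wt H=\wt\Pi\wt H\wt\Pi\preceq\tfrac{1}{1+\epsilon}(\wt\Pi H\wt\Pi+\epsilon\wt\Pi)$, which replaces the full-rank shift $\epsilon I$ by the rank-$\rank{\wt K}$ operator $\epsilon\wt\Pi$. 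Taking traces and using $\Trace{(\wt\Pi H\wt\Pi)^2}\le\Trace{H^2}$ then yields $\Trace{\wt H^2}\le\tfrac{1}{1-\epsilon}\Trace{H^2}+O\!\big(\tfrac{\epsilon}{1+\epsilon}\rank{\wt K}\big)$, which is exactly the source of the additive term $\tfrac{\epsilon\sigma_\epsilon^2\rank{\wt K}}{(1+\epsilon)n}$; the footnote's remark that $\rank{\wt K}/n\to0$ is what makes this correction vanish asymptotically.

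The hard part is the bias $\lambda^2\,\bs{\mu}^\top\wt A^{-2}\bs{\mu}$. Unlike the variance it is a quadratic form at the \emph{fixed} vector $\bs{\mu}$, so the trace trick does not apply, and the tempting step $\wt A^{-2}\preceq\tfrac{1}{(1-\epsilon)^2}A^{-2}$ is simply \emph{false}: squaring is not operator-monotone, so a Loewner bound on $\wt A^{-1}$ does not pass to $\wt A^{-2}$. I expect this to be the main obstacle. My plan is to write the bias as $\|\lambda\wt A^{-1}\bs{\mu}\|_2^2$ and compare the symmetric contraction $\lambda\wt A^{-1}$ with $\lambda A^{-1}=I-H$, using the \emph{two-sided} OSE together with the constraint $0\preceq\lambda\wt A^{-1}\preceq I$ (PSD-ness of $\wt K$) to keep the relevant spectra inside $[0,1]$; verifying that this transfer produces precisely the factor $\tfrac{1}{1-\epsilon}$, rather than a larger constant, is the delicate and error-prone point. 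I anticipate that a fully rigorous bound here needs more than the bare OSE inequality---at minimum $\wt K\succeq0$, and possibly the unbiasedness $\EE[\wt K]=K$ of the WLSH estimator. Finally I would add the bias and variance estimates, divide by $n$, and conclude that the displayed inequality holds on the OSE event, i.e.\ with probability $1-1/\poly{n}$.
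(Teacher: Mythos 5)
Your opening move---condition on the OSE event supplied by Theorem~\ref{lem:OSE} with the stated $m$---is in fact the paper's \emph{entire} proof: once the sandwich $(1-\epsilon)(K+\lambda I)\preceq \widetilde{K}+\lambda I\preceq(1+\epsilon)(K+\lambda I)$ holds, the paper simply invokes Lemma 2 of \citet{avron2017random} as a black box, which is precisely the deterministic ``spectral approximation $\Rightarrow$ risk transfer'' statement, with the $\frac{1}{1-\epsilon}$ factor and the $\frac{\epsilon}{1+\epsilon}\cdot\frac{\sigma_\epsilon^2\,\mathrm{rank}(\widetilde{K})}{n}$ additive term, that you set out to re-derive. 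So everything after your first paragraph is an attempt to reprove the imported lemma. Your variance half is essentially the right (and standard) route: $\widetilde{H}\preceq\frac{1}{1+\epsilon}(H+\epsilon I)$, compression by the projection $\widetilde{\Pi}$ onto $\range{\widetilde{K}}$, and trace monotonicity do produce an additive term proportional to $\epsilon\cdot\mathrm{rank}(\widetilde{K})$, correctly identifying where the rank enters. (Your bookkeeping, expanding $\Trace{(\widetilde{\Pi}H\widetilde{\Pi}+\epsilon\widetilde{\Pi})^2}$, gives roughly $\frac{2\epsilon+\epsilon^2}{(1+\epsilon)^2}\,\mathrm{rank}(\widetilde{K})$ rather than the stated $\frac{\epsilon}{1+\epsilon}\,\mathrm{rank}(\widetilde{K})$; recovering the exact constant needs a sharper step, e.g.\ $\Trace{X^2}\le\Trace{XY}$ for $0\preceq X\preceq Y$, but this is a constant-factor matter.)

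The genuine gap is the bias term, and you have named it yourself. You never establish $\lambda^2\mu^\top(\widetilde{K}+\lambda I)^{-2}\mu\le\frac{1}{1-\epsilon}\,\lambda^2\mu^\top(K+\lambda I)^{-2}\mu$, nor any substitute strong enough to yield the displayed inequality. Your diagnosis is correct: squaring is not operator monotone, so the Loewner bound on $\widetilde{A}^{-1}$ does not pass to $\widetilde{A}^{-2}$ (indeed $\|\widetilde{A}^{-1}A\|_{op}$ can exceed $\frac{1}{1-\epsilon}$ when $K$ and $\widetilde{K}$ do not commute), and the quadratic form sits at a fixed vector $\mu$, so the trace trick is unavailable. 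But the proposal then offers only a plan---``verifying that this transfer produces precisely the factor $\frac{1}{1-\epsilon}$ \ldots is the delicate and error-prone point''---without executing it, and it is not obvious that the ingredients you list ($0\preceq\lambda\widetilde{A}^{-1}\preceq I$, unbiasedness) suffice in the form you propose. As written, the argument is therefore incomplete at exactly the point where the paper leans on its citation. To close it you must either carry out the bias comparison in full (one standard device is $(\lambda\widetilde{A}^{-1})^2\preceq\lambda\widetilde{A}^{-1}$, valid since powers of a single symmetric matrix commute and $0\preceq\lambda\widetilde{A}^{-1}\preceq I$, which trades the bias for the quantity $\lambda\mu^\top A^{-1}\mu$---but that bounds the risk by a surrogate larger than $\R(\eta)$ and must then be reconciled with the statement) or do what the paper does and cite Lemma 2 of \citet{avron2017random}, in which case your first paragraph already is the paper's proof.
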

\begin{proof}
	First note that Theorem \ref{lem:OSE} implied that with probability $1- \frac{1}{\mathrm{poly}(n)}$ the approximate kernel matrix satisfies following spectral guarantee,
	$$(1-\epsilon)(K + \lambda I_n) \preceq \widetilde{ K} + \lambda I_n \preceq (1+\epsilon) (K + \lambda I_n).$$
	Therefore the lemma follows directly from invoking Lemma 2 of \cite{avron2017random}.
\end{proof}

\end{document}